\documentclass[twoside,11pt]{article}

\usepackage{blindtext}

\usepackage{amsmath,amsfonts,bm}

\def\eqref#1{equation~\ref{#1}}

\def\1{\bm{1}}

\def\rvt{{\mathbf{t}}}

\def\rvx{{\mathbf{x}}}
\def\rvy{{\mathbf{y}}}

\def\vzero{{\bm{0}}}
\def\vone{{\bm{1}}}
\def\vmu{{\bm{\mu}}}
\def\vtheta{{\bm{\theta}}}

\def\vg{{\bm{g}}}
\def\vh{{\bm{h}}}

\def\vs{{\bm{s}}}

\def\vv{{\bm{v}}}

\def\vx{{\bm{x}}}
\def\vy{{\bm{y}}}

\def\mI{{\bm{I}}}

\def\mSigma{{\bm{\Sigma}}}

\DeclareMathAlphabet{\mathsfit}{\encodingdefault}{\sfdefault}{m}{sl}
\SetMathAlphabet{\mathsfit}{bold}{\encodingdefault}{\sfdefault}{bx}{n}

\newcommand{\KL}{D_{\mathrm{KL}}}

\usepackage[abbrvbib, preprint]{jmlr2e}

\usepackage{lastpage}
\jmlrheading{0}{0000}{1-\pageref{LastPage}}{00/00; Revised 00/00}{00/00}{00}{Wei Chen, Qibin Zhao, John Paisley, Junmei Yang and Delu Zeng}

\ShortHeadings{One-Step Score-Base Density Ratio Estimation}{Chen, Zhao, Paisley, Yang, and Zeng}
\firstpageno{1}

\usepackage{thm-restate}
\usepackage{hyperref}

\usepackage{algorithm}
\usepackage{algorithmic}
\usepackage{booktabs}
\usepackage{multirow} 
\usepackage{makecell}
\usepackage{listings}
\usepackage{alltt} 
\usepackage[strings]{underscore} 
\usepackage[table]{xcolor}

\usepackage{subcaption}

\usepackage[capitalize]{cleveref} 
\crefname{lemma}{Lemma}{Lemmas}
\Crefname{lemma}{Lemma}{Lemmas}
\crefname{proposition}{Proposition}{Propositions}
\Crefname{proposition}{Proposition}{Propositions}
\crefname{corollary}{Corollary}{Corollaries}
\Crefname{corollary}{Corollary}{Corollaries}
\newtheorem{assumption}{Assumption}
\crefname{assumption}{Assumption}{Assumptions}
\Crefname{assumption}{Assumption}{Assumptions}

\crefname{section}{Sec.}{Secs.}
\Crefname{section}{Section}{Sections}
\crefname{table}{Tab.}{Tabs.}
\Crefname{table}{Table}{Tables}
\crefname{figure}{Fig.}{Figs.}
\Crefname{figure}{Figure}{Figures}
\crefname{equation}{Eq.}{Eqs.}
\Crefname{equation}{Equation}{Equations}

\begin{document}

\title{One-Step Score-Based Density Ratio Estimation}

\author{\name Wei Chen \email weichen001.work@foxmail.com \\
       \addr School of Mathematics, South China University of Technology, Guangzhou 510006, China 
       \AND
       \name Qibin Zhao \email qibin.zhao@riken.jp \\
       \addr Tensor Learning Team, RIKEN Center for Advanced Intelligence Project, Tokyo 103-0027, Japan 
       \AND
       \name John Paisley \email jpaisley@columbia.edu \\
       \addr Department of Electrical Engineering, Columbia University, New York, NY 10027, USA 
       \AND
       \name Junmei Yang \email yjunmei@scut.edu.cn \\
       \addr School of Electronic and Information Engineering, South China University of Technology, Guangzhou 510006, China 
       \AND
       \name Delu Zeng\thanks{Correspondence to Delu Zeng (dlzeng@scut.edu.cn)}  \email dlzeng@scut.edu.cn \\
       \addr School of Electronic and Information Engineering, South China University of Technology, Guangzhou 510006, China 
       }

\editor{My editor}

\maketitle

\begin{abstract}
Density ratio estimation (DRE) is a useful tool for quantifying discrepancies between probability distributions, but existing approaches often involve a trade-off between estimation quality and computational efficiency. 
Classical direct DRE methods are usually efficient at inference time, yet their performance can seriously deteriorate when the discrepancy between distributions is large.
In contrast, score-based DRE methods often yield more accurate estimates in such settings, but they typically require considerable repeated function evaluations and numerical integration.
We propose One-step Score-based Density Ratio Estimation (OS-DRE), a partly analytic and \emph{solver-free} framework designed to combine these complementary advantages. 
OS-DRE decomposes the time score into spatial and temporal components, representing the latter with an analytic radial basis function (RBF) frame.
This formulation converts the otherwise intractable temporal integral into a closed-form weighted sum, thereby removing the need for numerical solvers and enabling DRE with \emph{only one} function evaluation.
We further analyze approximation conditions for the analytic frame, and establish approximation error bounds for both finitely and infinitely smooth temporal kernels, grounding the framework in existing approximation theory.
Experiments across density estimation, continual Kullback-Leibler and mutual information estimation, and near out-of-distribution detection demonstrate that OS-DRE offers a favorable balance between estimation quality and inference efficiency.
\end{abstract}

\begin{keywords}
  density ratio estimation, score-based methods, probabilistic inference,  frame basis approximation, radial basis function
\end{keywords}

\clearpage

\newlength{\originalparskip} 
\setlength{\originalparskip}{\parskip} 

\renewcommand{\contentsname}{Contents}
\setcounter{tocdepth}{3} 

\begingroup
\hypersetup{linkcolor=black} 
\setlength{\parskip}{0.14em} 
\tableofcontents
\endgroup

\setlength{\parskip}{\originalparskip} 

\section{Introduction}

Estimating discrepancies between probability distributions is a recurrent problem in machine learning and statistics. A particularly useful quantity is the \emph{density ratio} $r(\vx)=p_1(\vx)/p_0(\vx)$ \citep{sugiyama2012density}, which quantifies how much more likely an event $\vx$ is under one distribution than under another. This quantity supports a broad range of applications, including likelihood-free Bayesian inference \citep{thomas2022likelihood}, 
continuous covariate-shift adaptation~\citep{zhang2023adapting}, 
mutual information (MI) estimation~\citep{letizia2024mutual,chen2025dequantified}, out-of-distribution (OOD) detection \citep{ren2019likelihood,zhang2022ood}, 
contrastive representation learning~\citep{sasaki2022representation}, 
large language model alignment~\citep{higuchi2025direct}, 
and causal inference~\citep{wang2025projection}.
Despite this broad utility, classical density ratio estimation (DRE) can be brittle in modern settings. 
When $p_0$ and $p_1$ overlap only weakly~\citep{srivastava2023estimating,chen2025dequantified} or differ sharply in high-discrepancy settings~\citep{rhodes2020telescoping,wang2025projection}, direct ratio estimators can encounter the density-chasm problem, in which insufficient shared support makes comparison unstable and inaccurate.  

A notable recent development is the emergence of continuous, score-based DRE methods \citep{choi2022density, chen2026dont}, which express the log-density ratio as a temporal integral along an interpolating family of densities $\{p_t\}_{t\in[0,1]}$ connecting $p_0$ and $p_1$:
\begin{equation} \label{eq:path-integral-representation}
\log r(\vx)=\log \frac{p_1(\vx)}{p_0(\vx)}=\log p_1(\vx) - \log p_0(\vx) =\int_0^1 \partial_t \log p_t(\vx)\mathrm{d}t,
\end{equation}
where $\partial_t\log p_t(\vx)$ denotes the time score, $\vx$ is the data variable, and $t$ denotes the temporal point along the integration path.
This formulation can alleviate the density-chasm problem by avoiding direct comparison between poorly overlapping densities and instead traversing a path of intermediate distributions that are better behaved. 
For any sufficiently regular path with endpoints $p_0$ and $p_1$, the log-density ratio admits the path-integral representation in \cref{eq:path-integral-representation}. 
In practice, explicit evaluation of $p_t$ is unnecessary; it is sufficient to specify an interpolation scheme that allows sampling during training. 
For example, the variance-preserving (VP) path \citep{song2020score} induces well-behaved intermediate distributions and thereby supports the integral representation in \cref{fig:illustration-of-interpolated-path}.
\begin{figure}[ht]
    \centering
    \includegraphics[width=0.98\linewidth]{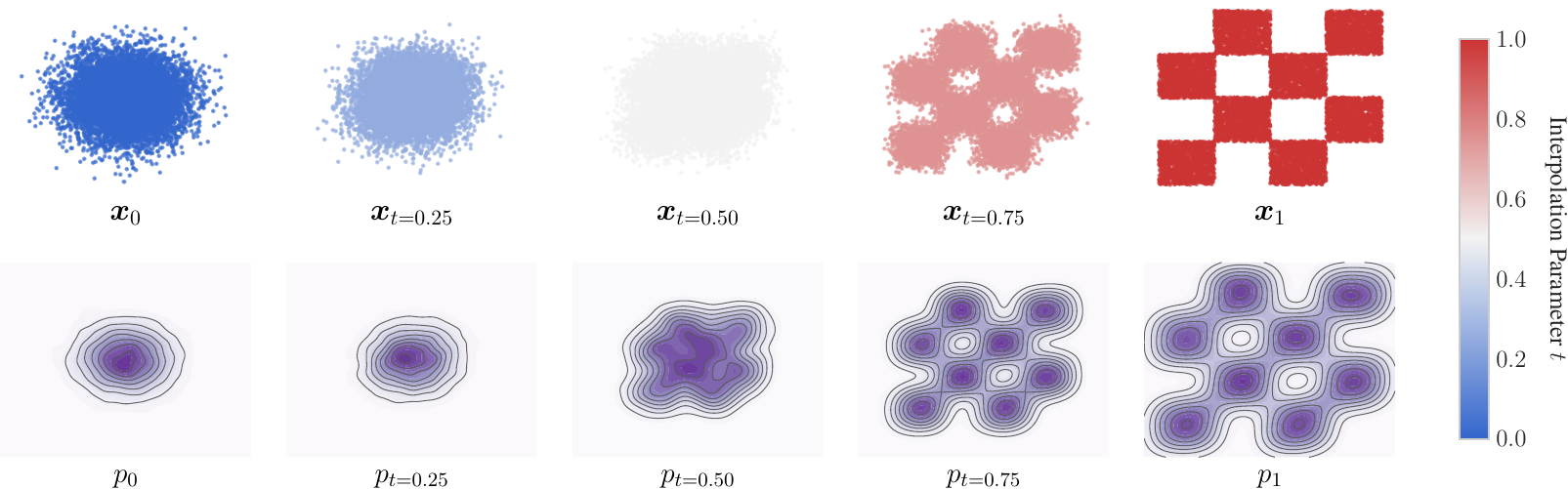}
    \vspace{-2mm}
    \caption{Visualization of the variance-preserving interpolating path $\{p_t\}_{t\in[0,1]}$ connecting a standard Gaussian ($p_0$) to a checkerboard distribution ($p_1$). Top: sample evolution over time. Bottom: corresponding density contours. }
    \label{fig:illustration-of-interpolated-path}
\end{figure}

Although this continuous formulation alleviates density-chasm issues, existing score-based methods still rely on costly numerical integration, such as ordinary differential equation (ODE) solvers \citep{choi2022density} or fine-grained quadrature schemes \citep{norcliffe2023faster}. Obtaining reliable estimates usually requires repeated evaluations of the learned time-score model across many time steps, leading to a large number of function evaluations (NFE) and substantial computational overhead \citep{chen2025dequantified}. In practice, this creates a persistent tension between estimation quality and inference efficiency. 

In this work, we propose One-step Score-based Density Ratio Estimation (OS-DRE), a partly analytic solver-free framework that approximates the path integral in a single step. The central idea is to replace numerical integration of the time score with an analytic temporal representation, where we succeed in separating the time score into spatial coefficients and temporal basis functions, that is:
\begin{equation}
	\log r(\vx)=\int_{0}^{1}\partial_t\log p_{t}(\vx) \mathrm{d}t \approx \sum_{k=1}^{K} h_k(\vx) \bar g_k = \left \langle \vh_{1:K}(\vx),\bar{\vg}_{1:K} \right \rangle,  
\end{equation}
where $\bar{\vg}_{1:K}=\left[\bar{g}_1,\bar{g}_2,\ldots,\bar{g}_K\right]$ with $\bar g_k=\int_0^1 g_k(t) \mathrm{d}t$ are the integrals of predefined temporal basis $\{g_k\}_{k=1}^K$, available in closed form and precomputed.
Then during inference procedure, a single forward pass of a neural network (blue squares in \cref{fig:baseline-illustration}) produces  $\vh_{1:K}(\vx)=[h_1(\vx),\ldots,h_K(\vx)]$,
and the integral would reduce to one inner product, yielding $\text{NFE}=1$ with the proposed OS-DRE in (ii) of \cref{fig:baseline-illustration}.
In addition, analytic temporal bases admit closed-form derivatives, which as well allow efficient computation of the $\partial_t^2 \log p_t(\vx)$  in the training objective (\cref{eq:joint-score-matching-loss}) without higher-order automatic differentiation.  

\begin{figure}[ht]
    \centering
    \includegraphics[width=0.98\linewidth]{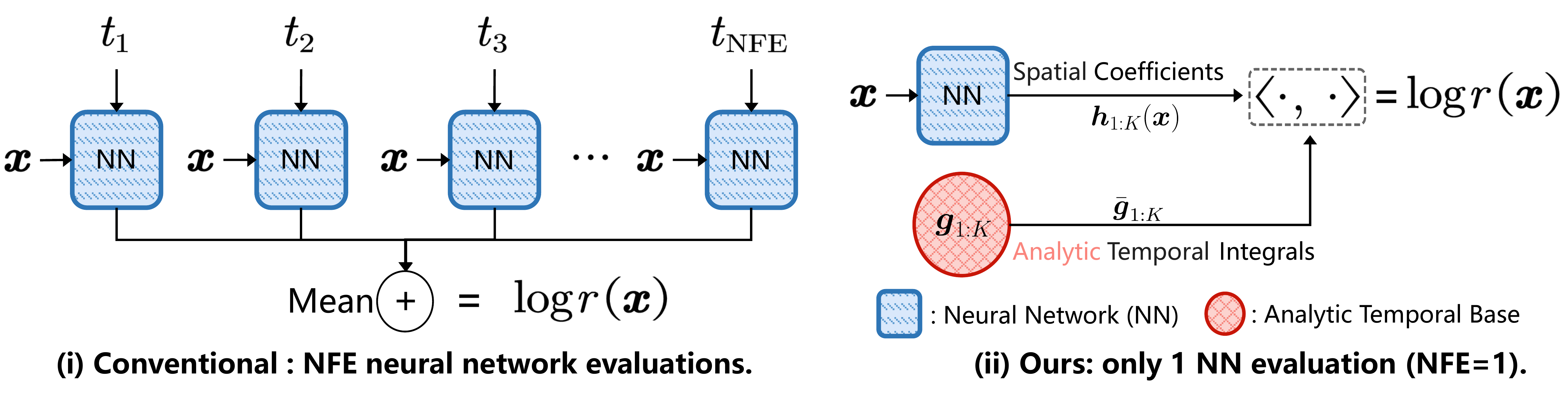}
    \caption{Illustrative comparison of conventional (i) to calculate $\int_{0}^{1}\partial_t\log p_{t}(\vx) \mathrm{d}t$ directly by some traditional numerical solvers and proposed (ii) score-based DRE method (OS-DRE). The NFE of conventional methods depends on the chosen numerical solver, whereas OS-DRE requires only one single neural network evaluation.
    }
    \label{fig:baseline-illustration}
\end{figure}

However, a naive implementation of $\{g_k\}_{k=1}^{+\infty}$ with complete orthonormal temporal bases leads to an inherent degeneracy.
In many commonly used trigonometric and polynomial systems, all non-constant basis functions have zero mean, so the path integral collapses to a single term and discards high-frequency temporal information. To address this limitation, we introduce the notion of an \emph{analytic frame} \citep{mallat1999wavelet}, a complete but generally non-orthogonal system constructed so that each element has a non-zero closed-form integral. We instantiate this framework with radial basis functions (RBFs) and discuss its approximation-theoretic properties in \cref{sec:rbf-framework-and-analysis}. 
In particular, we study completeness in the infinite limit, analyze stability under finite approximation, and relate the construction to existing approximation results to obtain approximation error bounds that characterize the trade-off between estimation quality and computational efficiency.

Empirically, OS-DRE provides a favorable balance between estimation quality and computational efficiency. Across structured and multimodal density-estimation benchmarks, it produces sharp and faithful densities with $\text{NFE}=1$, whereas solver-based baselines, such as DRE-$\infty$ \citep{choi2022density} and D$^3$RE \citep{chen2025dequantified}, can still be unsatisfactory even at $\text{NFE}=10$ (\cref{fig:likelihood-estimation-2d-synthesic-method}). 
On real-world tabular datasets, OS-DRE achieves competitive negative log-likelihood while reducing inference time by up to $68\times$ relative to methods that require $\text{NFE}=75$ (\cref{tab:density-estimation-tabular}).  
In $f$-divergence estimation tasks, including continual Kullback--Leibler (KL) divergence estimation and mutual information estimation, the proposed method supports stable real-time tracking and remains reliable in high-discrepancy settings where baselines can exhibit lag or instability (\cref{fig:distributon-drift-ds,fig:mi_mse_pathological,tab:MI-high-discrepancy}). 
OS-DRE also scales to challenging near OOD detection tasks in high-dimensional image spaces, where it produces clear separation between in-distribution (ID) and OOD samples with competitive AUROC (\cref{tab:ood,fig:density-of-ood-scores-cifar100}). Taken together, these results suggest that OS-DRE can retain much of the estimation quality of score-based models while approaching the inference efficiency of classical neural DRE methods under strict evaluation budgets (\cref{fig:accuracy-vs-efficiency}).

The main contributions of this work are:
\begin{itemize}
    \item We propose OS-DRE, a solver-free score-based DRE framework that approximates the path integral in one step and aims to combine the estimation quality of score-based methods with the efficiency of classical one-pass estimation.
	
    \item We utilize the theory of frame basis approximation to achieve a spatiotemporal representation of time score, and convert the log-density ratio path integral into a single inner product owing to isomomorphism characteristic of Hilbert space, thereby enabling solver-free inference and also supporting analytic temporal derivatives for training.

    \item We obtain analytic frames with an RBF-based construction and identify conditions for a computable and well-posed approximation, by justifying their completeness, finite-term stability, and analytic tractability. By relating this construction to existing approximation results, we also obtain approximation error bounds for the finitely and infinitely smooth regimes considered in OS-DRE.

    \item Extensive experiments show that OS-DRE achieves competitive performance with reduced computational cost across density estimation, continual KL divergence estimation, OOD detection, and MI estimation benchmarks.
\end{itemize}

\paragraph{Theory Roadmap.}
The theoretical development is organized around three questions: 
(i) how to justify a stable spatiotemporal decomposition of the time score, 
(ii) how to ensure that this decomposition supports analytic temporal integration and differentiation, 
and (iii) how to quantify approximation error under finite-term approximation.
Concretely, Lemma~\ref{lemma:space-S-subspace-L2} places the time-score class in a Hilbert space and thereby supports spatiotemporal approximation analysis. 
Lemma~\ref{lemma:log-density-ratio-decomposition} establishes the integral form under temporal expansions and serves as the bridge from time-score modeling to solver-free integration. 
Theorem~\ref{theorem:frame-based-representation} introduces a frame-based representation that remains stable and yields a computable log-density ratio. 
Corollary~\ref{corollary:weak-derivative-of-time-score} gives the partly analytic form of $\partial_t^2 \log p_t(\vx)$ used in training. 
Proposition~\ref{proposition:rbf-approximation-scheme} states constructive conditions for temporal RBF frames that ensure numerical robustness, and Remark~\ref{remark:basic-requirements} summarizes the basic approximation requirements for such frames. 
Finally, Proposition~\ref{proposition:error-bound} and Proposition~\ref{proposition:error-bound-infinitely-smooth} provide approximation error bounds for the finitely and infinitely smooth regimes, respectively.

\vspace{-2mm}
\section{Related Works}
\label{sec:related-works}

\vspace{-2mm}
Existing methods for DRE can be broadly categorized by how directly they compare the endpoint densities. Early direct approaches either estimate the two densities separately, e.g.,, by kernel density estimation \citep{huang2006correcting}, and then form their ratio, or estimate the ratio itself without modeling each density individually \citep{kanamori2010theoretical}. Representative direct estimators include KLIEP \citep{sugiyama2008direct} and uLSIF \citep{kanamori2009least}, together with broader formulations based on Bregman divergence and $f$-divergence \citep{sugiyama2012density}. 
A major conceptual advance was the connection between DRE and class-probability estimation, which showed that minimizing suitable proper composite losses in binary classification yields consistent ratio estimators \citep{menon2016linking}. 
This perspective led to discriminative and neural DRE methods such as NCE \citep{gutmann2012noise} and InfoNCE \citep{oord2018representation}, and also motivated subsequent analyses of truncation \citep[TR-DRE]{liu2017trimmed}, bias in classification-based ratio learning \citep{song2019understanding}, and DRE with missing data \citep{givens2023density}. More recently, theoretical work has begun to clarify the statistical behavior of deep direct estimators. 
For example, \citet{zheng2026error} establish non-asymptotic error bounds for deep ReLU \citep{agarap2018deep} DRE under Bregman divergence \citep{banerjee2005clustering} and extend the analysis to unbounded settings, with connections to KL estimation and telescoping estimators. 
Nevertheless, these methods still compare the two endpoint densities directly, which have been discovered to probably lead to numerical instability and sensitivity to regularization. Particularly, some literature discuss therapies when the distributions have weak overlap \citep{rhodes2020telescoping,chen2025dequantified}, large discrepancy \citep{srivastava2023estimating}, or complex high-dimensional geometry \citep{wang2025projection}.

A substantial body of work therefore seeks to improve the estimation quality of direct DRE while retaining its basic formulation. A central challenge is the density-chasm problem, in which the supports of $p_0$ and $p_1$ overlap poorly, making direct ratio learning unstable and biased \citep{liu2017trimmed,rhodes2020telescoping}. Existing remedies include trimmed DRE \citep{liu2017trimmed}, adaptive and regularized Bregman objectives \citep{zellinger2023adaptive, nguyen2024regularized}, iterated regularization to improve convergence behavior in kernel settings \citep{gruber2024overcoming}, robust objectives such as $\gamma$-DRE \citep{nagumo2024density}, binary-loss refinements that reduce bias toward small density ratios \citep{zellinger2025binary}, and projection-based methods that transform the data into more favorable representation spaces for ratio estimation, either by identifying low-dimensional subspaces \citep{wang2025projection} or by mapping samples into a learned feature space before estimation, as in FDRE \citep{choi2021featurized}. Although these methods substantially strengthen the classical pipeline, they remain variants of direct endpoint-ratio estimation and therefore do not substantially  mitigate the fundamental difficulty of comparing $p_0$ and $p_1$ head-on.

A different line of work addresses this issue by introducing intermediate distributions, thereby replacing a difficult global ratio problem with a sequence or continuum of easier local problems. In the discrete setting, TRE \citep{rhodes2020telescoping} decomposes the target ratio into a telescoping product of intermediate ratios, and this idea has subsequently been extended through adaptive decomposition in AM-DRE \citep{xiao2022adaptive}, multinomial formulations in MDRE \citep{srivastava2023estimating}, and generalized-geodesic interpolation in IMDRE \citep{kimura2025density}. The continuous counterpart is score-based DRE, in which the log-density ratio is expressed as a path integral of a time-dependent score field along an interpolation between $p_0$ and $p_1$ \citep{choi2022density}. Because this formulation avoids direct comparison of the endpoint distributions and instead estimates smoother local changes along the path, it often yields improved estimation quality under high discrepancy \citep{choi2022density, chen2025dequantified}. Within this framework, the choice of interpolation path and training objective can affect both estimation quality and optimization stability. Several studies therefore examine how to improve these aspects, for example by designing lower-variance training objectives \citep{chen2025diffusion}, modifying objective formulations to accelerate training \citep{yu2025density}, or proposing principled criteria for selecting more suitable probability paths \citep{chen2025dequantified,kimura2024density}. More recently, \citet{chen2026dont} propose heuristic path-adaptation strategies motivated by minimum-variance principles. Despite these advances, inference in score-based DRE still typically requires numerical quadrature \citep{chen2026dont} or ODE solvers \citep{choi2022density,yu2025density}, which leads to substantial computational cost \citep{li2024neural,li2025evodiff}.

In summary, direct estimators are efficient but degrade under large discrepancy, while score-based methods are more accurate but computationally expensive. OS-DRE bridges this gap by retaining the estimation quality of score-based DRE while removing numerical integration via partly analytic temporal integration. Closely related, \citet{guth2025learning} study time-varying energy modeling via dual score matching, although their setting differs from general DRE between non-Gaussian distributions. Overall, OS-DRE combines the estimation quality of score-based methods with the efficiency of direct DRE methods.

\section{One-Step Score-Based Density Ratio Estimation}
\label{sec:os-dre}

\subsection{Problem Statement}
Score-based DRE rewrites the density ratio $r(\vx)=p_1(\vx)/p_0(\vx)$ as a temporal integral.
Given a continuously interpolating path of densities $\{p_t(\vx)\}_{t\in[0,1]}$ from $p_0$ to $p_1$, we have
\begin{equation} \label{eq:continuous-log-density-ratio}
	\log r(\vx) = \log \frac{p_1(\vx)}{p_0(\vx)}=\int_0^1 \partial_t \log p_t(\vx)\,\mathrm{d}t.
\end{equation}

The integrand $\partial_t \log p_t(\vx)$, called the \emph{time score}, is unknown in practice and is estimated by a neural network trained via time score matching \citep{choi2022density}.
Evaluating $\log r(\vx)$ then requires numerically integrating the trained network, which is the main computational bottleneck as it demands many forward passes (e.g., via ODE solvers \citep{choi2022density} or quadrature methods \citep{chen2026dont,norcliffe2023faster}).
Our goal is to eliminate this bottleneck by reducing the integral to a finite weighted sum via a temporal frame,
so that the weights can be precomputed once and reused across all $\vx$.



\subsection{One-Step Estimation via Temporal Frames}
\label{subse:one-step-estimation}

\paragraph{Assumptions and Setup.}
Our analysis assumes mild regularities on the density $p_t$. Particularly, given a measurable bounded data feature space $\mathcal{X}$, we make

\begin{assumption}
    \label{assumption:logp-positive}
    There exists $ +\infty>C > 0 $ such that $ p_t(\vx) \geq C $ for all $ \vx \in \mathcal{X} $ and $ t \in [0,1] $.
\end{assumption}

\begin{assumption}
\label{assumption:logp-partial-derivative-bounded}
There exists $+\infty>D>0$ such that $\left|\partial_t p_t(\vx)\right|\le D$ for all $\vx\in\mathcal{X}$ and $t\in[0,1]$.
\end{assumption}

Assumption~\ref{assumption:logp-positive} ensures $\log p_t(\vx)$ is well-defined, and
Assumption~\ref{assumption:logp-positive}-\ref{assumption:logp-partial-derivative-bounded} ensure the time score is well-behaved, uniformly continuous w.r.t. $t$, specifically that it is an element of the Hilbert space $\mathcal{H}_{\vx,t}\triangleq L^2(\mathcal{X}\times[0,1])$, as stated in Lemma~\ref{lemma:space-S-subspace-L2}.
Let $\mathcal{S}\triangleq\{s^{(\rvt)}: s^{(\rvt)}(\vx,t)=\partial_t\log p_t(\vx)\}$ be the set of time score functions.
Throughout this paper, \emph{superscripts $(\rvt)$ and $(\rvx)$ indicate time and data}, respectively.
Under the stated assumptions, the space $\mathcal{S}$ is a subset of   $\mathcal{H}_{\vx,t}$:

\begin{restatable}{lemma}{SpaceSisSubspaceL}
\label{lemma:space-S-subspace-L2}
Under \cref{assumption:logp-positive,assumption:logp-partial-derivative-bounded}, the space $\mathcal{S}$ is a subset of $L^2(\mathcal{X} \times [0,1])$.
\end{restatable}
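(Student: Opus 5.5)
The plan is to bound the time score pointwise by a constant and then integrate that constant over a set of finite measure. For any $s^{(\rvt)}\in\mathcal{S}$, write
\begin{equation*}
s^{(\rvt)}(\vx,t)=\partial_t\log p_t(\vx)=\frac{\partial_t p_t(\vx)}{p_t(\vx)},
\end{equation*}
which is legitimate because $p_t(\vx)\ge C>0$ by \cref{assumption:logp-positive}. Then \cref{assumption:logp-positive,assumption:logp-partial-derivative-bounded} together give
\begin{equation*}
\bigl|s^{(\rvt)}(\vx,t)\bigr|\le \frac{D}{C}\quad\text{for all }(\vx,t)\in\mathcal{X}\times[0,1].
\end{equation*}
This bound is uniform in both $\vx$ and $t$.

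Next I will handle measurability, since membership in $L^2$ requires $s^{(\rvt)}$ to be a measurable function on the product space. I will take $(\vx,t)\mapsto p_t(\vx)$ to be jointly measurable and differentiable in $t$; this is implicit in the assumptions. Under that reading, $\partial_t p_t(\vx)$ is the pointwise limit of the difference quotients $n\bigl(p_{t+1/n}(\vx)-p_t(\vx)\bigr)$. Near $t=1$ I will use one-sided quotients instead. A pointwise limit of measurable functions is measurable, so $\partial_t p_t$ is measurable. The quotient $\partial_t p_t/p_t$ is then measurable because its denominator is bounded away from zero. I expect this to be the only delicate point, and I will state the joint-measurability convention explicitly rather than prove it.

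Finally, I will use the fact that $\mathcal{X}$ is bounded and measurable, so it has finite Lebesgue measure $|\mathcal{X}|$. Applying Tonelli's theorem on the product,
\begin{equation*}
\int_0^1\!\!\int_{\mathcal{X}}\bigl|s^{(\rvt)}(\vx,t)\bigr|^2\,\mathrm{d}\vx\,\mathrm{d}t\le \frac{D^2}{C^2}\,|\mathcal{X}|<\infty.
\end{equation*}
Hence $s^{(\rvt)}\in L^2(\mathcal{X}\times[0,1])$. Since $s^{(\rvt)}$ was an arbitrary element of $\mathcal{S}$, this gives $\mathcal{S}\subset\mathcal{H}_{\vx,t}$.

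As a side remark, the same bound $D/C$ yields a uniform $L^\infty$ estimate. This will be convenient later, because $\mathcal{S}$ is then a bounded subset of $\mathcal{H}_{\vx,t}$, which is useful for the frame arguments.
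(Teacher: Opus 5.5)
Your proposal is correct and follows essentially the same route as the paper: you bound the time score pointwise via $|\partial_t p_t/p_t|\le D/C$ from the two assumptions, then integrate over $\mathcal{X}\times[0,1]$, which has finite measure because $\mathcal{X}$ is bounded. Your joint-measurability argument via difference quotients is a rigor detail the paper leaves implicit; it does not change the argument.
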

See proof in \cref{proof:space-S-subspace-L2}. This embedding allows us to leverage the tools of Hilbert space theory to analyze and approximate the time score function $s^{(\rvt)}$. 

We will use the standard identification the Hilbert space $\mathcal{H}_{\vx, t}$ is isometrically isomorphic to the Hilbert tensor product $\mathcal{H}_{\vx} \hat{\otimes} \mathcal{H}_t$ \citep{kadison1986fundamentals}, where $\mathcal{H}_{\vx} \triangleq L^2(\mathcal{X})$ and $\mathcal{H}_t \triangleq L^2([0,1])$. This equivalence guarantees that any time score $s^{(\rvt)} \in \mathcal{S}$ can be represented by separating its spatial and temporal components (see Lemma~\ref{lemma:tensor-product-of-frames}).

\paragraph{A First Attempt: Orthonormal Temporal Bases and Its Degeneracy.}
We first propose to represent the temporal component of the time score using a complete orthonormal basis $\{g_k\}_{k=1}^{+\infty}$ for $\mathcal{H}_t$. This decomposition, analogous to the Karhunen-Loève expansion \citep{karhunen1947under,loeve1977elementary}, allows us to express the time score $s^{(\rvt)}(\vx, t)$ for each fixed $\vx$ as a weighted sum of its spatial and temporal components:
\begin{equation}\label{eq:orthonormal-bases-approximationforscore}
	s^{(\rvt)}(\vx, t) = \sum_{k=1}^{\infty} h_k(\vx) g_k(t), 
\end{equation}
where $h_k(\vx) = \langle s^{(\rvt)}(\vx, \cdot), g_k \rangle_{\mathcal{H}_t}\triangleq\int_0^1 s^{(\rvt)}(\vx, t) g_k(t) \mathrm{d}t$.
By integrating this series in \cref{eq:orthonormal-bases-approximationforscore} with respect to $t$, we derive our initial formulation for the $\log r(\vx)$.
\begin{restatable}{lemma}{LogDensityRatioDecomposition}
\label{lemma:log-density-ratio-decomposition}
Let $\{g_k\}_{k=1}^{+\infty}$ be a complete orthonormal basis for $\mathcal{H}_t$. Then
\begin{equation}\label{eq:orthonormal-bases-approximation}
\log r(\vx)=\sum_{k=1}^{\infty} h_k(\vx)\int_0^1 g_k(t)\,\mathrm{d}t.
\end{equation}
\end{restatable}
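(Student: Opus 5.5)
The plan is to read the right-hand side of \cref{eq:orthonormal-bases-approximation} as Parseval's identity in $\mathcal{H}_t$. The integral $\int_0^1 g_k(t)\,\mathrm{d}t$ is the inner product $\langle g_k, \1\rangle_{\mathcal{H}_t}$, where $\1$ is the constant function on $[0,1]$. By \cref{eq:continuous-log-density-ratio}, the left-hand side is $\log r(\vx)=\langle s^{(\rvt)}(\vx,\cdot),\1\rangle_{\mathcal{H}_t}$. It therefore suffices to show that, for each fixed $\vx$, the slice $s^{(\rvt)}(\vx,\cdot)$ lies in $\mathcal{H}_t$. The generalized Parseval identity $\langle f,\1\rangle=\sum_k\langle f,g_k\rangle\langle g_k,\1\rangle$ for a complete orthonormal basis then gives the claim, since $h_k(\vx)=\langle s^{(\rvt)}(\vx,\cdot),g_k\rangle$.

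\textbf{Step 1 (membership of each slice).} Lemma~\ref{lemma:space-S-subspace-L2} only gives joint square-integrability, so I would instead argue pointwise in $\vx$ from the assumptions. For fixed $\vx\in\mathcal{X}$ and all $t$, \cref{assumption:logp-positive,assumption:logp-partial-derivative-bounded} give
\[
|s^{(\rvt)}(\vx,t)|=\frac{|\partial_t p_t(\vx)|}{p_t(\vx)}\le \frac{D}{C}.
\]
Hence $s^{(\rvt)}(\vx,\cdot)\in L^\infty([0,1])\subset L^2([0,1])=\mathcal{H}_t$, and also $\1\in\mathcal{H}_t$. This bound also justifies \cref{eq:continuous-log-density-ratio} itself. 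For that we need $t\mapsto \log p_t(\vx)$ to be absolutely continuous, which I take as part of the regularity implicit in the setup, since $\partial_t p_t(\vx)$ exists and is bounded. The fundamental theorem of calculus then applies.

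\textbf{Step 2 (expansion and interchange).} The expansion \cref{eq:orthonormal-bases-approximationforscore} converges in $\mathcal{H}_t$: the partial sums $S_N=\sum_{k\le N}h_k(\vx)g_k$ converge to $s^{(\rvt)}(\vx,\cdot)$ in $L^2([0,1])$ by completeness. The map $f\mapsto\int_0^1 f=\langle f,\1\rangle$ is a bounded linear functional on $\mathcal{H}_t$, with norm $1$ by Cauchy--Schwarz on $[0,1]$. Therefore
\[
\Bigl|\int_0^1 s^{(\rvt)}(\vx,t)\,\mathrm{d}t-\int_0^1 S_N(t)\,\mathrm{d}t\Bigr|\le \|s^{(\rvt)}(\vx,\cdot)-S_N\|_{\mathcal{H}_t}\to 0,
\]
and by linearity $\int_0^1 S_N=\sum_{k\le N}h_k(\vx)\int_0^1 g_k$. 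Taking $N\to\infty$ yields the series, and the series converges, indeed absolutely, because $\sum_k|h_k||\langle g_k,\1\rangle|\le\|s^{(\rvt)}(\vx,\cdot)\|\,\|\1\|$ by Cauchy--Schwarz and Bessel.

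The main obstacle is justifying the interchange of the infinite sum and the integral. This is exactly where pointwise convergence of the series would be insufficient, and I resolve it through $L^2$ continuity of the integration functional rather than dominated convergence. A secondary subtlety is that the identity is pointwise in $\vx$. This requires each slice to be in $L^2$, not merely almost every slice as Fubini applied to Lemma~\ref{lemma:space-S-subspace-L2} would give, and the uniform bound $D/C$ in Step 1 supplies that for every $\vx$.
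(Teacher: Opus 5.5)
Your proposal is correct and follows essentially the same route as the paper: write $\log r(\vx)=\langle s^{(\rvt)}(\vx,\cdot),\1\rangle_{\mathcal{H}_t}$ and use continuity of this functional to pass it term by term through the $L^2$-convergent orthonormal expansion. Your extra steps make explicit what the paper's proof leaves implicit: the pointwise bound $|s^{(\rvt)}(\vx,t)|\le D/C$ puts every slice (not just almost every one) in $\mathcal{H}_t$, and Cauchy--Schwarz with Bessel gives absolute convergence. Also, the absolute continuity you take as given in fact follows from the bounded derivative by the mean value theorem.
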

Proof in  \cref{proof:log-density-ratio-decomposition}.
However, for any orthonormal basis defined on the finite interval of $[0,1]$ (e.g., polynomial-type bases or others), the non-zero constant function should be contained in $\mathcal{H}_t$, thus the integrals of all non-constant basis functions are zero  (see the subspace decomposition in \citep{ramsay1991some}), e.g., $\int_0^1 g_k(t)\mathrm{d}t=0$ for $k>1$, then \cref{eq:orthonormal-bases-approximation} collapses to a single term. This degeneracy discards high-frequency temporal information and defeats one-step evaluation.

\paragraph{Non-Orthogonal Frame-Based One-Step Representation.}
To avoid the zero-integral collapse, we relax orthogonality and use a \emph{frame} in $\mathcal{H}_t$.

\begin{definition}[Frame, \citet{mallat1999wavelet}]
	Let $\mathcal{H}$ be a Hilbert space with  inner product $\langle \cdot, \cdot \rangle_{\mathcal{H}}$. 
	A sequence $\{g_k\}_{k=1}^{+\infty}$ in a Hilbert space $\mathcal{H}$ is a frame if there exist constants $0<A\le B<\infty$, called the \textit{frame bounds}, such that for any $g \in \mathcal{H}$:
	\begin{equation}
		A \|g\|_{\mathcal{H}}^2 \leq \sum_{k=1}^{+\infty} |\langle g, g_k \rangle_{\mathcal{H}}|^2 \leq B \|g\|_{\mathcal{H}}^2.
	\end{equation}
\end{definition}

We will use frames in both $\mathcal{H}_{\vx}$ and $\mathcal{H}_t$. Their tensor products form a frame for $\mathcal{H}_{\vx,t}$:
\begin{lemma}
\label{lemma:tensor-product-of-frames}
Let $\{f_l\}_{l=1}^{+\infty}$ be a frame for $\mathcal{H}_{\vx}$ with bounds $A_f,B_f$, and $\{g_k\}_{k=1}^{+\infty}$ be a frame for $\mathcal{H}_t$ with bounds $A_g,B_g$.
Then, the set of elementary tensors $\{f_l\otimes g_k\}_{l,k=1}^{+\infty}$ is a frame for $\mathcal{H}_{\vx,t}$ with bounds $A_fA_g$ and $B_fB_g$.
\end{lemma}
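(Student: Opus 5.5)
The plan is to verify the two frame inequalities for $\{f_l\otimes g_k\}$ by applying the frame property of each factor in turn, first on the dense subspace of finite sums of elementary tensors and then on all of $\mathcal{H}_{\vx,t}$.

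We work with the identification $\mathcal{H}_{\vx,t}\cong\mathcal{H}_{\vx}\hat\otimes\mathcal{H}_t$. Under it, $F\in L^2(\mathcal{X}\times[0,1])$ corresponds to $\vx\mapsto F(\vx,\cdot)\in\mathcal{H}_t$, which is an element of $L^2(\mathcal{X};\mathcal{H}_t)$. By Fubini,
\begin{equation*}
\langle F, f_l\otimes g_k\rangle=\int_{\mathcal{X}}\Big(\int_0^1 F(\vx,t)\overline{g_k(t)}\,\mathrm{d}t\Big)\overline{f_l(\vx)}\,\mathrm{d}\vx=\langle G_k,f_l\rangle_{\mathcal{H}_{\vx}},
\end{equation*}
where $G_k(\vx)\triangleq\langle F(\vx,\cdot),g_k\rangle_{\mathcal{H}_t}$.

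The first step is to show $G_k\in\mathcal{H}_{\vx}$. For a.e.\ $\vx$ we have $|G_k(\vx)|^2\le\sum_j|G_j(\vx)|^2\le B_g\|F(\vx,\cdot)\|^2_{\mathcal{H}_t}$, and the right-hand side is integrable in $\vx$.

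The second step is the double sum itself. All terms are nonnegative, so Tonelli lets us rearrange:
\begin{equation*}
\sum_{l,k}|\langle F,f_l\otimes g_k\rangle|^2=\sum_k\sum_l|\langle G_k,f_l\rangle|^2 .
\end{equation*}
The inner sum is bounded between $A_f\|G_k\|^2$ and $B_f\|G_k\|^2$ by the frame property of $\{f_l\}$. Next we swap the $k$-sum with the $\vx$-integral, again by Tonelli:
\begin{equation*}
\sum_k\|G_k\|^2_{\mathcal{H}_{\vx}}=\int_{\mathcal{X}}\sum_k|\langle F(\vx,\cdot),g_k\rangle|^2\,\mathrm{d}\vx .
\end{equation*}
Applying the frame inequality of $\{g_k\}$ pointwise in $\vx$ bounds the integrand between $A_g\|F(\vx,\cdot)\|^2$ and $B_g\|F(\vx,\cdot)\|^2$. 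Integrating over $\mathcal{X}$ gives $A_g\|F\|^2\le\sum_k\|G_k\|^2\le B_g\|F\|^2$. Combining with the bounds from the inner sum yields the frame bounds $A_fA_g$ and $B_fB_g$.

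The main obstacle is the measure-theoretic bookkeeping. We must check that $G_k$ is measurable, which follows from Fubini because $F(\vx,\cdot)\overline{g_k}$ is integrable for a.e.\ $\vx$. We also need the pointwise frame inequality on a full-measure set, and that holds since $F(\vx,\cdot)\in\mathcal{H}_t$ for a.e.\ $\vx$. Both interchanges of summation and integration are justified by nonnegativity via Tonelli, so no convergence issues arise.
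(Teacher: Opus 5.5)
Your proof is correct and follows essentially the same route as the paper's. You set $G_k(\vx)=\langle F(\vx,\cdot),g_k\rangle_{\mathcal{H}_t}$ (the paper's $h_k$), apply the frame bounds of $\{f_l\}$ for each fixed $k$, swap the $k$-sum with the $\vx$-integral, and apply the frame bounds of $\{g_k\}$ pointwise in $\vx$; your Tonelli and measurability remarks only make explicit what the paper attributes to Fubini. The density step announced in your opening sentence is never used and is not needed, since the argument already works directly for every $F\in\mathcal{H}_{\vx,t}$.
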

Proof in \cref{proof:tensor-product-of-frames}. This lemma provides the direct theoretical justification for the spatiotemporal expansion used in \cref{theorem:frame-based-representation}, allowing us to represent any time score function $s^{(\rvt)} \in \mathcal{S} \subseteq \mathcal{H}_{\vx,t}$ as a double summation over the elementary tensors $f_l\otimes g_k$.

With frames, we obtain a stable spatiotemporal expansion whose temporal integrals are generally non-zero.
\begin{restatable}{theorem}{FrameBasedRepresentation}
\label{theorem:frame-based-representation}
Let $\{f_l\}_{l=1}^{+\infty}$ and $\{g_k\}_{k=1}^{+\infty}$ be frames for $\mathcal{H}_{\vx}$ and $\mathcal{H}_t$, respectively.
Then any time score function $s^{(\rvt)}\in\mathcal{S}\subseteq \mathcal{H}_{\vx,t}$ admits an expansion
\begin{equation}
    s^{(\rvt)}(\vx, t) = \sum_{k=1}^{+\infty} \sum_{l=1}^{+\infty} c_{l, k} f_l(\vx) g_k(t),
\end{equation}
for some coefficients $\{c_{l,k}\}$ (e.g., induced by a choice of dual frames for $\mathcal{H}_{\vx}$ and $\mathcal{H}_t$).
Define $h_k(\vx)\triangleq\sum_{l=1}^{+\infty} c_{l,k} f_l(\vx)$ and $\bar g_k\triangleq\int_0^1 g_k(t)\mathrm{d}t$ for simplicity.
Then
\begin{align}
s^{(\rvt)}(\vx,t)&=\sum_{k=1}^{+\infty} h_k(\vx)g_k(t), \\
\log r(\vx)&=\int_0^1 s^{(\rvt)}(\vx,t) \mathrm{d} t=\sum_{k=1}^{+\infty} h_k(\vx)\bar g_k. \label{eq:log-ratio-frame-decomposition-single-sum}
\end{align}
\end{restatable}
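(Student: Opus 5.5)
We combine Lemma~\ref{lemma:space-S-subspace-L2}, Lemma~\ref{lemma:tensor-product-of-frames}, and the standard frame reconstruction formula, and then pass the temporal integral through the series using its $L^2$ convergence.

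First, by Lemma~\ref{lemma:space-S-subspace-L2}, $s^{(\rvt)}\in\mathcal{H}_{\vx,t}$. By Lemma~\ref{lemma:tensor-product-of-frames}, $\{f_l\otimes g_k\}$ is a frame for $\mathcal{H}_{\vx,t}$ with bounds $A_fA_g$ and $B_fB_g$. Let $S$ denote its frame operator, which is bounded, self-adjoint and invertible with $A_fA_g\le S\le B_fB_g$. The canonical dual frame is then $\{S^{-1}(f_l\otimes g_k)\}$.

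Taking $c_{l,k}=\langle s^{(\rvt)},S^{-1}(f_l\otimes g_k)\rangle$ gives the reconstruction
\[
s^{(\rvt)}=\sum_{l,k}c_{l,k}\,f_l\otimes g_k,
\]
with unconditional convergence in $\mathcal{H}_{\vx,t}$ and $\sum|c_{l,k}|^2\le (A_fA_g)^{-1}\|s^{(\rvt)}\|^2$. Moreover, $S=S_f\otimes S_g$, so $S^{-1}=S_f^{-1}\otimes S_g^{-1}$, and these coefficients are exactly the ones induced by the tensor product of the canonical duals in each factor. This justifies the phrase ``induced by a choice of dual frames''.

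Because the convergence is unconditional, we may regroup the double series. We sum over $l$ first: for fixed $k$, let $h_k=\sum_l c_{l,k}f_l$. This series converges in $\mathcal{H}_{\vx}$ because $(c_{l,k})_l\in\ell^2$ and the synthesis operator of a frame is bounded by $\sqrt{B_f}$. We then obtain $s^{(\rvt)}=\sum_k h_k\otimes g_k$ in $L^2(\mathcal{X}\times[0,1])$.

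Passing to pointwise-in-$\vx$ equality is the delicate step, since an $L^2$ limit only gives equality almost everywhere. We use Fubini: convergence in $L^2(\mathcal{X}\times[0,1])$ implies that, along a subsequence of partial sums and for a.e.\ $\vx$, the partial sums converge in $L^2([0,1])$ to $s^{(\rvt)}(\vx,\cdot)$. Accordingly, all identities in $\vx$ are interpreted almost everywhere, as is natural in $\mathcal{H}_{\vx}$.

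For the integral identity, define the functional $\Phi(u)(\vx)=\int_0^1 u(\vx,t)\,\mathrm{d}t$. By Cauchy--Schwarz, $\|\Phi u\|_{L^2(\mathcal{X})}\le\|u\|_{\mathcal{H}_{\vx,t}}$, so $\Phi$ is a bounded linear map $\mathcal{H}_{\vx,t}\to\mathcal{H}_{\vx}$. Applying $\Phi$ termwise to the convergent series yields
\[
\Phi s^{(\rvt)}=\sum_k h_k\,\bar g_k \quad\text{in } \mathcal{H}_{\vx}.
\]
Finally, Assumption~\ref{assumption:logp-positive}--\ref{assumption:logp-partial-derivative-bounded} make $t\mapsto\log p_t(\vx)$ Lipschitz (its derivative is bounded by $D/C$). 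The fundamental theorem of calculus then gives $\Phi s^{(\rvt)}=\log r$, exactly as in \cref{eq:continuous-log-density-ratio}.

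The main obstacle is the justification of the regrouping and of the interchange of the integral with the infinite sum. Both are handled by working entirely in the Hilbert-space topology: unconditional convergence for the regrouping, and boundedness of $\Phi$ for the interchange. Consequently the series \cref{eq:log-ratio-frame-decomposition-single-sum} converges in $L^2(\mathcal{X})$, and hence a.e.\ along a subsequence. It need not converge pointwise for every $\vx$, so that claim should be read in this sense.
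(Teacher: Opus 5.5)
Your proposal is correct and follows the same route as the paper: tensor-product frame via Lemma~\ref{lemma:tensor-product-of-frames}, dual-frame reconstruction, then passing $\int_0^1\cdot\,\mathrm{d}t$ through the series by continuity. It also properly justifies two steps the paper glosses over: the regrouping of the double sum, which you get from unconditional convergence, and the interchange at fixed $\vx$, which the paper performs with a functional on $\mathcal{H}_t$ even though the series is only known to converge in $\mathcal{H}_{\vx,t}$, and which your bounded map $\Phi:\mathcal{H}_{\vx,t}\to\mathcal{H}_{\vx}$ handles correctly.

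Your closing caveat is more pessimistic than necessary. Since $\sum_k\|h_k\|_{\mathcal{H}_{\vx}}^2\le B_f\sum_{l,k}|c_{l,k}|^2<\infty$ and $\{g_k\}$ is Bessel, the full sequence of partial sums of $\sum_k h_k(\vx)g_k$ converges in $\mathcal{H}_t$ to $s^{(\rvt)}(\vx,\cdot)$ for a.e.\ $\vx$, so no subsequence is needed. Moreover, for your canonical coefficients, $h_k$ agrees a.e.\ with $\vx\mapsto\langle s^{(\rvt)}(\vx,\cdot),S_g^{-1}g_k\rangle_{\mathcal{H}_t}$, and choosing this representative makes $\log r(\vx)=\sum_k h_k(\vx)\bar g_k$ hold for every $\vx$.
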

Proof in \cref{proof:log-density-ratio-fram-based}.
The representation in \cref{theorem:frame-based-representation} resolves the degeneracy issue, as the integrals $\bar{g}_k$ are generally non-zero for all $k$ if $\{g_k\}_{k=1}^{\infty}$ is a frame. 
Crucially, the weights $\bar g_k$ depend only on the chosen temporal frame and can be precomputed once.
After learning (or approximating) the spatial coefficients $\{h_k(\vx)\}_{k=1}^{\infty}$, evaluating $\log r(\vx)$ reduces to a one-step weighted sum in \cref{eq:log-ratio-frame-decomposition-single-sum}, without any numerical quadrature.

Furthermore, \cref{theorem:frame-based-representation} allows for the computation of derivatives as well (see proof in \cref{proof:weak-derivative-of-time-score}).

\begin{restatable}{corollary}{WeakDerivative}
\label{corollary:weak-derivative-of-time-score}
If each $g_k\in \mathcal{W}^{1,2}([0,1])$ (Sobolev Space) and the series 
$\sum_{k=1}^{+\infty} h_k(\vx)g_k^{\prime}(t)$ converges in $\mathcal{H}_{\vx,t}$,
then the weak derivative exists and satisfies
\begin{equation}
    \partial_t s^{(\rvt)}(\vx, t) = \sum_{k=1}^{+\infty} h_k(\vx) g_k^{\prime}(t).
\end{equation}
\end{restatable}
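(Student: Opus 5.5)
The plan is to verify the defining identity of the weak $t$-derivative directly. Fix a test function and integrate by parts term by term, using the convergence of both series in $\mathcal{H}_{\vx,t}$ to exchange the sums with the integrals.

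\textbf{Step 1: test functions.} Write $s^{(\rvt)}=\sum_k h_k\otimes g_k$, which converges in $\mathcal{H}_{\vx,t}$ by \cref{theorem:frame-based-representation}. Set $v\triangleq\sum_k h_k\otimes g_k'$, which lies in $\mathcal{H}_{\vx,t}$ by hypothesis. To show that $v$ is the weak partial derivative $\partial_t s^{(\rvt)}$, it suffices to prove
\begin{equation*}
\int_{\mathcal{X}}\int_0^1 s^{(\rvt)}(\vx,t)\,\partial_t\varphi(\vx,t)\,\mathrm{d}t\,\mathrm{d}\vx=-\int_{\mathcal{X}}\int_0^1 v(\vx,t)\,\varphi(\vx,t)\,\mathrm{d}t\,\mathrm{d}\vx
\end{equation*}
for every $\varphi$ that is smooth in $t$ and compactly supported in $t\in(0,1)$, with $\varphi,\partial_t\varphi\in L^2(\mathcal{X}\times[0,1])$. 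Products $\psi(\vx)\eta(t)$ with $\eta\in C_c^\infty(0,1)$ and $\psi\in L^2(\mathcal{X})$ span a dense enough class to suffice.

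\textbf{Step 2: exchange of sum and integral.} The maps $u\mapsto\langle u,\partial_t\varphi\rangle_{\mathcal{H}_{\vx,t}}$ and $u\mapsto\langle u,\varphi\rangle_{\mathcal{H}_{\vx,t}}$ are continuous linear functionals by Cauchy–Schwarz. Convergence of the partial sums $S_N=\sum_{k\le N}h_k\otimes g_k$ to $s^{(\rvt)}$, and of $V_N=\sum_{k\le N}h_k\otimes g_k'$ to $v$, in $\mathcal{H}_{\vx,t}$ therefore gives
\begin{equation*}
\langle s^{(\rvt)},\partial_t\varphi\rangle=\lim_{N\to\infty}\sum_{k\le N}\langle h_k\otimes g_k,\partial_t\varphi\rangle,\qquad \langle v,\varphi\rangle=\lim_{N\to\infty}\sum_{k\le N}\langle h_k\otimes g_k',\varphi\rangle.
\end{equation*}
It is important that both limits are taken along the same ordering of the index $k$ as in the theorem. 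Frame expansions need not converge unconditionally in every ordering, so the ordering must be fixed once and used throughout.

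\textbf{Step 3: termwise integration by parts.} For each fixed $k$, Fubini's theorem applies because $h_k\in L^2(\mathcal{X})$ and $g_k,g_k'\in L^2([0,1])$. For almost every $\vx$, the map $t\mapsto\varphi(\vx,t)$ is a test function on $(0,1)$. Since $g_k\in\mathcal{W}^{1,2}([0,1])$, the one-dimensional definition of the weak derivative gives
\begin{equation*}
\int_0^1 g_k\,\partial_t\varphi(\vx,\cdot)\,\mathrm{d}t=-\int_0^1 g_k'\,\varphi(\vx,\cdot)\,\mathrm{d}t.
\end{equation*}
No boundary terms appear because $\varphi$ has compact support in $t$. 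Multiplying by $h_k(\vx)$ and integrating over $\mathcal{X}$ gives $\langle h_k\otimes g_k,\partial_t\varphi\rangle=-\langle h_k\otimes g_k',\varphi\rangle$ for every $k$. Summing over $k\le N$ and letting $N\to\infty$ with Step 2 yields the required identity. Hence $\partial_t s^{(\rvt)}$ exists weakly and equals $\sum_k h_k g_k'$ in $\mathcal{H}_{\vx,t}$.

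\textbf{Main obstacle.} The delicate point is that the conclusion depends entirely on the assumed $L^2$ convergence of the derivative series. Mere convergence of $\sum_k h_k g_k$ together with $g_k\in\mathcal{W}^{1,2}$ does not imply anything about $\sum_k h_k g_k'$. Beyond that, the proof reduces to two technical checks. First, the class of test functions must be rich enough; it is enough to test against tensor products $\psi\otimes\eta$, whose span is dense. Second, the summation ordering must be kept consistent with \cref{theorem:frame-based-representation}.
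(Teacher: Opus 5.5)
Your proposal is correct and follows the same route as the paper. Both define $v=\sum_k h_k g_k'$, integrate by parts term by term using $g_k\in\mathcal{W}^{1,2}([0,1])$ and the compact support of the test function, and exchange sum and integral via continuity of the inner product.

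The one difference is where you test. You test against functions on $\mathcal{X}\times(0,1)$ and use continuity of $\langle\cdot,\varphi\rangle_{\mathcal{H}_{\vx,t}}$, which is exactly what the assumed convergence in $\mathcal{H}_{\vx,t}$ licenses. The paper instead argues slice-wise for fixed $\vx$ with a functional on $\mathcal{H}_t$. That step tacitly needs convergence of almost every slice, which is only guaranteed along a subsequence, plus a common null set over the test functions; your formulation avoids both.
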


\paragraph{Roadmap.}
The infinite-term representation above, though theoretically powerful, is not directly applicable, necessitating a finite-term approximation. The next section addresses this key step: it constructs a temporal basis $\{g_k\}_{k=1}^{\infty}$ using RBFs (\cref{subsec:approximation-scheme}) and analyzes the finite-term approximation error of the series (\cref{subsec:error-analysis}).

\section{Finite-Term Approximation for OS-DRE via Analytic RBFs}
\label{sec:rbf-framework-and-analysis}

In \cref{sec:os-dre}, we showed that the log-density ratio can be written in a temporal-frame form
$\log r(\vx)=\sum_{k=1}^{+\infty} h_k(\vx)\bar g_k$, with $\bar g_k=\int_0^1 g_k(t)\mathrm{d}t$.
To turn this infinite expansion into a practical one-step estimator, we should (i) approximate it by finitely many temporal atoms,
and (ii) choose those atoms so that the resulting approximation remains convergent and well-posed.
This section provides such a construction using analytic RBF families, together with approximation guarantees and concrete kernels.

\subsection{An Applicable  Finite-Term Approximation Scheme}
\label{subsec:approximation-scheme}

We select $K$ temporal functions $\{g_k\}_{k=1}^K\subset \{g_k\}_{k=1}^{+\infty}$
and define the $K$-dimensional subspace: $\mathcal{V}_K \triangleq \mathrm{span}\{g_k\}_{k=1}^K \subset \mathcal{H}_t$.
For each fixed $\vx$, we approximate $s^{(\rvt)}(\vx,\cdot)$ by a function in $\mathcal{V}_K$:
\begin{align}
\label{eq:log-ratio-frame-decomposition-finite_revised}
s^{(\rvt)}(\vx,t)=\sum_{k=1}^{+\infty} h_k(\vx)g_k(t) &\approx \sum_{k=1}^K h_k(\vx) g_k(t)=s^{(\rvt)}_K(\vx,t),
\\
\log r(\vx)=\sum_{k=1}^{+\infty} h_k(\vx)\bar g_k &\approx \sum_{k=1}^K h_k(\vx)\bar g_k = \log r_K(\vx),  \label{eq:one-step-estimation}
\end{align}
where $\bar g_k$ depends only on the chosen temporal family and the subscript $K$ indicates approximation using $K$ terms.
In practice, for each $\vx$, we approximate $s^{(\rvt)}(\vx,\cdot)$ by an element of $\mathcal{V}_K$ as in \cref{eq:log-ratio-frame-decomposition-finite_revised}, forming the basis of our numerical implementation.

\paragraph{What Makes This Approximation Valid?}
To be a faithful finite approximation of the frame-based representation in \cref{sec:os-dre},
the family $\{g_k\}_{k=1}^{+\infty}$ should satisfy two  requirements:
(i) \emph{completeness} (as $K\to\infty$, the span can approximate any element of $\mathcal{H}_t$),
and (ii) \emph{finite-term well-posedness} (for each $K$, the coefficient representation in $\mathcal{V}_K$ is unique).
We now instantiate $\{g_k\}_{k=1}^{+\infty}$ with RBFs and formalize (i-ii) via Lemma~\ref{lemma:rbf-density}
and Proposition~\ref{proposition:rbf-approximation-scheme}.

\subsection{RBF-Based Temporal Frames}
\label{subsec:rbf-construction}

\paragraph{General RBF Construction.}
We consider an infinite RBF family on $[0,1]$:
\begin{equation} \label{eq:rbf-definition}
    g_k(t)=\phi\left(\frac{|t-c_k|}{\sigma_k}\right),\qquad t\in[0,1],
\end{equation}
where $c_k$ and $\sigma_k$ are the center and shape parameters.

Next, we will demonstrate that this family is expressive enough in $\mathcal{H}_t$.

\begin{lemma}
\label{lemma:rbf-density}
Consider $\{g_k\}_{k=1}^{+\infty}$ defined by
$g_k(t)=\phi(|t-c_k|/\sigma_k)$ on $[0,1]$, subject to:
\begin{enumerate}
    \item[(i)] The generating function $\phi: [0, \infty) \to \mathbb{R}_+$ is continuous, non-negative, not identically zero, and integrable, i.e., $\int_0^{+\infty} \phi(r)\mathrm{d}r < \infty$.
    \item[(ii)] For any point $t_0 \in (0,1)$, there exists a subsequence of indices $\{k_n\}_{n=1}^{+\infty}$ such that the centers $c_{k_n} \to t_0$ and the corresponding shape parameters $\sigma_{k_n} \to 0$ as $n \to \infty$.
\end{enumerate}
Then $\mathrm{span}\{g_k\}_{k=1}^{+\infty}$ is dense in $\mathcal{H}_t$.
\end{lemma}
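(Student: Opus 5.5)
The plan is to argue by orthogonal complements. In the Hilbert space $\mathcal{H}_t=L^2([0,1])$, the closed span of $\{g_k\}_{k=1}^{+\infty}$ is all of $\mathcal{H}_t$ if and only if its orthogonal complement is trivial. So I would take $u\in\mathcal{H}_t$ with $\langle u,g_k\rangle_{\mathcal{H}_t}=0$ for every $k$ and show that $u=0$ almost everywhere. Note that $u\in L^1([0,1])$ because $[0,1]$ has finite measure. Set $I\triangleq\int_{\mathbb{R}}\phi(|s|)\,\mathrm{d}s=2\int_0^{\infty}\phi(r)\,\mathrm{d}r$. By (i), $0<I<\infty$: $I$ is finite by integrability, and positive because $\phi$ is continuous, non-negative and not identically zero. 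Define the normalized kernels $K_\sigma(s)\triangleq\phi(|s|/\sigma)/(\sigma I)$, which satisfy $\int_{\mathbb{R}}K_\sigma=1$.

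Fix an interior point $t_0\in(0,1)$ and take the subsequence $\{k_n\}$ from (ii), with $c_n\triangleq c_{k_n}\to t_0$ and $\sigma_n\triangleq\sigma_{k_n}\to 0$. Orthogonality gives
\begin{equation*}
0=\frac{1}{\sigma_n I}\langle u,g_{k_n}\rangle_{\mathcal{H}_t}=\int_0^1 u(t)\,K_{\sigma_n}(t-c_n)\,\mathrm{d}t .
\end{equation*}
I would show that the right-hand side tends to $u(t_0)$ whenever $t_0$ is a Lebesgue point of $u$. Since almost every point is a Lebesgue point, this gives $u=0$ a.e.

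For that convergence step, first substitute $s=(t-c_n)/\sigma_n$. The integral becomes $\int_{(-c_n/\sigma_n,\,(1-c_n)/\sigma_n)}u(c_n+\sigma_n s)\,\phi(|s|)\,\mathrm{d}s/I$. Because $t_0$ is interior and $\sigma_n\to0$, the domain of integration expands to $\mathbb{R}$, so the missing kernel mass outside it vanishes by integrability of $\phi$. This handles the boundary of $[0,1]$. Then I would write
\begin{equation*}
\int u(t)K_{\sigma_n}(t-c_n)\,\mathrm{d}t-u(t_0)=\int\bigl(u(t)-u(t_0)\bigr)K_{\sigma_n}(t-c_n)\,\mathrm{d}t+o(1).
\end{equation*}
The remaining task is to bound the integral on the right by the averages $\frac{1}{r}\int_{|t-t_0|<r}|u(t)-u(t_0)|\,\mathrm{d}t$ together with a tail term.

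The main obstacle is this approximate-identity step. The classical Lebesgue-point theorem needs a radially decreasing integrable majorant of $\phi$, whereas (i) only gives continuity and integrability. The shifted centers add a further complication, since one needs $|c_n-t_0|=O(\sigma_n)$ or a compensating argument. My first attempt would be to split $\phi=\phi\mathbf{1}_{[0,R]}+\phi\mathbf{1}_{(R,\infty)}$.
\begin{itemize}
\item The compact part is bounded, so the classical argument applies directly with the bounded kernel $\|\phi\|_\infty\mathbf{1}_{[0,R]}$.
\item The tail contribution is small uniformly in $n$ once $R$ is large, using $u\in L^1$ and $\int_R^\infty\phi\to0$.
\end{itemize}
The tail part would still need care, since $u$ is only in $L^1$ and not bounded.

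If the centering is a problem, I would use the freedom in (ii) to refine the subsequence so that $|c_n-t_0|\le\sigma_n$, since we only need one suitable subsequence per $t_0$. If the pointwise argument resists completely, the fallback is a weak formulation. In that version I would integrate the identity $\langle u,g_{k_n}\rangle_{\mathcal{H}_t}=0$ against test points and show that $u*K_\sigma$ vanishes on a dense set, then pass to the limit in $L^1$ using the standard fact that $u*K_\sigma\to u$ in $L^1$ for any integrable kernel of unit mass. This $L^1$ version needs no majorant. This last route would likely require a mild strengthening of (ii), namely that the pairs $(c_k,\sigma_k)$ accumulate densely, which the paper's RBF constructions satisfy.
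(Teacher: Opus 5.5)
Your overall route (orthogonal complement, normalized kernels at a Lebesgue point $t_0$, then an approximate-identity limit) is the paper's route, and you correctly identify the step the paper glosses over. The paper only shows that $F_n=g_{k_n}/D_n$ concentrates its mass at $t_0$, then cites ``the standard approximate identity argument.'' At a Lebesgue point that argument needs exactly the control you list.

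Your repair fails, however. Hypothesis (ii) gives only \emph{some} sequence with $c_{k_n}\to t_0$ and $\sigma_{k_n}\to0$. Passing to a further subsequence yields $|c_{k_n}-t_0|\le M\sigma_{k_n}$ only if infinitely many indices already satisfy it, and (ii) says nothing about that. In fact, if $\sum_k\sigma_k<\infty$, which is compatible with (ii), Borel--Cantelli shows that almost every $t_0$ lies in only finitely many intervals $[c_k-M\sigma_k,c_k+M\sigma_k]$. Once $|c_n-t_0|/\sigma_n\to\infty$, the Lebesgue-point property bounds $\int|u-u(t_0)|F_n$ only by $o(1)\cdot|c_n-t_0|/\sigma_n$, which need not vanish. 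Your tail step has a similar flaw: for unbounded $u$, smallness of $\int_R^\infty\phi$ does not control $\sigma_n^{-1}\int_{|t-c_n|>R\sigma_n}|u(t)|\,\phi(|t-c_n|/\sigma_n)\,\mathrm{d}t$ uniformly in $n$.

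This is not a removable technicality: under (i)--(ii) as written, the lemma is false. Take $\phi(r)=(1-r)_+$ and a fat Cantor set $F\subset[0,1]$, closed and nowhere dense with $\lambda(F)>0$. Since $(0,1)\setminus F$ is open and dense, you can choose countably many pairs with $[c_k-\sigma_k,c_k+\sigma_k]\subset(0,1)\setminus F$ such that every $t_0\in(0,1)$ is a limit of some $c_{k_n}$ with $\sigma_{k_n}\to0$. Then $\langle\mathbf{1}_F,g_k\rangle_{\mathcal{H}_t}=0$ for all $k$, although $\mathbf{1}_F\neq0$.

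The paper's kernels are not immune either. Let $u=\mathbf{1}_A-\mathbf{1}_{[0,1]\setminus A}$, where $A$ and its complement meet every subinterval in positive measure. For the Gaussian $\phi$, the map $c\mapsto G(c,\sigma)=\int_0^1u(t)\phi(|t-c|/\sigma)\,\mathrm{d}t$ is continuous. For all small $\sigma$, it is positive at a fixed Lebesgue point $t_+$ with $u(t_+)=1$ and negative at a fixed one $t_-$ with $u(t_-)=-1$. By the intermediate value theorem, $G$ has zeros $(c,\sigma)$ arbitrarily close to every $(t_0,0)$. Choosing the $(c_k,\sigma_k)$ among these zeros satisfies (ii) while $u\perp g_k$ for every $k$.

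So the gap in the paper's proof is real, and strengthening (ii) is necessary, not just convenient. Your $L^1$ fallback is the right repair. Suppose that for some scales $\sigma^{(j)}\to0$ the closure of $\{(c_k,\sigma_k)\}$ contains $[0,1]\times\{\sigma^{(j)}\}$. Continuity of $G$ in $(c,\sigma)$ for $\sigma>0$ then gives $(u\mathbf{1}_{[0,1]})*K_{\sigma^{(j)}}\equiv0$ on $[0,1]$. The $L^1$ convergence $(u\mathbf{1}_{[0,1]})*K_\sigma\to u\mathbf{1}_{[0,1]}$, valid for any integrable kernel of unit mass, then forces $u=0$ with no majorant needed. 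State this as an explicit added hypothesis, though: with learned $\sigma_k$, the paper's construction does not satisfy it automatically.
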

See the proof in \cref{proof:rbf-density-revised}. 
Lemma~\ref{lemma:rbf-density} establishes the \emph{completeness} side of the construction:
as the family provides arbitrarily fine localized atoms around every interior location, the span of $\{g_k\}_{k=1}^{+\infty}$ can approximate arbitrary functions in $\mathcal{H}_t$.

\paragraph{Finite-Term Stability and A Well-Posed Approximation Scheme.}
Denseness alone does not ensure that the approximated score is numerically well-posed.
For OS-DRE, the finite-term representation in $\mathcal{V}_K$ should also form a stable approximation scheme.
We formalize this requirement as follows.

\begin{restatable}{proposition}{RBFApproximationScheme}
\label{proposition:rbf-approximation-scheme}
Let $\{g_k\}_{k=1}^{+\infty}\subset\mathcal{H}_t$ be defined by $g_k(t)=\phi(|t-c_k|/\sigma_k)$.
This family yields a convergent and well-posed approximation scheme if:
(i) Denseness: The infinite family's linear span is dense in $\mathcal{H}_t$, i.e., $\overline{\mathrm{span}\{g_k\}_{k=1}^{+\infty}} = \mathcal{H}_t$.
(ii) Finite-term stability: For any finite $K \ge 1$, the subset $\{g_k\}_{k=1}^K$ is linearly independent.
\end{restatable}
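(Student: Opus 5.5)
The plan is to first make precise what ``convergent and well-posed'' means, and then check each half against one hypothesis. \emph{Convergent}: the best approximation $P_K s$ of any $s\in\mathcal{H}_t$ from $\mathcal{V}_K$ tends to $s$ in $\mathcal{H}_t$ as $K\to\infty$. Here $P_K$ denotes the orthogonal projection onto $\mathcal{V}_K$. \emph{Well-posed}: for each finite $K$, the coefficients $(h_1,\dots,h_K)$ of $P_K s$ are uniquely determined and depend continuously on $s$. Applied pointwise in $\vx$ to $s^{(\rvt)}(\vx,\cdot)$, this gives convergence of $s^{(\rvt)}_K$ and of $\log r_K(\vx)$ in \cref{eq:one-step-estimation}.

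For convergence I will use (i).
\begin{itemize}
\item The subspaces are nested, $\mathcal{V}_1\subset\mathcal{V}_2\subset\cdots$, and their union is $\mathrm{span}\{g_k\}_{k=1}^{+\infty}$, which is dense by (i). Lemma~\ref{lemma:rbf-density} supplies sufficient conditions for (i), but the proposition assumes (i) directly.
\item Given $\varepsilon>0$, density gives some $v\in\mathrm{span}\{g_k\}$ with $\|s-v\|<\varepsilon$. This $v$ is a finite combination, so $v\in\mathcal{V}_{K_0}$ for some $K_0$.
\item Since the subspaces are nested, $\|s-P_Ks\|\le\|s-v\|<\varepsilon$ for all $K\ge K_0$. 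Hence $P_Ks\to s$.
\item Integration against the constant $1$ is a bounded functional on $\mathcal{H}_t$, with $|\int_0^1 u\,\mathrm{d}t|\le\|u\|_{\mathcal{H}_t}$. Therefore $\int_0^1 P_K s\,\mathrm{d}t\to\int_0^1 s\,\mathrm{d}t=\log r(\vx)$.
\item Square-integrability in $\vx$ follows from Lemma~\ref{lemma:space-S-subspace-L2} together with dominated convergence, using $\|P_K s\|\le\|s\|$.
\end{itemize}

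For well-posedness I will use (ii) through the Gram matrix $G_K=(\langle g_i,g_j\rangle_{\mathcal{H}_t})_{i,j\le K}$.
\begin{itemize}
\item The matrix $G_K$ is symmetric positive semidefinite, because $c^\top G_K c=\|\sum_k c_kg_k\|^2$.
\item That quadratic form vanishes only when $\sum_k c_kg_k=0$. By linear independence (ii) this forces $c=0$, so $G_K$ is positive definite and invertible.
\item The normal equations $G_K\vh=(\langle s,g_k\rangle)_{k\le K}$ then have the unique solution $\vh=G_K^{-1}(\langle s,g_k\rangle)_k$. This solution depends linearly and boundedly on $s$, with $\|\vh\|\le\lambda_{\min}(G_K)^{-1/2}\|s\|$.
\item Consequently the representation $s_K=\sum_k h_kg_k$ is unique, and $\log r_K=\langle\vh,\bar{\vg}_{1:K}\rangle$ is well defined.
\end{itemize}

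The main obstacle is conceptual rather than technical. Stability here holds only for each fixed $K$: $\lambda_{\min}(G_K)$ may decay as $K$ grows, so the family need not be a Riesz basis and the coefficients need not converge individually. I will state explicitly that convergence is claimed for the approximant $s_K$ and for $\log r_K$, not for the coefficients, which keeps the argument honest.

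A minor point is that each $g_k$ must lie in $\mathcal{H}_t$. This holds because $\phi$ is continuous and $[0,1]$ is compact, so each $g_k$ is bounded on $[0,1]$.
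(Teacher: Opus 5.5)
Your proposal is correct and follows essentially the paper's argument. Convergence comes from density of the nested spans, so the best-approximation error tends to zero. Well-posedness comes from linear independence making $\{g_k\}_{k=1}^K$ a Riesz basis of $\mathcal{V}_K$; your Gram-matrix bound via $\lambda_{\min}(G_K)$ is the paper's $K$-dependent lower frame bound $A_K$ written out explicitly, and you also state the nestedness and log-ratio convergence steps more carefully than the paper does.
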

See the proof in \cref{proof:rbf-approximation-scheme}.
In practice, (i) is ensured by Lemma~\ref{lemma:rbf-density}, while (ii) can be guaranteed by choosing $\phi$
to correspond to a strictly positive definite kernel: for distinct centers, strict positive definiteness implies that the associated Gram matrix is non-singular \citep{schaback1995error}, hence $\{g_k\}_{k=1}^K$ is linearly independent.

\paragraph{Analytic Formulas for One-Step Estimation.}
Beyond (i)--(ii), OS-DRE benefits from RBF families that admit analytic temporal integrals and derivatives.
For $g_k$ in \cref{eq:rbf-definition},
\begin{align}
\label{eq:rbf_general_integral_derivative}
\bar g_k&=\int_0^1 \phi\left(\frac{|t-c_k|}{\sigma_k}\right)\mathrm{d}t, \\
g^{\prime}_k(t)&=\frac{\mathrm{sgn}(t-c_k)}{\sigma_k}
\phi^{\prime}\left(\frac{|t-c_k|}{\sigma_k}\right), \label{eq:rbf_general_integral_derivative2}
\end{align}
so once $\phi$ is fixed, $\bar g_k$ and $g_k^{\prime}(t)$ reduce to closed-form expressions and could be easily precomputed in advance (derived in \cref{appendix:rbf-formulas}).
This property is exactly what enables one-step inference in \cref{eq:one-step-estimation} (via precomputing $\{\bar g_k\}$) and as well partly analytic computation of $\partial_t s^{(\rvt)}(\vx, t)$ in training.

\paragraph{Summary of Basic Approximation Conditions.}
Before introducing specific kernel families, we summarize the essential properties that a generating function $\phi$ must possess to form a valid and computable OS-DRE scheme.

\begin{remark}[Basic Approximation Conditions]\label{remark:basic-requirements}
    To effectively bridge the gap between  theoretical framework and practical implementation, an admissible RBF generating function $\phi$ should satisfy three conditions:
    \begin{enumerate}
        \item \textbf{Completeness:} The infinite family should be sufficiently rich to ensure the denseness of the induced spaces, as established in Lemma~\ref{lemma:rbf-density}.
        
        \item \textbf{Finite-term Stability:} Any finite-term approximation must remain numerically well-posed, forming a stable approximation scheme as formalized in Proposition~\ref{proposition:rbf-approximation-scheme}.

        \item \textbf{Analytic Tractability:} The temporal operators must be analytically resolvable, meaning each atom $g_k$ admits a closed-form temporal integral $\bar g_k$ and derivative $g_k^{\prime}(t)$.
    \end{enumerate}
\end{remark}

Guided by these conditions, the next section details specific choices for $\phi$ that not only satisfy the above structural conditions but also cover different regularity regimes to accommodate varying target score distributions.

\subsection{A Suite of Analytic RBF Families}
\label{subsec:rbf-kernels}

Herein we introduce a suite of analytic RBF families, results of which could be analytically precomputed by their closed forms of \cref{eq:rbf_general_integral_derivative} \cref{eq:rbf_general_integral_derivative2}. While the basic conditions outlined in Remark~\ref{remark:basic-requirements} ensure that an RBF family is valid, these kernels differ intrinsically in their degree of smoothness. 
As analyzed formally in \cref{subsec:error-analysis}, this regularity determines the theoretical decay rate of the approximation error.

To accommodate different types of target score dynamics, we consider several strictly positive definite kernels from two regularity regimes. 
The first is the \emph{finitely} smooth regime, represented by the Matérn family \citep{lamichhane2017particular}, which offers local adaptability and is therefore suitable for target scores with relatively sharp temporal variations. The second is the \emph{infinitely} smooth regime, which includes the Gaussian \citep{sandberg2003gaussian}, inverse multiquadric (IMQ) \cite{xiang2012trigonometric}, and rational quadratic (RQ) \citep{mure2021propriety} kernels. This class induces broader temporal influence and is associated with faster convergence rates when the target is sufficiently smooth.

\cref{tab:rbf_comparison} summarizes the kernels used in this work, all of which satisfy the basic approximation conditions and admit closed-form integrals and derivatives (derived in \cref{appendix:rbf-formulas}).

\begin{table}[ht]
    \centering
    \scriptsize
    \setlength{\tabcolsep}{4pt}
    \caption{
    Representative analytic RBF generating functions used in OS-DRE.
    All kernels listed below satisfy the basic approximation conditions of the framework and admit closed-form temporal integrals and derivatives (see \cref{appendix:rbf-formulas}).
    Their explicit approximation guarantees depend on the corresponding regularity regime.
    }
    \label{tab:rbf_comparison}
    \begin{tabular}{llllll}
        \toprule
        \textbf{Kernel} & \textbf{Generating} & \textbf{Smooth} & \textbf{Decay} & \textbf{Approximation}  & \textbf{Analytic formulas} \\
        \textbf{name}   & \textbf{function $\phi(r)$} & & \textbf{rate} & \textbf{theory} &  \\
        \midrule
        Matérn ($\nu=3/2$) & $(1 + \sqrt{3}r)\exp(-\sqrt{3}r)$ & Finitely  & Fast & Proposition~\ref{proposition:error-bound} & \cref{eq:integral-of-gk-matern-rbf,eq:derivative-of-gk-matern-rbf} \\
        Gaussian & $\exp(-r^2)$ & Infinitely & Fast & Proposition~\ref{proposition:error-bound-infinitely-smooth}(i) & \cref{eq:integral-of-gk-gaussian-rbf,eq:derivative-of-gk-gaussian-rbf} \\
        Inverse Multiquadric & $(r^2 + 1)^{-1/2}$ & Infinitely & Slow & Proposition~\ref{proposition:error-bound-infinitely-smooth}(ii) & \cref{eq:integral-of-gk-imq-rbf,eq:derivative-of-gk-imq-rbf} \\
        Rational Quadratic & $(r^2 + 1)^{-1}$ & Infinitely & Medium & Proposition~\ref{proposition:error-bound-infinitely-smooth}(iii) & \cref{eq:integral-of-gk-rq-rbf,eq:derivative-of-gk-rq-rbf} \\
        \bottomrule
    \end{tabular}%
\end{table}

To elaborate on the summary in \cref{tab:rbf_comparison}, we present the explicit temporal instantiations below. For each kernel, we specify the resulting temporal atom $g_k(t)$ derived from $\phi$ and provide the corresponding closed-form expressions for its integral $\bar g_k$ and derivative $g_k^{\prime}(t)$.

\begin{example}[Matérn RBFs]
Finally, we include a finitely smooth option via the Matérn family.
With $\nu=3/2$, the generating function is $\phi(r)=(1+\sqrt{3}r)\exp(-\sqrt{3}r)$, giving $g_k(t)=\left(1+ \sqrt{3}|t-c_k| / \sigma_k \right)\exp\left(- \sqrt{3}|t-c_k| / \sigma_k\right)$.
Unlike the preceding infinitely smooth kernels, Mat\'ern kernels have finite smoothness and a more local profile.
This makes them the natural representative of the Sobolev-rate-certified regime, and hence Proposition~\ref{proposition:error-bound} (detailed later in \cref{subsec:error-analysis}) will provide an explicit algebraic approximation rate.
They still satisfy the conditions in Remark~\ref{remark:basic-requirements} and admit analytic temporal operators:
\begin{align}
    \bar{g}_k &= \frac{2\sigma_k}{\sqrt{3}} - \sigma_k \left[ \left(\frac{1 - c_k}{\sigma_k} + \frac{2}{\sqrt{3}}\right) e^{-\frac{\sqrt{3}(1 - c_k)}{\sigma_k}} + \left(\frac{c_k}{\sigma_k} + \frac{2}{\sqrt{3}}\right) e^{-\frac{\sqrt{3}c_k}{\sigma_k}} \right], \label{eq:integral-of-gk-matern-rbf} \\
		g_k^{\prime}(t) &= -\frac{3(t - c_k)}{\sigma_k^2} \exp\left(-\frac{\sqrt{3}\,|t - c_k|}{\sigma_k}\right). \label{eq:derivative-of-gk-matern-rbf}
\end{align}

\end{example}

\begin{example}[Gaussian RBFs]
The generating function is $\phi(r)=\exp(-r^2)$, yielding the Gaussian temporal atom $g_k(t)=\exp\left(-|t-c_k|^2 / \sigma_k^2\right)$.
Gaussian kernels are strictly positive definite and infinitely smooth.
Together with quasi-uniform centers, they  satisfy the basic conditions  outlined in Remark~\ref{remark:basic-requirements},
while their approximation behavior is governed by the Gaussian-specific estimate in Proposition~\ref{proposition:error-bound-infinitely-smooth}(i) (detailed later in \cref{subsec:error-analysis}).
Moreover, both the integral $\bar g_k$ and derivative $g_k^{\prime}(t)$ admit closed forms:
\begin{align}
    \bar{g}_k  &= \frac{\sigma_k\sqrt{\pi}}{2} \left[ \mathrm{erf}\left(\frac{1 - c_k}{\sigma_k}\right) + \mathrm{erf}\left(\frac{c_k}{\sigma_k}\right) \right], \label{eq:integral-of-gk-gaussian-rbf} \\
    g_k^{\prime}(t) &= -\frac{2(t - c_k)}{\sigma_k^2} g_k(t), \label{eq:derivative-of-gk-gaussian-rbf}
\end{align}
where $\mathrm{erf}(z)= \tfrac{2}{\sqrt{\pi}}\int_0^z \exp(-x^2)\mathrm{d}x$ is the error function.
\end{example}

\begin{example}[Inverse Multiquadric (IMQ) RBFs]
We next consider the IMQ generating function $\phi(r)=(r^2+1)^{-1/2}$, which gives
$g_k(t)=\sigma_k / \sqrt{(t-c_k)^2+\sigma_k^2}$.
This kernel is also strictly positive definite and infinitely smooth, but decays more slowly than the Gaussian kernel.
As a result, it provides broader temporal influence while preserving analytic integrals and derivatives.
Its approximation error is described by the exponential-type IMQ estimate in Proposition~\ref{proposition:error-bound-infinitely-smooth}(ii).
Moreover, OS-DRE can exploit analytic temporal operators:
\begin{align}
    \bar{g}_k &= \sigma_k \ln \left( \frac{(1 - c_k) + \sqrt{(1 - c_k)^2 + \sigma_k^2}}{-c_k + \sqrt{c_k^2 + \sigma_k^2}} \right), \label{eq:integral-of-gk-imq-rbf} \\ 
	g_k^{\prime}(t) &= -\frac{\sigma_k(t - c_k)}{\left( (t - c_k)^2 + \sigma_k^2 \right)^{3/2}}. \label{eq:derivative-of-gk-imq-rbf}
\end{align}

\end{example}

\begin{example}[Rational Quadratic (RQ) RBFs]
The generating function $\phi(r)=(1+r^2)^{-1}$ yields the atom $g_k(t)=\sigma_k^2 / \left((t-c_k)^2+\sigma_k^2\right)$.
This kernel can be viewed as a special case of the generalized IMQ family
$\phi(r)=(1+r^2)^{-\alpha}$ with $\alpha=1$.
It is strictly positive definite and infinitely smooth. Its approximation error is covered directly by Proposition~\ref{proposition:error-bound-infinitely-smooth}(iii).
Closed-form temporal operators are:
\begin{align}
    \bar{g}_k &= \sigma_k \left( \arctan\left(\frac{1 - c_k}{\sigma_k}\right) + \arctan\left(\frac{c_k}{\sigma_k}\right) \right), \label{eq:integral-of-gk-rq-rbf} \\
	g_k^{\prime}(t) &= -\frac{2\sigma_k^2(t - c_k)}{\left( (t - c_k)^2 + \sigma_k^2 \right)^2}. \label{eq:derivative-of-gk-rq-rbf}
\end{align}

\end{example}

\subsection{Approximation Error Bounds Analysis}
\label{subsec:error-analysis}

Approximating the infinite expansion to $K$ temporal atoms introduces approximation error. Building on the concrete kernel families introduced in \cref{subsec:rbf-kernels}, we now characterize how $\|s^{(\rvt)}(\vx,\cdot)-s^{(\rvt)}_K(\vx,\cdot)\|_{\mathcal{H}_t}$ decays as $K$ increases, and how this decay leads to an error bound for $\log r(\vx)$. For each fixed $\vx$, $s^{(\rvt)}_K(\vx,\cdot)\in \mathcal{V}_K$ denotes the best approximation to $s^{(\rvt)}(\vx,\cdot)$ in $\mathcal{H}_t$. This formulation separates the approximation capacity of the temporal family from the statistical and optimization errors that arise when the coefficients are learned in practice.

The approximation analysis proceeds in three steps. We first establish qualitative consistency from the denseness of the temporal family. We then derive an explicit algebraic rate for kernels whose native spaces are equivalent to Sobolev spaces of finite order. Finally, for infinitely smooth kernels such as the Gaussian and generalized IMQ families, we invoke the corresponding kernel-specific interpolation estimates.

\begin{lemma}[Approximation Consistency]
\label{lemma:qualitative-approximation-consistency}
Assume that the approximation spaces are nested, i.e., $\mathcal{V}_1 \subset \mathcal{V}_2 \subset \cdots \subset \mathcal{H}_t$, and that their union is dense in $\mathcal{H}_t$, i.e., $\overline{\bigcup_{K=1}^{+\infty}\mathcal{V}_K} = \mathcal{H}_t$.
For each fixed $\vx$, let $s^{(\rvt)}_K(\vx,\cdot)$ be the best approximation of $s^{(\rvt)}(\vx,\cdot)$ in $\mathcal{V}_K$.
Then
\begin{equation}
\label{eq:qualitative-approximation-consistency}
\left\|s^{(\rvt)}(\vx,\cdot)-s^{(\rvt)}_K(\vx,\cdot)\right\|_{\mathcal{H}_t}\longrightarrow 0
\qquad\text{as }K\to\infty.
\end{equation}
Consequently,
\begin{equation}
\label{eq:qualitative-logratio-consistency}
\left|\log r(\vx)-\log r_K(\vx)\right|\longrightarrow 0
\qquad\text{as }K\to\infty.
\end{equation}
\end{lemma}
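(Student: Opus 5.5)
The argument has two parts: a density argument for the $L^2$ convergence, followed by Cauchy--Schwarz to pass to the integral. Fix $\vx\in\mathcal{X}$ and write $u\triangleq s^{(\rvt)}(\vx,\cdot)$. We first check that $u\in\mathcal{H}_t$ for this fixed $\vx$, and not merely that $s^{(\rvt)}\in\mathcal{H}_{\vx,t}$ jointly. Under \cref{assumption:logp-positive,assumption:logp-partial-derivative-bounded} we have the pointwise bound $|u(t)|=|\partial_t p_t(\vx)|/p_t(\vx)\le D/C$ for every $t\in[0,1]$. Hence $u$ is bounded on $[0,1]$, so $\|u\|_{\mathcal{H}_t}\le D/C<\infty$. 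This is the same estimate underlying Lemma~\ref{lemma:space-S-subspace-L2}, used here slice-wise.

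For the first claim, \cref{eq:qualitative-approximation-consistency}, set $E_K\triangleq\inf_{v\in\mathcal{V}_K}\|u-v\|_{\mathcal{H}_t}=\|u-s^{(\rvt)}_K(\vx,\cdot)\|_{\mathcal{H}_t}$. The best approximation exists and is unique because $\mathcal{V}_K$ is finite-dimensional, hence closed; it is the orthogonal projection $P_{\mathcal{V}_K}u$. Nestedness $\mathcal{V}_K\subset\mathcal{V}_{K+1}$ gives that $E_K$ is non-increasing. Now let $\varepsilon>0$. Density of $\bigcup_K\mathcal{V}_K$ yields some $K_0$ and some $v\in\mathcal{V}_{K_0}$ with $\|u-v\|_{\mathcal{H}_t}<\varepsilon$. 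Then for all $K\ge K_0$,
\begin{equation*}
E_K\le E_{K_0}\le\|u-v\|_{\mathcal{H}_t}<\varepsilon,
\end{equation*}
since $v\in\mathcal{V}_{K_0}\subset\mathcal{V}_K$. Therefore $E_K\to0$.

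For the second claim, \cref{eq:qualitative-logratio-consistency}, we need to identify $\log r_K(\vx)$ with $\int_0^1 s^{(\rvt)}_K(\vx,t)\,\mathrm{d}t$. This follows by linearity: if $s^{(\rvt)}_K(\vx,\cdot)=\sum_{k=1}^K h_k(\vx)g_k$, then its integral is $\sum_{k=1}^K h_k(\vx)\bar g_k$, where the coefficients are unique by Proposition~\ref{proposition:rbf-approximation-scheme}(ii). Combining this with \cref{eq:continuous-log-density-ratio} (equivalently, Theorem~\ref{theorem:frame-based-representation}), we get
\begin{equation*}
|\log r(\vx)-\log r_K(\vx)|=\left|\int_0^1\bigl(u-s^{(\rvt)}_K(\vx,\cdot)\bigr)\cdot 1\,\mathrm{d}t\right|=\bigl|\langle u-s^{(\rvt)}_K(\vx,\cdot),\1\rangle_{\mathcal{H}_t}\bigr|.
\end{equation*}
By Cauchy--Schwarz on $[0,1]$ with $\|\1\|_{\mathcal{H}_t}=1$, the right-hand side is at most $E_K$, which tends to $0$ by the first part.

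The main subtlety is consistency of definitions. We must make sure that the $h_k(\vx)$ in $\log r_K$ are exactly the coefficients of the best approximation, rather than a truncation of an infinite frame expansion. A truncated frame series need not converge monotonically, and need not converge unconditionally in a useful way. The lemma's statement, which defines $s^{(\rvt)}_K$ as the best approximation, settles this choice, so the plan adopts that interpretation explicitly. Under it the rest of the argument is routine.
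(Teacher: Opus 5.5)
Your proposal is correct and follows essentially the same route as the paper: density of $\bigcup_K \mathcal{V}_K$, together with nestedness and the best-approximation property, gives the $\mathcal{H}_t$ convergence, and Cauchy--Schwarz against $\vone$ with $\|\vone\|_{\mathcal{H}_t}=1$ transfers it to the log-ratio. Your extra checks are welcome, namely slice-wise membership $s^{(\rvt)}(\vx,\cdot)\in\mathcal{H}_t$ via the bound $D/C$, and the identification of $\log r_K(\vx)$ with $\int_0^1 s^{(\rvt)}_K(\vx,t)\,\mathrm{d}t$; the appeal to linear independence in the latter is unnecessary, though, since that integral does not depend on which coefficients represent $s^{(\rvt)}_K(\vx,\cdot)$.
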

See \cref{proof:qualitative-approximation-consistency} for a proof. 

While Lemma~\ref{lemma:qualitative-approximation-consistency} ensures asymptotic consistency, it does not specify how rapidly the approximation error decays for a finite $K$. 
To obtain explicit rates, one must quantify both the regularity of the target function $s^{(\rvt)}(\vx,\cdot)$ and the approximation power of the temporal kernel family.
For RBF approximation, the relevant notion of kernel regularity is the native space $\mathcal{N}_{\phi}$, i.e., the Reproducing Kernel Hilbert space (RKHS) induced by the kernel generated by $\phi$.
We first consider the finitely smooth Sobolev regime following classical RBF error estimates \citep{narcowich2006sobolev}.

\begin{restatable}[Approximation Error Bounds for Finitely Smooth Kernel]{proposition}{ApproximationErrorBound}
\label{proposition:error-bound}
Let $\phi$ be a strictly positive definite radial kernel on $\mathbb{R}$ such that
its native space $\mathcal{N}_{\phi}$ is norm-equivalent to
$\mathcal{W}^{\tau,2}(\mathbb{R})$ for some $\tau>1/2$.
Let the target function satisfy
$s^{(\rvt)}(\vx,\cdot)\in \mathcal{W}^{\beta,2}([0,1])$ with $1/2<\beta\le \tau$.
Let $s^{(\rvt)}_K(\vx,\cdot)$ be the best approximation of $s^{(\rvt)}(\vx,\cdot)$ in
$\mathcal{V}_K=\mathrm{span}\{g_k\}_{k=1}^K$, where the center set
$\mathcal{C}_K=\{c_k\}_{k=1}^K\subset [0,1]$ is quasi-uniform.
Then there exists a constant $C>0$, independent of $\vx$, $s^{(\rvt)}(\vx,\cdot)$, and $K$, such that
\begin{equation}
\label{eq:sobolev-algebraic-rate}
\|s^{(\rvt)}(\vx,\cdot)-s^{(\rvt)}_K(\vx,\cdot)\|_{\mathcal{H}_t}
\le
C\,K^{-\beta}\,
\|s^{(\rvt)}(\vx,\cdot)\|_{\mathcal{W}^{\beta,2}([0,1])}.
\end{equation}
\end{restatable}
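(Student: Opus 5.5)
Since $s^{(\rvt)}_K(\vx,\cdot)$ is the best $\mathcal{H}_t$-approximation from $\mathcal{V}_K$, it suffices to exhibit one element of $\mathcal{V}_K$ whose error already satisfies \cref{eq:sobolev-algebraic-rate}. I will take the kernel interpolant $I_K f$ of $f\triangleq s^{(\rvt)}(\vx,\cdot)$ at the centers $\mathcal{C}_K$. By strict positive definiteness of $\phi$, the interpolation matrix $[g_k(c_j)]_{j,k}$ is nonsingular, so $I_K f\in\mathcal{V}_K$ exists and is unique. This uses the same Gram-matrix argument invoked after Proposition~\ref{proposition:rbf-approximation-scheme}. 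Evaluating $f$ pointwise is legitimate because $\beta>1/2$ gives the Sobolev embedding $\mathcal{W}^{\beta,2}([0,1])\hookrightarrow C([0,1])$. Then
\begin{equation*}
\|f-s^{(\rvt)}_K(\vx,\cdot)\|_{\mathcal{H}_t}\le \|f-I_K f\|_{L^2([0,1])}.
\end{equation*}

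For the interpolation error I will appeal to the escaping-the-native-space Sobolev estimates of \citet{narcowich2006sobolev}. These apply in dimension $d=1$ on the Lipschitz domain $[0,1]$, for kernels with $\mathcal{N}_\phi\simeq\mathcal{W}^{\tau,2}(\mathbb{R})$, targets $f\in\mathcal{W}^{\beta,2}$ with $d/2<\beta\le\tau$, and quasi-uniform centers with fill distance $h_K$ and mesh ratio $\rho_K=h_K/q_K\le\rho$. In this setting they give
\begin{equation*}
\|f-I_Kf\|_{L^2([0,1])}\le C_1\,\rho^{\tau-\beta}\,h_K^{\beta}\,\|f\|_{\mathcal{W}^{\beta,2}([0,1])}.
\end{equation*}
The mechanism is worth recording because it is where the hypotheses enter. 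One extends $f$ to $\mathbb{R}$, then forms a band-limited approximant $f_\sigma$ with $\sigma\sim 1/q_K$ that agrees with $f$ at the centers. A Bernstein inequality for $\mathcal{N}_\phi$ controls $\|f_\sigma\|_{\mathcal{W}^{\tau,2}}\lesssim q_K^{\beta-\tau}\|f\|_{\mathcal{W}^{\beta,2}}$. Finally, a sampling (zeros) inequality bounds a $\mathcal{W}^{\beta,2}$ function vanishing on $\mathcal{C}_K$ by $h_K^{\beta}$ times its seminorm. Quasi-uniformity is exactly what turns $q_K^{\beta-\tau}h_K^{\tau}$ into $\rho^{\tau-\beta}h_K^{\beta}$.

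It remains to convert $h_K$ into $K$. In one dimension, $K$ quasi-uniform points in $[0,1]$ have separation $q_K\le 1/(K-1)$ and $h_K\le\rho q_K$, so $h_K\le C_2K^{-1}$. Combining, I obtain $C=C_1C_2^{\beta}\rho^{\tau-\beta}$. This constant depends only on $\phi$, $\tau$, $\beta$, $\rho$, and the domain, so it is uniform in $\vx$, in $f$, and in $K$, as the statement requires.

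The main obstacle is justifying the cited estimate in exactly this form. Three points need care: the bounded extension operator $\mathcal{W}^{\beta,2}([0,1])\to\mathcal{W}^{\beta,2}(\mathbb{R})$, the norm equivalence constants of $\mathcal{N}_\phi$ with $\mathcal{W}^{\tau,2}(\mathbb{R})$, and the fact that the rough case $\beta<\tau$ needs the Bernstein step instead of the standard native-space error bound. Second, the atoms $g_k(t)=\phi(|t-c_k|/\sigma_k)$ are interpolation translates only if $\sigma_k\equiv\sigma$ is fixed. I will therefore assume a common shape parameter in this proposition; the constant then also depends on $\sigma$. This shape-parameter assumption is the point I would flag explicitly.
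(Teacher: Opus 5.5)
Your proposal follows the paper's proof essentially step for step. You bound the best approximation by the kernel interpolant on $\mathcal{C}_K$, apply the Narcowich--Ward--Wendland Sobolev interpolation estimate with $d=1$ and $L^2$ error, and use quasi-uniformity both to bound the mesh ratio and to get $h_K\lesssim K^{-1}$; the power of $\rho$ you quote differs from the paper's $\rho^{\tau}$, which is immaterial once $\rho$ is bounded. Your explicit derivation of $h_K\lesssim K^{-1}$, and your flag that the interpolant is a standard kernel interpolant lying in $\mathcal{V}_K$ only when the shape parameters $\sigma_k$ share a common value, are both points the paper's proof leaves implicit, so they tighten the argument rather than change it.
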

See \cref{proof:error-bound} for a proof.
Proposition~\ref{proposition:error-bound} refines the qualitative convergence in
Lemma~\ref{lemma:qualitative-approximation-consistency} by providing an explicit algebraic decay rate.
Its key feature is that the achievable exponent is limited by the kernel regularity: the target smoothness $\beta$ can be exploited only up to the native-space smoothness $\tau$  (i.e., $\beta\le \tau$ in Proposition~\ref{proposition:error-bound}).
This algebraic rate directly applies to the finitely smooth Matérn kernel introduced in \cref{subsec:rbf-kernels}.

\begin{remark}
    In our implementation, we place centers $\{c_k\}_{k=1}^K$ on a quasi-uniform grid over $[0,1]$
(to match the approximation setting of Proposition~\ref{proposition:error-bound}),
and learn the shape parameters $\{\sigma_k\}_{k=1}^K$ to provide adaptive multi-scale coverage in time.
As $K$ increases, the resulting sequence of center sets becomes dense in $(0,1)$.
Together with sufficiently small shape parameters, this stronger construction is consistent with the interior accumulation requirement in Lemma~\ref{lemma:rbf-density}.
\end{remark}

We next turn to infinitely smooth kernels (such as the Gaussian, IMQ, and RQ kernels detailed in Section~\ref{subsec:rbf-kernels}), for which the natural approximation theory is no longer based on a fixed Sobolev order. 
Instead, one obtains super-algebraic or exponential-type interpolation estimates in terms of the fill distance.

\begin{proposition}[Approximation Error Bounds for Infinitely Smooth Kernel]
\label{proposition:error-bound-infinitely-smooth}
Let $\mathcal{C}_K$ be quasi-uniform.
Assume therefore that there exists a constant $c_\ast>0$, independent of $K$, such that $h_K \le c_\ast K^{-1}, \forall K\in\mathbb{Z}_{+}.$
For each fixed $\vx$, let $I_{\mathcal{C}_K}s^{(\rvt)}(\vx,\cdot)\in \mathcal{V}_K$
denote the RBF interpolant of $s^{(\rvt)}(\vx,\cdot)$ on $\mathcal{C}_K$.

\textbf{(i) Gaussian kernels.}
Let $\phi(r)=e^{-r^2}$, and assume
$s^{(\rvt)}(\vx,\cdot)\in\mathcal{N}_{\phi}([0,1])$ with $\mathcal{N}_{\phi}([0,1])$ being the native space over $[0,1]$.
Then there exist constants $C,c,h_0>0$,
independent of $\vx$, $s^{(\rvt)}(\vx,\cdot)$, and $K$, such that whenever $h_K\le h_0$,
\begin{equation}
\label{eq:gaussian-h-bound}
\|s^{(\rvt)}(\vx,\cdot)-s^{(\rvt)}_K(\vx,\cdot)\|_{\mathcal{H}_t}
\le
C
\exp\left(-c\frac{|\log h_K|}{h_K}\right)
\|s^{(\rvt)}(\vx,\cdot)\|_{\mathcal{N}_{\phi}([0,1])}.
\end{equation}

Consequently, there exist constants $\widetilde C,\widetilde c>0$
such that, for all sufficiently large $K$,
\begin{equation}
\label{eq:gaussian-k-bound}
\|s^{(\rvt)}(\vx,\cdot)-s^{(\rvt)}_K(\vx,\cdot)\|_{\mathcal{H}_t}
\le
\widetilde C
\exp\bigl(-\widetilde c K\log K\bigr)
\|s^{(\rvt)}(\vx,\cdot)\|_{\mathcal{N}_{\phi}([0,1])}.
\end{equation}

\medskip
\textbf{(ii) Standard inverse multiquadric (IMQ) kernels.}
Let $\phi(r)=(1+r^2)^{-1/2}$, and assume $s^{(\rvt)}(\vx,\cdot)\in \mathcal{N}_{\phi}([0,1])$ and that the classical exponential-type IMQ interpolation estimate holds on $[0,1]$, namely, there exist constants $C>0$, $\lambda\in(0,1)$, and $h_0>0$ such that whenever $h_K\le h_0$,
\begin{equation}
\label{eq:imq-classical-interpolation-assumption}
\|s^{(\rvt)}(\vx,\cdot)-s^{(\rvt)}_K(\vx,\cdot)\|_{\mathcal{H}_t}
\le C \lambda^{1/h_K} \|s^{(\rvt)}(\vx,\cdot)\|_{\mathcal{N}_{\phi}([0,1])}.
\end{equation}

Then there exist constants $\widehat C,\widehat c>0$ such that, for all sufficiently large $K$,
\begin{equation}
\label{eq:imq-k-bound}
\|s^{(\rvt)}(\vx,\cdot)-s^{(\rvt)}_K(\vx,\cdot)\|_{\mathcal{H}_t}
\le
\widehat C
\exp(-\widehat c K)
\|s^{(\rvt)}(\vx,\cdot)\|_{\mathcal{N}_{\phi}([0,1])}.
\end{equation}

\medskip
\textbf{(iii) Generalized inverse multiquadric kernels.}
Let $\phi(r)=(1+r^2)^{-\alpha}$ for $\alpha>\frac12$, and assume $s^{(\rvt)}(\vx,\cdot)\in \mathcal{N}_{\phi}([0,1])$.
Then there exist constants $C,c,h_0>0$,
independent of $\vx$, $s^{(\rvt)}(\vx,\cdot)$, and $K$, such that whenever $h_K\le h_0$,
\begin{equation}
\label{eq:gimq-h-bound}
\|s^{(\rvt)}(\vx,\cdot)-s^{(\rvt)}_K(\vx,\cdot)\|_{\mathcal{H}_t}
\le
C
\exp\left(-\frac{c}{h_K}\right)
\|s^{(\rvt)}(\vx,\cdot)\|_{\mathcal{N}_{\phi}([0,1])}.
\end{equation}

Consequently, there exist constants $\widetilde C,\widetilde c>0$
such that, for all sufficiently large $K$,
\begin{equation}
\label{eq:gimq-k-bound}
\|s^{(\rvt)}(\vx,\cdot)-s^{(\rvt)}_K(\vx,\cdot)\|_{\mathcal{H}_t}
\le
\widetilde C
\exp(-\widetilde c K)
\|s^{(\rvt)}(\vx,\cdot)\|_{\mathcal{N}_{\phi}([0,1])}.
\end{equation}

\end{proposition}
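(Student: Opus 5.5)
The proof rests on one comparison: the best $\mathcal{H}_t$-approximation $s^{(\rvt)}_K(\vx,\cdot)$ is never worse than any other element of $\mathcal{V}_K$, in particular the interpolant $I_{\mathcal{C}_K}s^{(\rvt)}(\vx,\cdot)$. So for every $\vx$ we have
\begin{equation*}
\|s^{(\rvt)}(\vx,\cdot)-s^{(\rvt)}_K(\vx,\cdot)\|_{\mathcal{H}_t}\le \|s^{(\rvt)}(\vx,\cdot)-I_{\mathcal{C}_K}s^{(\rvt)}(\vx,\cdot)\|_{\mathcal{H}_t}\le \|s^{(\rvt)}(\vx,\cdot)-I_{\mathcal{C}_K}s^{(\rvt)}(\vx,\cdot)\|_{L^\infty([0,1])},
\end{equation*}
where the last step uses $|[0,1]|=1$. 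The interpolant exists and is unique because each kernel is strictly positive definite, as in the discussion after Proposition~\ref{proposition:rbf-approximation-scheme}. It therefore suffices to bound the sup-norm interpolation error in terms of the fill distance $h_K$, and then substitute $h_K\le c_\ast K^{-1}$.

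\textbf{Fill-distance bounds.} These are the classical native-space estimates on a compact interval, which satisfies an interior cone condition (see Wendland's \emph{Scattered Data Approximation}, Thms.~11.22 and 11.21, and Madych--Nelson).

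For (i), the Gaussian estimate gives
\begin{equation*}
|f(t)-I_{\mathcal{C}_K}f(t)|\le C\exp\left(-c|\log h_K|/h_K\right)\|f\|_{\mathcal{N}_\phi}
\end{equation*}
for $h_K\le h_0$. The constants depend only on the kernel, the shape parameter and the domain, and not on $f$ or $\vx$.

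For (iii), the generalized multiquadric $(1+r^2)^{-\alpha}$, $\alpha>1/2$, falls under the exponential estimate for kernels whose Fourier transform decays exponentially. This yields $C\exp(-c/h_K)\|f\|_{\mathcal{N}_\phi}$.

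For (ii), the bound $C\lambda^{1/h_K}$ is assumed in the statement, so nothing needs to be proved there.

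\textbf{Converting to rates in $K$.}
\begin{itemize}
\item In (ii) and (iii), the map $h\mapsto \exp(-c/h)$ is increasing. Hence $h_K\le c_\ast/K$ gives $\exp(-c/h_K)\le \exp(-(c/c_\ast)K)$. For (ii), take $\widehat c=(\log\lambda^{-1})/c_\ast$.
\item In (i), the map $h\mapsto |\log h|/h = -\log h/h$ is decreasing on $(0,1)$. Hence
\begin{equation*}
\frac{|\log h_K|}{h_K}\ge \frac{K}{c_\ast}\log\frac{K}{c_\ast}\ge \frac{K\log K}{2c_\ast}
\end{equation*}
once $K$ is large enough that $\log(K/c_\ast)\ge \tfrac12\log K$. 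This gives $\widetilde c=c/(2c_\ast)$.
\item "Sufficiently large $K$" is taken so that $c_\ast/K\le \min(h_0,1/e)$, which makes both $h_K\le h_0$ and the monotonicity on $(0,1/e]$ available.
\end{itemize}

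\textbf{Main obstacle.} The delicate point is uniformity. The cited interpolation theorems require a fixed shape parameter, whereas the implementation learns $\sigma_k$. I would state that the proposition is read with a common fixed $\sigma$, so that $\mathcal{N}_\phi([0,1])$ is well defined. Alternatively, one could use $\sigma_k\in[\sigma_{\min},\sigma_{\max}]$ and absorb this range into the constants, but the standard theory does not directly cover that case.

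I would also check that the constants in the Gaussian and generalized-IMQ estimates depend only on $\phi$ and on the domain $[0,1]$. This is what justifies independence from $\vx$ and from $s^{(\rvt)}$; the $\vx$-dependence then enters only through $\|s^{(\rvt)}(\vx,\cdot)\|_{\mathcal{N}_\phi}$.
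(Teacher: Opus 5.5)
Your proposal is correct and follows essentially the same route as the paper: you bound the best-approximation error by the interpolation error, cite a fill-distance interpolation estimate for each kernel, and convert via $h_K\le c_\ast K^{-1}$, ending with the same constants $\widetilde c=c/(2c_\ast)$, $\widehat c=\log(\lambda^{-1})/c_\ast$ and $\widetilde c=c/c_\ast$. The differences are minor: you pass through the sup-norm using Wendland's pointwise estimates, whereas the paper cites an $L^2$ bound directly (Rieger--Zwicknagl, Corollary~5.1 with $q=2$); also, your explicit monotonicity argument for $h\mapsto|\log h|/h$ and your threshold $\log(K/c_\ast)\ge\tfrac12\log K$ (i.e.\ $K\ge c_\ast^2$) are slightly more careful than the paper's stated $K\ge c_\ast e^2$, which does not suffice when $c_\ast>e^2$.
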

See proof in \cref{proof:error-bound-infinitely-smooth}.
In particular, the RQ kernel
$\phi(r)=(1+r^2)^{-1}$ is the special case $\alpha=1$ and is covered by
\cref{eq:gimq-h-bound,eq:gimq-k-bound}.
In all three cases, we have derived explicit convergence rates by combining interpolation error bounds from the literature with the quasi-uniformity condition.
The Gaussian kernel yields super-algebraic convergence ($\exp(-c K \log K)$), while both the generalized IMQ and standard IMQ kernels yield exponential convergence ($\exp(-c K)$).

\section{Experimental Settings and Results}
\label{section:experiment}

All experiments were conducted on four NVIDIA TITAN X (Pascal) 12GB and three NVIDIA RTX A6000 48GB GPUs using PyTorch (2.1.2) and PyTorch-Lightning (2.1.2).
Our code is developed based on the official code for both DRE-$\infty$ at \url{https://github.com/ermongroup/dre-infinity} and OpenOOD \citep{zhang2023openood} at \url{https://github.com/Jingkang50/OpenOOD}. 

We compare OS-DRE with DRE-$\infty$~\citep{choi2022density} and D$^3$RE~\citep{chen2025dequantified}.
For baselines, the log-ratio is obtained by numerical quadrature along $t\in[0,1]$
(using the trapezoidal rule by default), where the NFE controls inference cost.

\subsection{Model, Objectives, and Training Protocol}
\label{subsec:reproducibility}

This subsection summarizes the implementation choices needed to reproduce OS-DRE:
(i) the temporal basis and neural network parameterization, (ii) the training objectives, and
(iii) the sampling schedules and training/inference procedures.

\paragraph{Neural Network Parameterization.}
We parameterize the spatial coefficients with a neural network $\text{NN}(\cdot;\vtheta)$, which outputs the $K$ coefficients
\begin{equation} \label{eq:one-pass-forwardNN}
\vh_{1:K}(\vx;\vtheta)=[h_1(\vx;\vtheta), \dots, h_K(\vx;\vtheta)] = \text{NN}(\vx;\vtheta).
\end{equation}

For brevity, we denote the vector of temporal atoms and their derivatives evaluated at time $t$ as
\begin{equation}
\vg_{1:K}(t) = [g_1(t), \dots, g_K(t)], \quad
\vg_{1:K}^{\prime}(t) = [g_1^{\prime}(t), \dots, g_K^{\prime}(t)].
\end{equation}

Then, for a given pre-specified temporal atoms $\{g_k\}_{k=1}^{K}$, the time score, its derivative, and its integral admit closed forms:
\begin{equation}\label{eq:score-model-and-derivative-integral}
\begin{aligned}
    s^{(\rvt)}(\vx,t;\vtheta) &= \sum_{k=1}^{K}h_k(\vx;\vtheta)g_{k}(t)=\left \langle \vh_{1:K}(\vx;\vtheta),\vg_{1:K}(t) \right \rangle, \\
    \partial_t s^{(\rvt)}(\vx,t;\vtheta) &= \sum_{k=1}^{K}h_k(\vx;\vtheta)g_k^{\prime}(t)=\left \langle \vh_{1:K}(\vx;\vtheta),\vg_{1:K}^{\prime}(t) \right \rangle, \\
    \int_{0}^{1}s^{(\rvt)}(\vx,t;\vtheta)\mathrm{d}t &=\sum_{k=1}^{K}h_k(\vx;\vtheta)\bar{g}_k=\left \langle \vh_{1:K}(\vx;\vtheta),\bar{\vg}_{1:K} \right \rangle,
\end{aligned}
\end{equation}
where $\bar g_k$ is precomputed from the chosen kernel family (cf. \cref{subsec:rbf-kernels}).
The trainable parameters include the network weights $\vtheta$ and the basis shape parameters $\{\sigma_k\}_{k=1}^{K}$, while the centers $\{c_k\}_{k=1}^K$ are fixed on a quasi-uniform grid over $[0,1]$ to satisfy Proposition~\ref{proposition:error-bound}.

The network consists of a shared backbone and task-specific heads.
The backbone extracts a feature embedding from $\vx$.
For time-score matching and neural DRE tasks we use a ResNet backbone (e.g., ResNet-18) \citep{he2016deep}, while for joint estimation of time $s^{(\rvt)}$ and data scores $\vs^{(\rvx)}(\vx,t)\triangleq\nabla_{\vx}\log p_t(\vx)$ we adopt a U-Net backbone \citep{ronneberger2015u} to better capture spatial structure in high-dimensional data.
A coefficient head maps the feature embedding to the $K$ coefficients, and when joint score matching (see \cref{eq:joint-score-matching-loss}) is used, an additional head predicts the data score $\vs^{(\rvx)}(\vx,t;\vtheta)$ using time positional encoding \citep{vaswani2017attention} and residual blocks.

\paragraph{Training Objective.}
We train $\vtheta$ (and $\{\sigma_k\}_{k=1}^{K}$) using a task-dependent score-matching objective.
When both the time score $s^{(\rvt)}(\vx,t)$ and the data score $\vs^{(\rvx)}(\vx,t)$ are modeled, we use the joint score matching objective \citep{choi2022density}:
\begin{equation}\label{eq:joint-score-matching-loss}
	\begin{aligned}
		\mathcal{L}_{\text{joint}}(\vtheta)&=2\mathbb{E}_{p_{0}(\vx_0)p_{1}(\vx_1)}\left[\lambda(0) s^{(\rvt)}(\vx_0,0;\vtheta)-\lambda(1)s^{(\rvt)}(\vx_1,1;\vtheta)\right] \\
		&\quad +\mathbb{E}_{p(t)p_t(\vx)}\mathbb{E}_{ p(\vv)}\left[2\lambda(t)\partial_t s^{(\rvt)}(\vx,t;\vtheta)
		+2\lambda^{\prime}(t)s^{(\rvt)}(\vx,t;\vtheta) \right.\\
		&\quad \left.+\lambda(t)\|\vs^{(\rvx)}(\vx,t;\vtheta)\|_2^2
		+2\lambda(t)\vv^{\mathsf{T}}\nabla_{\vx}\vs^{(\rvx)}(\vx,t;\vtheta)\vv\right],
	\end{aligned}
\end{equation}
where 
$p(\vv)=\mathcal{N}(\vzero,\mI_d)$, and $\lambda(t)$ is a weighting function. 
The term $\partial_t s^{(\rvt)}(\vx,t;\vtheta)$ is an approximation of $\partial_t^2 \log p_t(\vx)$.
The time variable is sampled from a task-dependent distribution $t\sim p(t)$ following \citep{chen2025diffusion}. 

Our framework provides computational advantages in both training and inference.
The derivative $\partial_t s^{(\rvt)}(\vx,t;\vtheta)$ is computed via \cref{eq:score-model-and-derivative-integral}, eliminating the need for automatic differentiation over $t$ and the second-order gradients required by prior methods (e.g., DRE-$\infty$), thereby reducing optimization to a first-order problem with improved stability.
For large-scale tasks such as OOD detection, we adopt conditional score matching \citep{yu2025density}, which is more efficient while retaining the one-step inference property of OS-DRE.

\paragraph{Sampling from $p_t(\vx)$ with Path Schedules.}
To evaluate the training objective, we sample from the interpolated distribution $p_t(\vx)$ by constructing $\vx_t$ from independent samples $\vx_0\sim p_0$, $\vx_1\sim p_1$, and $t\sim p(t)$ via a task-dependent interpolation scheme.
For simple settings we use fixed schedules such as the linear or variance-preserving (VP) path; for DRE, common choices include the diffusion interpolant (DI) and dequantified diffusion bridge interpolant (DDBI).
We generally adopt DDBI, as it does not assume $p_0$ to be Gaussian \citep{chen2025dequantified}, better matching typical DRE settings.
For harder tasks such as OOD detection, we apply the minimum-variance path (MVP) principle \citep{chen2026dont} to make path parameters learnable.
MVP converts a prescribed scheme (e.g., DDBI) into a learnable one by optimizing its path parameters, yielding in our OOD experiments a learnable DDBI path that improves optimization and score calibration in Near-OOD regimes.
As these interpolation schemes are standard components rather than our main contribution, we omit their explicit formulas; details can be found in \citet{chen2026dont}.

\paragraph{Inference Protocol.}
After training, OS-DRE evaluates the log-ratio by a single pass:
\begin{equation}\label{eq:estimated-log-density-ratio}
	\log \hat{r}(\vx)= \int_{0}^{1}s^{(\rvt)}(\vx,t;\vtheta)\mathrm{d}t
        =\sum_{k=1}^{K} h_k(\vx;\vtheta^\star) \bar{g}_k
        =\left \langle \vh_{1:K}(\vx;\vtheta^\star), \bar{\vg}_{1:K} \right \rangle,
\end{equation}
where $\vh_{1:K}(\vx;\vtheta^\star)=\text{NN}(\vx;\vtheta^\star)$ (see \cref{eq:one-pass-forwardNN}).
Because $\bar{\vg}_{1:K}$ are precomputed analytic constants, evaluation requires only one network evaluation ($\text{NFE}=1$), regardless of the particular time-sampling strategy or path family used during training.

Finally, we summarize the training step and one-step estimation in
\cref{alg:os-dre-train-estimation}. 
\begin{algorithm}[ht]
\caption{OS-DRE: Training and One-Step Inference}
\label{alg:os-dre-train-estimation}
\begin{algorithmic}[1]
\renewcommand{\algorithmicrequire}{\textbf{Input:}}
\renewcommand{\algorithmicensure}{\textbf{Output:}}
\REQUIRE Distributions $p_0$, $p_1$; temporal basis $\{g_k\}_{k=1}^K$ with fixed centers $\{c_k\}_{k=1}^K$ and learnable shapes $\{\sigma_k\}_{k=1}^K$; weighting $\lambda(t)$; time distribution $p(t)$; path schedule for $\vx_t$.
\ENSURE Trained parameters $\vtheta^\star$, $\{\sigma_k^\star\}_{k=1}^K$; log-ratio estimator for $\log \hat r(\vx)$.
\vspace{0.3em}
\STATE \textbf{Training:} Initialize $\vtheta$ and $\{\sigma_k\}_{k=1}^K$.
\FOR{each training step}
    \STATE Sample $\vx_0\sim p_0$, $\vx_1\sim p_1$, $t\sim p(t)$; construct $\vx_t$ via path schedule.
    \STATE Compute coefficients $\vh_{1:K}(\vx_t;\vtheta) \leftarrow \text{NN}(\vx_t;\vtheta)$ (\cref{eq:one-pass-forwardNN}).
    \STATE Compute time score and derivative via partly analytic expressions in \cref{eq:score-model-and-derivative-integral}.
    \STATE Compute closed-form temporal integrals $\{\bar g_k\}_{k=1}^K$ from current $\{\sigma_k\}_{k=1}^K$.
    \STATE Compute loss $\mathcal{L}_{\text{joint}}$ (\cref{eq:joint-score-matching-loss}); update $\vtheta$ and $\{\sigma_k\}_{k=1}^K$.
\ENDFOR
\vspace{0.3em}
\STATE \textbf{One-Step Inference:} Precompute $\{\bar g_k^\star\}_{k=1}^K$ with trained $\{\sigma_k^\star\}_{k=1}^K$.
\STATE For query $\vx$, compute $\vh_{1:K}(\vx;\vtheta^\star)$ and substitute into \cref{eq:estimated-log-density-ratio}.
\end{algorithmic}
\end{algorithm}

\subsection{Application to Density Estimation}
\label{subsection:density-estimation}

\paragraph{Problem Setup and Evaluation Metric.}
We consider density estimation where $p_0(\vx)$ is a simple reference distribution, e.g., $\mathcal{N}(\vzero,\mI_d)$, and $p_1(\vx)$ is a complex, intractable data distribution.
The log-likelihood of $p_1$ can be written as $\log p_1(\vx)= \log r(\vx)+\log p_0(\vx)$, where $ r(\vx)={p_1(\vx)}/{p_0(\vx)}$ is the density ratio.
After training, OS-DRE provides a one-step estimate of the log-ratio
$\log \hat r(\vx)$
(cf. \cref{eq:estimated-log-density-ratio}), and we evaluate log-likelihood as
$\log p_1(\vx)\approx \log \hat r(\vx)+\log p_0(\vx)$.
We report negative log-likelihood (NLL; lower is better) on tabular datasets, and bits-per-dimension (BPD; lower is better) for energy-based modeling.

\paragraph{Density Estimation for Structured and Multimodal Benchmarks.}
We first evaluate on nine synthetic benchmarks~\citep{karras2024guiding,chen2025dequantified} covering diverse geometric challenges.
OS-DRE employs $K=400$ temporal atoms with a RQ kernel and performs inference with a single function evaluation ($\text{NFE}=1$). 
Baselines rely on numerical quadrature with varying budgets ($\text{NFE}\in\{2,5,10\}$). 
\begin{figure}[ht]
    \centering
    \includegraphics[width=\linewidth]{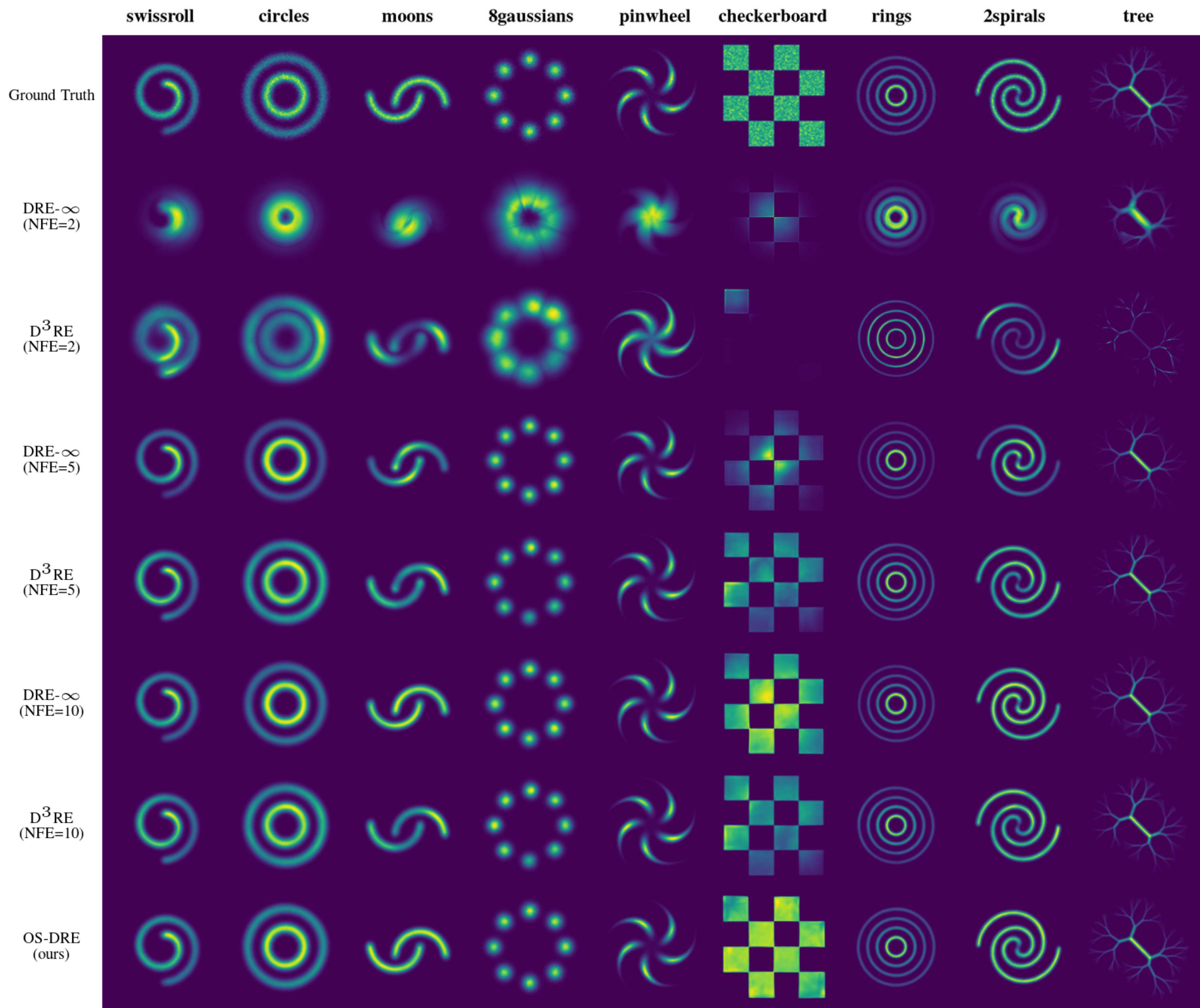}
    \caption{Comparison of density estimates on nine structured and multimodal 2D datasets. 
    DRE-$\infty$ and D$^3$RE use numerical quadrature with $\text{NFE}\in\{2,5,10\}$, whereas OS-DRE uses partly analytic one-step inference with $\text{NFE}=1$.
}
    \label{fig:likelihood-estimation-2d-synthesic-method}
\end{figure}

As visualized in \cref{fig:likelihood-estimation-2d-synthesic-method}, OS-DRE produces sharp, faithful density estimates that match or exceed baseline quality despite the lower computational cost. 
Specifically, the analytic temporal representation prevents density leakage in disconnected topologies (e.g., \textsc{circles}, \textsc{rings}) and preserves fine-grained details in thin manifolds (e.g., \textsc{swissroll}, \textsc{2spirals}) where solver-based methods often collapse structures into diffuse clouds. 
Furthermore, OS-DRE resolves sharp discontinuities (e.g., \textsc{checkerboard}) and branching topologies (e.g., \textsc{tree}) with crisp boundaries, whereas baselines exhibit excessive smoothing even at higher NFE. 
These results confirm that OS-DRE effectively reconciles accuracy with efficiency, eliminating the need for iterative integration to capture complex distributional structures.

\paragraph{Density Estimation for Real-World Tabular Benchmarks.} 
We evaluate on five tabular benchmarks: POWER, GAS, HEPMASS, MINIBOONE, and BSDS300, following \citet{papamakarios2017masked,grathwohl2018ffjord}.
OS-DRE uses $K=400$ with fixed $\text{NFE}=1$, while baselines rely on numerical quadrature and are reported across a range of NFE values (cf. \cref{tab:density-estimation-tabular}).
All timings are measured on a single  GPU.

\cref{tab:density-estimation-tabular} shows that OS-DRE achieves consistently strong NLL with minimal inference cost.
In particular, OS-DRE with IMQ/RQ kernels outperforms baselines across all datasets even when baselines use up to two orders of magnitude more function evaluations.
The advantage is especially pronounced on high-dimensional datasets (e.g., MINIBOONE and BSDS300), highlighting that replacing numerical quadrature with analytic temporal operators is crucial for stable and efficient likelihood estimation.

A notable phenomenon is that baseline performance does not necessarily improve monotonically with larger NFE
(e.g., on MINIBOONE, DRE-$\infty$ can be better at $\text{NFE}=10$ than at $\text{NFE}=50$),
suggesting practical instability when integrating complex high-dimensional scores numerically.
By contrast, OS-DRE’s one-step evaluation is deterministic given learned parameters and avoids this quadrature-induced sensitivity.
These results confirm that our partly analytic framework can reconcile estimation quality with computational efficiency.
\begin{table}[htbp]
	\centering
	\setlength{\tabcolsep}{1.3pt} 
    \caption{Density estimation performance (NLL $\downarrow$) and inference time (seconds $\downarrow$) across five tabular benchmarks. OS-DRE achieves competitive likelihoods with minimal inference cost. \textbf{Best} and \underline{fastest} results are highlighted. Timings are measured on a single NVIDIA TITAN X GPU.}
	\label{tab:density-estimation-tabular}
    \footnotesize
	\begin{tabular}{@{}lcccccc@{}} 
		\toprule
		\multirow{2}{*}{\textbf{Method}} & \multirow{2}{*}{\textbf{NFE}} & \textbf{POWER} & \textbf{GAS} & \textbf{HEPMASS} & \textbf{MINIBOONE} & \textbf{BSDS300} \\
		& & \textbf{NLL/Time} & \textbf{NLL/Time} & \textbf{NLL/Time} & \textbf{NLL/Time} & \textbf{NLL/Time} \\
		\midrule
	\multirow{6}{*}{DRE-$\infty$} 
            & 2 & $0.05 \scriptstyle\pm 1.84$ \scriptsize/0.26 & $-4.37 \scriptstyle\pm 1.44$ \scriptsize/0.18 & $19.30 \scriptstyle\pm 1.31$ \scriptsize/0.26 & $41.55 \scriptstyle\pm 2.07$ \scriptsize/0.09 & $-130.68 \scriptstyle\pm 4.17$ \scriptsize/0.47 \\
        & 5 & $0.35 \scriptstyle\pm 0.50$ \scriptsize/0.47 & $-3.63 \scriptstyle\pm 0.78$ \scriptsize/0.29 & $20.24 \scriptstyle\pm 0.47$ \scriptsize/0.46 & $20.90 \scriptstyle\pm 0.84$ \scriptsize/0.10 & $-83.70 \scriptstyle\pm 1.35$ \scriptsize/1.02 \\
        & 10 & $0.03 \scriptstyle\pm 0.17$ \scriptsize/0.85 & $-4.34 \scriptstyle\pm 0.60$ \scriptsize/0.45 & $20.43 \scriptstyle\pm 0.52$ \scriptsize/0.80 & $20.57 \scriptstyle\pm 0.93$ \scriptsize/0.11 & $-87.65 \scriptstyle\pm 2.24$ \scriptsize/1.93 \\
        & 50 & $0.25 \scriptstyle\pm 0.28$ \scriptsize/3.50 & $-4.33 \scriptstyle\pm 0.71$ \scriptsize/1.66 & $20.67 \scriptstyle\pm 0.57$ \scriptsize/3.48 & $20.97 \scriptstyle\pm 0.51$ \scriptsize/0.19 & $-90.24 \scriptstyle\pm 2.14$ \scriptsize/9.10 \\
        & 100 & $0.26 \scriptstyle\pm 0.21$ \scriptsize/6.90 & $-4.32 \scriptstyle\pm 0.69$ \scriptsize/3.16 & $20.67 \scriptstyle\pm 0.57$ \scriptsize/6.83 & $21.03 \scriptstyle\pm 0.36$ \scriptsize/0.30 & $-90.31 \scriptstyle\pm 1.91$ \scriptsize/18.12 \\
        & 200 & $0.25 \scriptstyle\pm 0.14$ \scriptsize/13.65 & $-4.33 \scriptstyle\pm 0.70$ \scriptsize/6.20 & $20.67 \scriptstyle\pm 0.57$ \scriptsize/13.50 & $21.04 \scriptstyle\pm 0.36$ \scriptsize/0.45 & $-90.31 \scriptstyle\pm 1.91$ \scriptsize/36.19 \\
        \midrule
	\multirow{6}{*}{D$^3$RE} 
            & 2 & $3.57 \scriptstyle\pm 1.84$ \scriptsize/0.27 & $5.74 \scriptstyle\pm 15.28$ \scriptsize/0.18 & $23.90 \scriptstyle\pm 0.36$ \scriptsize/0.27 & $49.37 \scriptstyle\pm 6.85$ \scriptsize/0.10 & $-149.53 \scriptstyle\pm 9.06$ \scriptsize/0.50 \\
		& 5 & $1.26 \scriptstyle\pm 0.38$ \scriptsize/0.48 & $-1.15 \scriptstyle\pm 4.20$ \scriptsize/0.28 & $21.05 \scriptstyle\pm 0.52$ \scriptsize/0.49 & $22.90 \scriptstyle\pm 2.66$ \scriptsize/0.10 & $-101.97 \scriptstyle\pm 1.67$ \scriptsize/1.03 \\
		& 10 & $0.49 \scriptstyle\pm 0.39$ \scriptsize/0.80 & $-3.27 \scriptstyle\pm 2.00$ \scriptsize/0.43 & $20.30 \scriptstyle\pm 0.55$ \scriptsize/0.83 & $23.03 \scriptstyle\pm 2.85$ \scriptsize/0.12 & $-102.01 \scriptstyle\pm 2.43$ \scriptsize/1.91 \\
		& 50 & $0.89 \scriptstyle\pm 0.33$ \scriptsize/3.49 & $-3.16 \scriptstyle\pm 0.62$ \scriptsize/1.63 & $20.05 \scriptstyle\pm 0.35$ \scriptsize/3.50 & $22.85 \scriptstyle\pm 2.54$ \scriptsize/0.20 & $-78.26 \scriptstyle\pm 0.96$ \scriptsize/9.09 \\
        & 100 & $-0.24 \scriptstyle\pm 0.27$ \scriptsize/6.87 & $-4.10 \scriptstyle\pm 0.69$ \scriptsize/3.18 & $19.45 \scriptstyle\pm 0.46$ \scriptsize/6.86 & $22.91 \scriptstyle\pm 2.58$ \scriptsize/0.28 & $-74.61 \scriptstyle\pm 1.79$ \scriptsize/18.07 \\
        & 200 & $-0.24 \scriptstyle\pm 0.29$ \scriptsize/13.60 & $-4.09 \scriptstyle\pm 0.72$ \scriptsize/6.26 & $19.45 \scriptstyle\pm 0.48$ \scriptsize/13.54 & $22.90 \scriptstyle\pm 2.84$ \scriptsize/0.48 & $-74.47 \scriptstyle\pm 1.42$ \scriptsize/36.27 \\
		\midrule
	\makecell[c]{OS-DRE \\ \scriptsize(Matérn)} & 1 & $0.57 \scriptstyle\pm 0.11$ \scriptsize/0.08 & $-3.49 \scriptstyle\pm 0.01$ \scriptsize/\underline{0.03} & $23.66 \scriptstyle\pm 0.02$ \scriptsize/0.06 & $31.71 \scriptstyle\pm 0.11$ \scriptsize/\underline{0.003} & $-52.38 \scriptstyle\pm 0.42$ \scriptsize/0.07 \\
	\makecell[c]{OS-DRE \\ \scriptsize(Gaussian)} & 1 & $-0.35 \scriptstyle\pm 0.10$ \scriptsize/0.10 & $-16.39 \scriptstyle\pm 0.17$ \scriptsize/0.04 & $17.44 \scriptstyle\pm 0.00$ \scriptsize/0.12 & $10.95 \scriptstyle\pm 0.33$ \scriptsize/0.005 & $-191.22 \scriptstyle\pm 3.19$ \scriptsize/0.08 \\
	\makecell[c]{OS-DRE \\ \scriptsize(IMQ)} & 1 & $\mathbf{-0.69} \scriptstyle\pm 0.18$ \scriptsize/0.08 & $\mathbf{-18.33} \scriptstyle\pm 0.04$ \scriptsize/0.04 & $17.45 \scriptstyle\pm 0.05$ \scriptsize/0.07 & $\mathbf{9.97} \scriptstyle\pm 0.37$ \scriptsize/0.005 & $\mathbf{-217.99} \scriptstyle\pm 3.39$ \scriptsize/\underline{0.07} \\
	\makecell[c]{OS-DRE \\ \scriptsize(RQ)} & 1 & $-0.66 \scriptstyle\pm 0.17$ \scriptsize/\underline{0.08} & $-17.86 \scriptstyle\pm 0.03$ \scriptsize/0.04 & $\mathbf{16.88} \scriptstyle\pm 0.03$ \scriptsize/\underline{0.05} & $11.34 \scriptstyle\pm 0.28$ \scriptsize/\underline{0.003} & $-201.37 \scriptstyle\pm 2.21$ \scriptsize/0.07 \\
		\bottomrule 
	\end{tabular}
\end{table}




\subsection{Application to Continual Kullback-Leibler Divergence Estimation}  
\paragraph{Problem Setup and Evaluation Metric.}
We evaluate OS-DRE on continual Kullback-Leibler (KL) divergence estimation under online distribution-shift scenarios, a setting highly relevant to real-time change-point detection~\citep{chen2021continual} and continuous covariate-shift adaptation~\citep{zhang2023adapting}. 
In this setup, the target distribution evolves over discrete steps $t=0,1,2,\dots$, while the source distribution $p_0$ remains fixed. 
The goal is to continuously estimate the KL divergence from $p_0$ to the current target $p_t$ in real time: $\KL(p_t\|p_0)=\mathbb{E}_{\vx\sim p_t}\left[\log\frac{p_t(\vx)}{p_0(\vx)}\right]$, where the inner log-ratio is directly approximated by DRE. 

We consider three controlled benchmarks with a fixed source $p_0(\vx)=\mathcal{N}(\vzero,\mI_d)$ that isolate different types of nonstationarity: 
(i) linear drift, where the target $p_t=\mathcal{N}(\vmu_t,\mSigma_t)$ has linearly shifting mean and covariance, $\vmu_t = t\Delta\vmu$ and $\mSigma_t = (1 - t\Delta\sigma)\mI_d$ with constant drift rates $\Delta\vmu$ and $\Delta\sigma$; 
(ii) noise corruption, motivated by~\citet{hendrycks2018benchmarking}, where variance inflates progressively with $\vmu_t=\vzero$ and $\mSigma_t=(1 + t\Delta\sigma^2)\mI_d$; 
and (iii) controlled divergence, which enforces a prescribed KL increase $\KL(p_t\|p_0)=t$ by setting $\|\vmu_t\|=\sqrt{2t}$ with random directions~\citep{zhang2023adapting}.

\paragraph{Continual KL Divergence Estimation.}
\cref{fig:distributon-drift-ds} shows KL divergence estimation results across three benchmarks.
With only $\text{NFE}=1$, OS-DRE produces stable, accurate estimates that closely follow the ground truth, whereas solver-based baselines remain unreliable even at $\text{NFE}=50$.
In particular, OS-DRE tracks the drift horizon smoothly (\cref{fig:linearly-drifting-gaussian}), detects discrete change points clearly (\cref{fig:controlled-divergence-shift}), and maintains competitive accuracy as corruption increases, where baselines suffer from systematic underestimation and high variance (\cref{fig:progressive-noise-corruption}).
The key reason is that OS-DRE computes the density ratio analytically rather than relying on numerical solvers, avoiding the error accumulation that makes solver-based methods unstable under tight inference budgets.
\begin{figure}[!ht]
	\centering
	\begin{subfigure}[b]{0.32\linewidth}
		\centering
		\includegraphics[width=\linewidth]{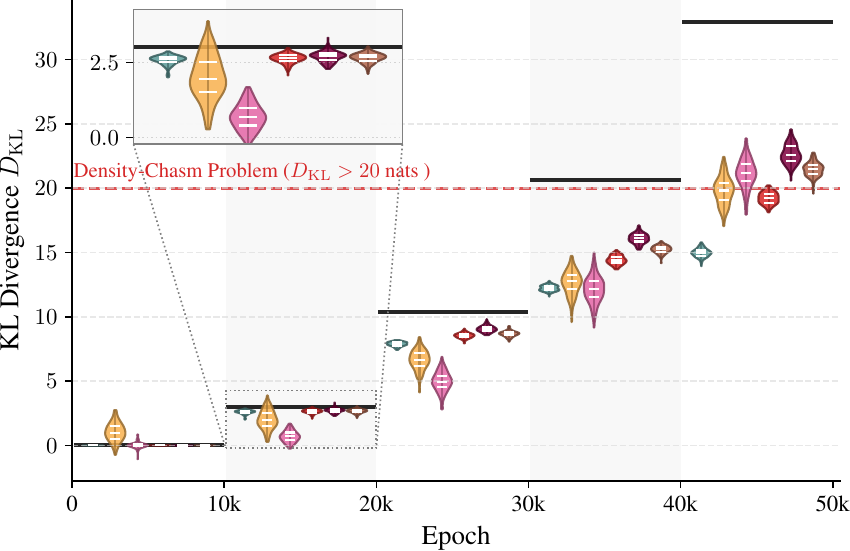}
		\caption{Linear drift.}
		\label{fig:linearly-drifting-gaussian}
	\end{subfigure}
	\hfill
	\begin{subfigure}[b]{0.32\linewidth}
		\centering
		\includegraphics[width=\linewidth]{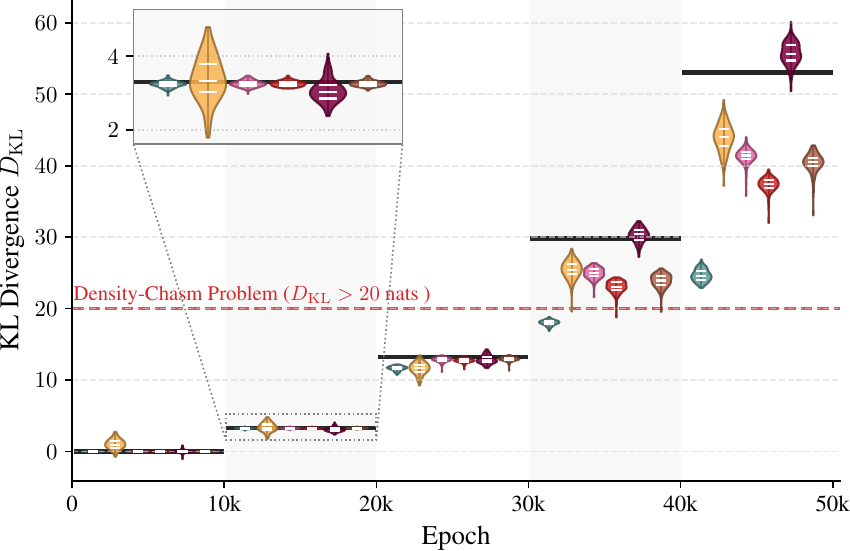}
		\caption{Noise corruption.}
		\label{fig:progressive-noise-corruption}
	\end{subfigure}
	\hfill
	\begin{subfigure}[b]{0.32\linewidth}
		\centering
		\includegraphics[width=\linewidth]{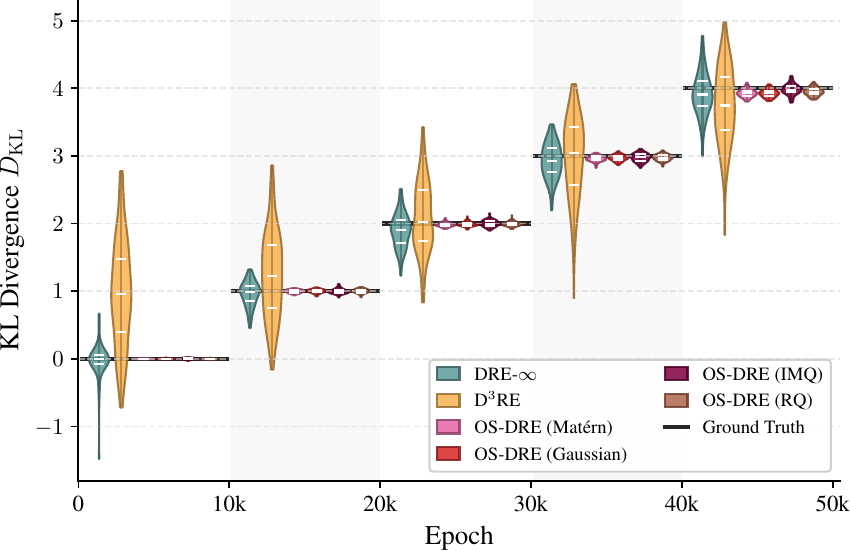}
		\caption{Controlled divergence.}
		\label{fig:controlled-divergence-shift}
	\end{subfigure}
    \caption{Real-time KL divergence estimation under distribution shifts. OS-DRE ($\mathrm{NFE}=1$) stably and accurately tracks the ground truth (black), whereas quadrature-based baselines ($\mathrm{NFE}=50$) suffer from noticeable lag during large distributional changes.}
	\label{fig:distributon-drift-ds}
\end{figure}

\vspace{-5mm}
\subsection{Application to Mutual Information Estimation}
\label{subsec:mi}

\paragraph{Problem Setup and Evaluation Metric.}
We estimate mutual information (MI) between two random variables $\rvx\sim p(\vx)$ and $\rvy\sim q(\vy)$ via DRE by treating the joint distribution $p_1(\vx,\vy)\triangleq p(\vx,\vy)$ as the target and the product of marginals $p_0(\vx,\vy)\triangleq p(\vx)\,q(\vy)$ as the reference. 
The MI is then $\text{MI}(\rvx,\rvy)=\mathbb{E}_{(\vx,\vy)\sim p_1}[\log r(\vx,\vy)]$, where $r(\vx,\vy)=p_1(\vx,\vy) / p_0(\vx,\vy)$. After training OS-DRE to obtain $\log \hat r(\vx,\vy)$, we approximate MI by Monte Carlo over held-out joint samples. Negative samples from $p_0$ are obtained by shuffling within minibatches, while positive pairs are drawn from the true joint \citep{choi2022density}. We report mean squared error (MSE) against ground-truth MI.


\paragraph{Geometrically Pathological Distributions.}
We evaluate OS-DRE on four MI estimation benchmarks with geometrically pathological distributions~\citep{czyz2023beyond}, featuring heavy tails (Half-Cube Map), sharp curvature (Asinh Mapping), non-smooth boundaries (Additive Noise), and asymmetric nonlinear dependencies (Gamma--Exponential). 
These properties violate the local smoothness assumptions of many standard estimators. 

As shown in \cref{fig:mi_mse_pathological}, OS-DRE achieves consistently lower MSE than solver-based baselines across all dependency levels while using only $\text{NFE}=1$. Notably, the Mat\'ern kernel often provides the best fit for asymmetric structures like Gamma--Exponential, while infinitely smooth kernels (Gaussian/RQ) remain competitive on moderate dependencies. These results confirm that OS-DRE's partly analytic temporal representation is robust to geometric pathologies that destabilize numerical quadrature.
\begin{figure}[ht]
    \centering
    \includegraphics[width=\linewidth]{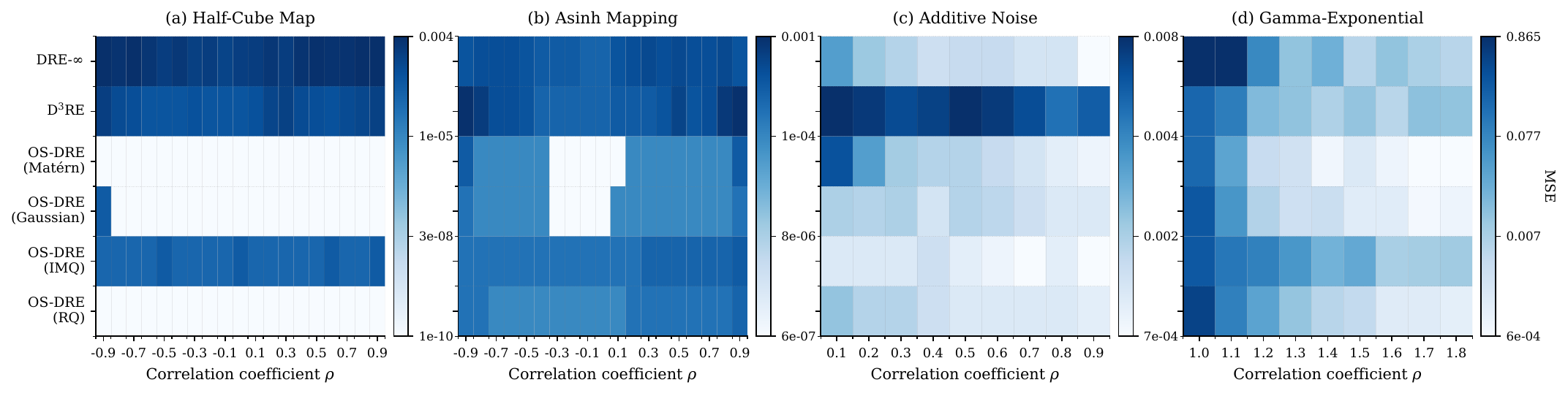}
    \vspace{-2mm}
    \caption{MI estimation error (MSE $\downarrow$) on four geometrically pathological distributions \citep{czyz2023beyond} across varying dependency levels. OS-DRE ($\mathrm{NFE}=1$) consistently yields favorable performance compared to solver-based baselines ($\mathrm{NFE}=100$), demonstrating robust estimation capabilities under complex geometric topologies.}
    \label{fig:mi_mse_pathological}
\end{figure}

Beyond DRE-based baselines, we compare OS-DRE with dedicated MI estimators (KSG \citep{kraskov2004estimating}, MINE~\citep{belghazi2018mutual}, InfoNet~\citep{hu2024infonet}) on the complex settings curated in~\citet{hu2024infonet}. 
Results are presented in \cref{fig:mutual_information_noise_2d}.
On the remaining Gaussian and pathological distributions (\cref{fig:mutual_information_noise_2d}), OS-DRE yields estimates closely matching the ground truth, achieving lower mean absolute error (MAE). 
This demonstrates that our partly analytic, one-step ratio estimation provides a robust alternative where existing estimators struggle with stability in nontrivial geometries.
\begin{figure}[ht]
	\centering
	\begin{subfigure}[b]{0.24\linewidth}
		\centering
		\includegraphics[width=\linewidth]{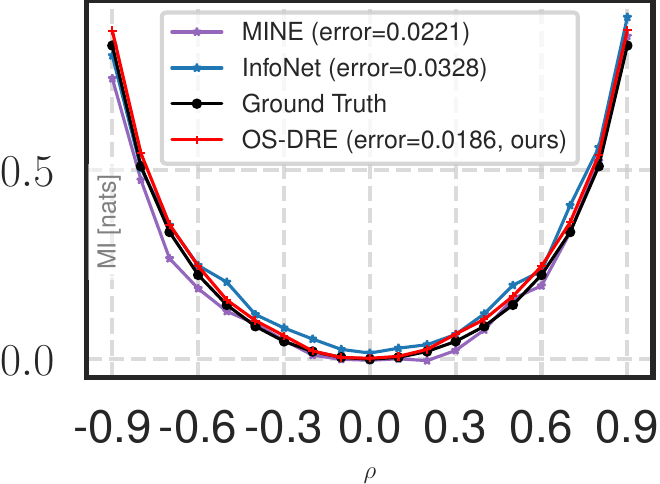}
		\caption{Gauss}
		\label{fig:mutual_information_noise_gauss}
	\end{subfigure}
        \begin{subfigure}[b]{0.24\linewidth}
		\centering
		\includegraphics[width=\linewidth]{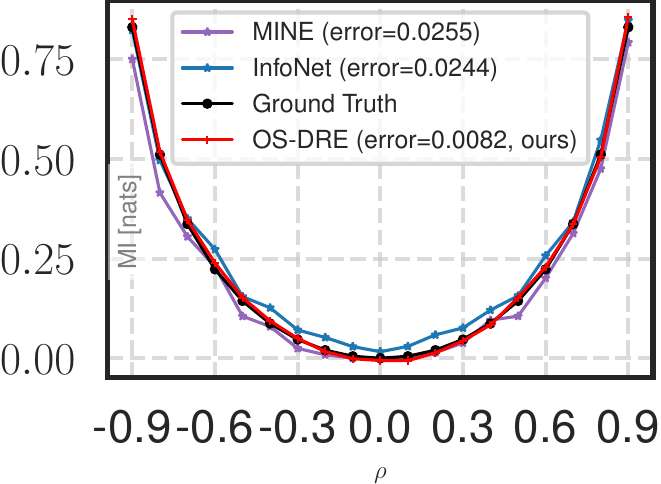}
		\caption{Half-Cube Map}
		\label{fig:mutual_information_noise_half_cube_map}
	\end{subfigure}
	\begin{subfigure}[b]{0.24\linewidth}
		\centering
		\includegraphics[width=\linewidth]{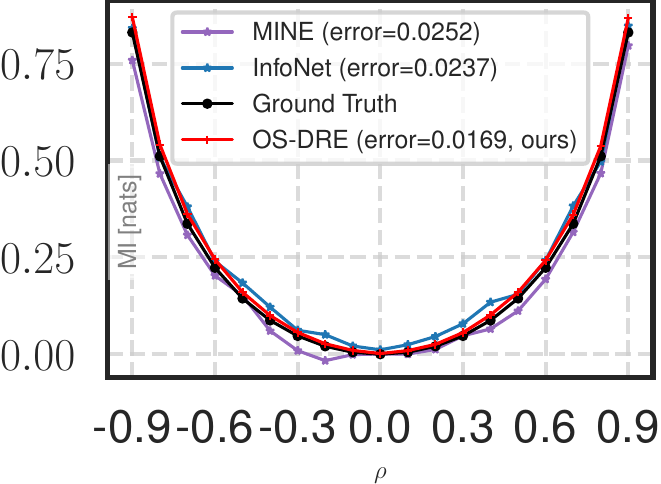}
		\caption{Asinh Mapping}
		\label{fig:mutual_information_noise_asinh_mapping}
	\end{subfigure}
	\begin{subfigure}[b]{0.24\linewidth}
		\centering
		\includegraphics[width=\linewidth]{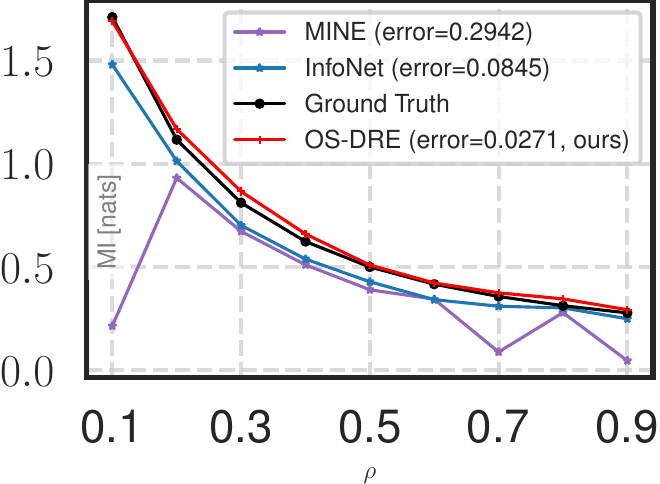}
		\caption{Additive Noise}
		\label{fig:mutual_information_noise_additive_noise}
	\end{subfigure}
    \vspace{-2mm}
        \caption{MI estimates and MAE ($\downarrow$) under the complex dependencies of \citet{hu2024infonet}. OS-DRE provides highly accurate and robust estimates.}
        \label{fig:mutual_information_noise_2d}
\end{figure}

\paragraph{MI Estimation under High-Discrepancy Settings.}
We further evaluate OS-DRE under high-discrepancy settings known to induce the ``density-chasm'' problem ( $\ge 20$ nats~\citep{rhodes2020telescoping}). Following~\citet{choi2022density}, we consider $p_0=\mathcal{N}(\vzero,\mI_d)$ and $p_1=\mathcal{N}(\vzero,\mSigma)$ with block-diagonal $\mSigma$ (pairwise correlation $\rho=0.5$), varying dimension $d\in\{40,80,120,160\}$ to obtain ground-truth MI $\in\{10,20,30,40\}$ nats. 
\begin{table}[ht]
	\centering
	\setlength{\tabcolsep}{2pt}  
	\small
        \caption{MI estimation (MSE $\downarrow$) and inference time (seconds $\downarrow$) in high-discrepancy regimes ($\ge 20$ nats) \citep{rhodes2020telescoping}. OS-DRE successfully mitigates the density-chasm problem, delivering competitive estimation quality with low inference cost.
        \textbf{Best} and \underline{fastest} results are highlighted. Timings are measured on a single NVIDIA TITAN X GPU.
        }
    \begin{tabular}{@{}lccccccccccc@{}} 
        \toprule
        \multirow{2}{*}{\textbf{Method}} & \multirow{2}{*}{\textbf{NFE}} & \multirow{2}{*}{\textbf{Kernel}} & \multicolumn{2}{c}{\textbf{MI = 10}} & \multicolumn{2}{c}{\textbf{MI = 20}} & \multicolumn{2}{c}{\textbf{MI = 30}} & \multicolumn{2}{c}{\textbf{MI = 40}} \\
        \cmidrule(lr){4-5} \cmidrule(lr){6-7} \cmidrule(lr){8-9} \cmidrule(lr){10-11}
        & & & \textbf{MSE} & \textbf{Time} & \textbf{MSE} & \textbf{Time} & \textbf{MSE} & \textbf{Time} & \textbf{MSE} & \textbf{Time} \\
        \midrule
        \multirow{6}{*}{DRE-$\infty$} 
            & $2$ & -- & $73.91$ & $0.045$ & $283.52$ & $0.045$ & $614.62$ & $0.045$ & $1215.69$ & $0.046$ \\
            & $5$ & -- & $2.86$ & $0.055$ & $7.09$ & $0.060$ & $25.29$ & $0.068$ & $70.36$ & $0.057$ \\
            & $10$ & -- & $0.27$ & $0.075$ & $0.54$ & $0.100$ & $2.66$ & $0.085$ & $7.08$ & $0.083$ \\
            & $50$ & -- & $0.03$ & $0.226$ & $0.04$ & $0.249$ & $0.48$ & $0.228$ & $3.77$ & $0.271$ \\
            & $100$ & -- & $0.02$ & $0.475$ & $0.01$ & $0.493$ & $0.50$ & $0.478$ & $3.31$ & $0.554$ \\
            & $200$ & -- & $0.02$ & $0.819$ & $0.01$ & $0.954$ & $0.50$ & $0.879$ & $3.31$ & $0.956$ \\
        \midrule
        \multirow{6}{*}{D$^3$RE} 
            & $2$ & -- & $2.58$ & $0.048$ & $3.65$ & $0.047$ & $6.21$ & $0.046$ & $500.04$ & $0.044$ \\
            & $5$ & -- & $0.01$ & $0.056$ & $0.29$ & $0.061$ & $7.50$ & $0.058$ & $76.80$ & $0.058$ \\
            & $10$ & -- & $0.02$ & $0.075$ & $0.21$ & $0.094$ & $7.72$ & $0.076$ & $59.70$ & $0.080$ \\
            & $50$ & -- & $0.01$ & $0.234$ & $0.09$ & $0.256$ & $8.94$ & $0.256$ & $58.19$ & $0.260$ \\
            & $100$ & -- & $\mathbf{0.00}$ & $0.498$ & $0.08$ & $0.546$ & $9.00$ & $0.487$ & $58.20$ & $0.514$ \\
            & $200$ & -- & $\mathbf{0.00}$ & $0.816$ & $0.05$ & $0.922$ & $9.36$ & $0.907$ & $57.28$ & $0.955$ \\
        \midrule
        \multirow{4}{*}{\shortstack[c]{OS-DRE \\ (ours)}} 
            & $1$ & Matérn & $0.09$ & $0.024$ & $18.30$ & $0.028$ & $208.98$ & $0.032$ & $456.11$ & $0.028$ \\
            & $1$ & Gaussian & $0.01$ & $0.025$ & $\mathbf{0.00}$ & $0.027$ & $\mathbf{0.07}$ & $0.013$ & $2.30$ & $\underline{0.014}$ \\
            & $1$ & IMQ & $0.11$ & $0.035$ & $1.56$ & $0.030$ & $5.86$ & $0.029$ & $\mathbf{0.47}$ & $0.028$ \\
            & $1$ & RQ & $0.03$ & $\underline{0.022}$ & $0.83$ & $\underline{0.012}$ & $1.52$ & $\underline{0.012}$ & $1.41$ & $0.019$ \\
        \bottomrule
    \end{tabular}
\label{tab:MI-high-discrepancy}
\end{table}

As reported in \cref{tab:MI-high-discrepancy}, solver-based baselines  underestimate MI at low inference budgets ($\text{NFE}=2$) and exhibit instability even at $\text{NFE}=50$. In contrast, OS-DRE remains accurate and stable with $\text{NFE}=1$. Kernel choice reveals a clear inductive bias: infinitely smooth kernels (Gaussian/IMQ) are robust to increasing discrepancy, while the limited-smoothness Mat\'ern kernel degrades at higher dimensions. Notably, Gaussian case matches or surpasses DRE-$\infty$ at $\text{NFE}=50$ using a single partly analytic step for MI $\le 30$ nats. These results confirm that obviating the need for iterative integration mitigates error accumulation in the density-chasm regime, enabling reliable MI estimate under strict inference budgets.

\subsection{Application to Near Out-of-Distribution Detection}

\paragraph{Problem Setup and Evaluation Metrics.}
We study image OOD detection via DRE, following \citep{ren2019likelihood,zhang2022ood}.
Let $p_1$ be the in-distribution (ID) and $p_0$ an auxiliary reference.
The OOD score $\log \hat r(\vx)\approx \log p_1(\vx)-\log p_0(\vx)$ is obtained by a single forward pass at inference, with smaller values indicating more OOD-like inputs.
Our detector is trained end-to-end in pixel space without generative pretraining or external features.
We report area under the receiver operating characteristic (AUROC), false positive rate at $95$\% true positive rate (FPR@$95$) and area under the precision-recall curve (AUPR)-IN/OUT.
Unless stated otherwise, $p_0$ is a $300$K cleaned subset of the $80$ Million Tiny Images dataset \citep{torralba200880,zhang2022ood}.
Baselines include neural DRE methods (ChiSq \citep{kanamori2009least}, Logistic \citep{nowozin2016f}, NCE \citep{gutmann2012noise}, InfoNCE \citep{oord2018representation}, PW-DRE \citep{zellinger2025binary}) and score-based DRE methods.

\paragraph{Challenging Near-OOD Detection.}
Following OpenOOD \citep{yang2022openood,zhang2023openood}, we evaluate on CIFAR-$10$ and CIFAR-$100$ \citep{cifar-100}. For CIFAR-$10$, CIFAR-$100$ and TinyImageNet \citep[TIN]{le2015tiny} serve as Near-OOD, and MNIST \citep{deng2012mnist} and SVHN \citep{svhn} as Far-OOD. For CIFAR-$100$, CIFAR-$10$ and TIN are Near-OOD, with MNIST and SVHN as Far-OOD. 
OS-DRE consistently outperforms neural DRE baselines: while methods such as ChiSq, Logistic, and InfoNCE fail to exceed $0.60$ AUROC on CIFAR-$10$, OS-DRE achieves $0.89$ overall ($0.86$ Near-OOD, $0.92$ Far-OOD), with similar gains on CIFAR-$100$.
\begin{table}[ht]
\centering
\caption{OOD detection performance on CIFAR-10 and CIFAR-100 (ID). Metrics are reported as mean $\pm$ std over 3 random seeds. Arrows indicate whether higher ($\uparrow$) or lower ($\downarrow$) values are better. Inference time is measured on a single NVIDIA RTX A6000 GPU.}
\label{tab:ood}
{\small  
\setlength{\tabcolsep}{2pt}
\begin{tabular}{lrrrrrrrr}
\toprule
Method & \rotatebox{45}{Near-AUROC$\uparrow$} & \rotatebox{45}{Far-AUROC$\uparrow$} & \rotatebox{45}{AUROC$\uparrow$} & \rotatebox{45}{FPR@$95$$\downarrow$} & \rotatebox{45}{AUPR-IN$\uparrow$} & \rotatebox{45}{AUPR-OUT$\uparrow$} & \rotatebox{45}{Params$\downarrow$} & \rotatebox{45}{Time$\downarrow$} \\
\midrule
\multicolumn{9}{c}{\textit{CIFAR-10}} \\
\midrule
ChiSq & $0.53_{\scriptstyle\pm0.00}$ & $0.59_{\scriptstyle\pm0.16}$ & $0.56_{\scriptstyle\pm0.08}$ & $0.87_{\scriptstyle\pm0.13}$ & $0.47_{\scriptstyle\pm0.09}$ & $0.67_{\scriptstyle\pm0.03}$ & $11.2$M & $38.4_{\scriptstyle\pm0.6}$ \\
Logistic & $0.53_{\scriptstyle\pm0.02}$ & $0.55_{\scriptstyle\pm0.05}$ & $0.54_{\scriptstyle\pm0.02}$ & $0.92_{\scriptstyle\pm0.04}$ & $0.41_{\scriptstyle\pm0.02}$ & $0.67_{\scriptstyle\pm0.02}$ & $11.2$M & $38.9_{\scriptstyle\pm1.3}$ \\
NCE & $0.53_{\scriptstyle\pm0.02}$ & $0.55_{\scriptstyle\pm0.05}$ & $0.54_{\scriptstyle\pm0.02}$ & $0.92_{\scriptstyle\pm0.04}$ & $0.41_{\scriptstyle\pm0.02}$ & $0.67_{\scriptstyle\pm0.02}$ & $11.2$M & $39.7_{\scriptstyle\pm1.7}$ \\
InfoNCE & $0.54_{\scriptstyle\pm0.01}$ & $0.65_{\scriptstyle\pm0.06}$ & $0.59_{\scriptstyle\pm0.03}$ & $0.93_{\scriptstyle\pm0.01}$ & $0.49_{\scriptstyle\pm0.01}$ & $0.69_{\scriptstyle\pm0.01}$ & $11.2$M & $40.1_{\scriptstyle\pm0.7}$ \\
PW-DRE & $0.54_{\scriptstyle\pm0.01}$ & $0.56_{\scriptstyle\pm0.10}$ & $0.55_{\scriptstyle\pm0.05}$ & $0.93_{\scriptstyle\pm0.05}$ & $0.45_{\scriptstyle\pm0.04}$ & $0.67_{\scriptstyle\pm0.03}$ & $11.2$M & $39.9_{\scriptstyle\pm0.7}$ \\
DRE-$\infty$ & $0.59_{\scriptstyle\pm0.00}$ & $0.73_{\scriptstyle\pm0.02}$ & $0.66_{\scriptstyle\pm0.01}$ & $0.89_{\scriptstyle\pm0.05}$ & $0.47_{\scriptstyle\pm0.05}$ & $0.72_{\scriptstyle\pm0.01}$ & $11.2$M & $284.4_{\scriptstyle\pm0.4}$ \\
D$^3$RE & $0.85_{\scriptstyle\pm0.00}$ & $0.91_{\scriptstyle\pm0.03}$ & $0.88_{\scriptstyle\pm0.01}$ & $0.46_{\scriptstyle\pm0.04}$ & $0.76_{\scriptstyle\pm0.05}$ & $\mathbf{0.92}_{\scriptstyle\pm0.00}$ & $11.2$M & $285.5_{\scriptstyle\pm0.8}$ \\
\rowcolor{gray!10} OS-DRE & $\mathbf{0.86}_{\scriptstyle\pm0.02}$ & $\mathbf{0.92}_{\scriptstyle\pm0.01}$ & $\mathbf{0.89}_{\scriptstyle\pm0.01}$ & $\mathbf{0.44}_{\scriptstyle\pm0.06}$ & $\mathbf{0.79}_{\scriptstyle\pm0.03}$ & $0.92_{\scriptstyle\pm0.01}$ & $11.4$M & \underline{$37.6_{\scriptstyle\pm0.7}$} \\
\midrule
\multicolumn{9}{c}{\textit{CIFAR-100}} \\
\midrule
ChiSq & $0.52_{\scriptstyle\pm0.03}$ & $0.51_{\scriptstyle\pm0.18}$ & $0.51_{\scriptstyle\pm0.10}$ & $0.90_{\scriptstyle\pm0.06}$ & $0.41_{\scriptstyle\pm0.07}$ & $0.65_{\scriptstyle\pm0.03}$ & $11.2$M & $42.4_{\scriptstyle\pm0.8}$ \\
Logistic & $0.52_{\scriptstyle\pm0.01}$ & $0.51_{\scriptstyle\pm0.04}$ & $0.52_{\scriptstyle\pm0.02}$ & $0.96_{\scriptstyle\pm0.00}$ & $0.41_{\scriptstyle\pm0.01}$ & $0.65_{\scriptstyle\pm0.00}$ & $11.2$M & $41.8_{\scriptstyle\pm0.7}$ \\
NCE & $0.52_{\scriptstyle\pm0.01}$ & $0.51_{\scriptstyle\pm0.04}$ & $0.52_{\scriptstyle\pm0.02}$ & $0.96_{\scriptstyle\pm0.00}$ & $0.41_{\scriptstyle\pm0.01}$ & $0.65_{\scriptstyle\pm0.00}$ & $11.2$M & $42.7_{\scriptstyle\pm0.7}$ \\
InfoNCE & $0.52_{\scriptstyle\pm0.01}$ & $0.57_{\scriptstyle\pm0.12}$ & $0.55_{\scriptstyle\pm0.06}$ & $0.93_{\scriptstyle\pm0.04}$ & $0.46_{\scriptstyle\pm0.04}$ & $0.65_{\scriptstyle\pm0.03}$ & $11.2$M & $42.1_{\scriptstyle\pm1.5}$ \\
PW-DRE & $0.53_{\scriptstyle\pm0.01}$ & $0.56_{\scriptstyle\pm0.10}$ & $0.54_{\scriptstyle\pm0.05}$ & $0.91_{\scriptstyle\pm0.05}$ & $0.42_{\scriptstyle\pm0.02}$ & $0.65_{\scriptstyle\pm0.02}$ & $11.2$M & $42.2_{\scriptstyle\pm1.7}$ \\
DRE-$\infty$ & $0.51_{\scriptstyle\pm0.00}$ & $\mathbf{0.71}_{\scriptstyle\pm0.02}$ & $0.61_{\scriptstyle\pm0.01}$ & $\mathbf{0.84}_{\scriptstyle\pm0.09}$ & $\mathbf{0.51}_{\scriptstyle\pm0.05}$ & $0.67_{\scriptstyle\pm0.00}$ & $11.2$M & $283.6_{\scriptstyle\pm0.9}$ \\
D$^3$RE & $\mathbf{0.61}_{\scriptstyle\pm0.00}$ & $0.69_{\scriptstyle\pm0.01}$ & $0.65_{\scriptstyle\pm0.01}$ & $0.90_{\scriptstyle\pm0.01}$ & $0.47_{\scriptstyle\pm0.02}$ & $\mathbf{0.73}_{\scriptstyle\pm0.01}$ & $11.2$M & $285.4_{\scriptstyle\pm1.9}$ \\
\rowcolor{gray!10} OS-DRE & $0.61_{\scriptstyle\pm0.01}$ & $0.70_{\scriptstyle\pm0.01}$ & $\mathbf{0.65}_{\scriptstyle\pm0.00}$ & $0.90_{\scriptstyle\pm0.01}$ & $0.49_{\scriptstyle\pm0.02}$ & $0.72_{\scriptstyle\pm0.01}$ & $11.4$M & \underline{$36.6_{\scriptstyle\pm0.6}$} \\
\bottomrule
\end{tabular}
}
\end{table}

Neural DRE methods, such as ChiSq, Logistic, and InfoNCE, struggle to exceed $0.60$ in AUROC on CIFAR-$10$, whereas OS-DRE achieves an AUROC of $0.89$, alongside $0.86$ on Near-OOD and $0.92$ on Far-OOD tasks. This performance advantage persists on the more challenging CIFAR-$100$ dataset. 
Furthermore, score-based methods (DRE-$\infty$, D$^3$RE) incur over $283$s inference latency due to numerical solvers; OS-DRE completes evaluation in $\sim$$37$s (${\sim}7.5\times$ speedup) while matching their estimation quality. 

\paragraph{Score Distribution Characteristics.}
The behavior of OS-DRE can be further understood by examining the distribution of OOD scores. 

\cref{fig:density-of-ood-scores-cifar100} shows kernel density estimates on CIFAR-$100$.
For Near-OOD, neural DRE methods and DRE-$\infty$ produce limited score shifts with considerable ID/OOD overlap. D$^3$RE and OS-DRE yield clearer separation, with ID scores concentrated in higher regions and Near-OOD scores shifted lower.
Specifically, ID samples concentrate in a higher-score region, whereas Near-OOD samples are shifted toward lower-score ranges.
For Far-OOD, all methods show improved separation, but D$^3$RE and OS-DRE maintain more consistent patterns.
Overall, the difference in score distribution centers and shapes provides a reliable basis for anomaly detection across both regimes.
\begin{figure}[!ht]
    \centering
    \includegraphics[width=0.9\textwidth]{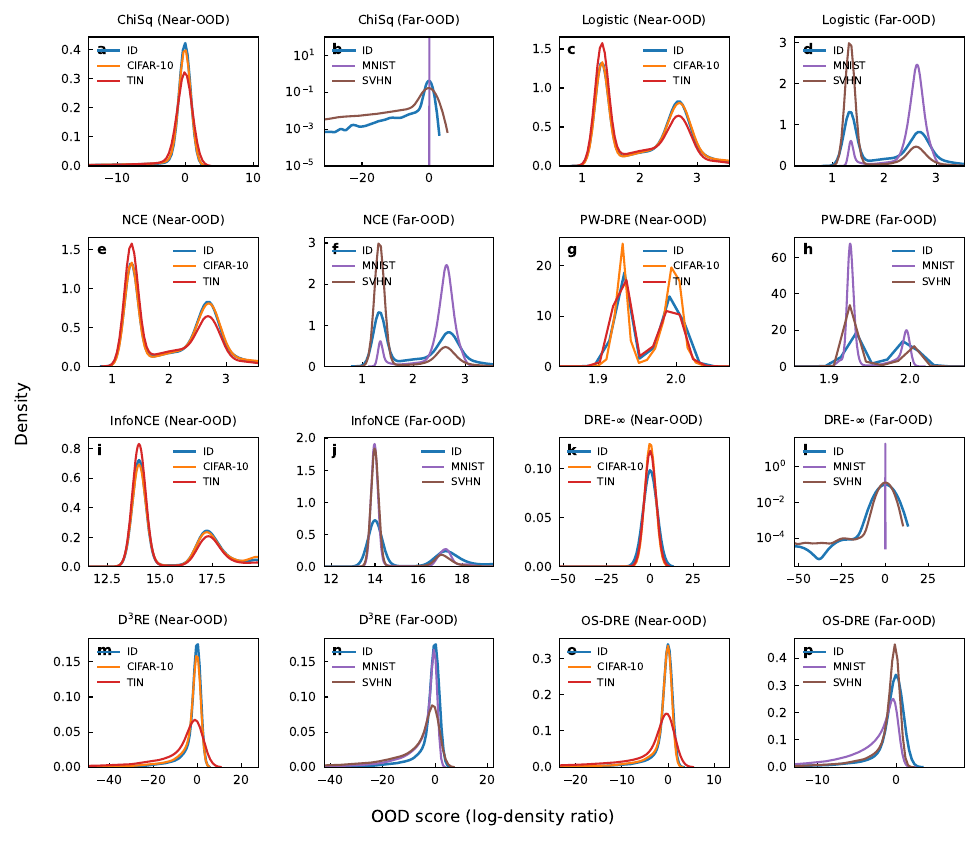}
    \vspace{-3mm}
    \caption{Densities of OOD scores for ID CIFAR-100 (blue) vs. Near- and Far-OOD datasets. 
    OS-DRE and D$^3$RE exhibit clearer distributional discrepancies.}
    \label{fig:density-of-ood-scores-cifar100}
\end{figure}

\subsection{Ablation Studies}
Proposition~\ref{proposition:error-bound} highlights two key terms governing the trade-off between estimation quality and computational efficiency: the number of basis functions $K$ and the kernel $\phi$.

\paragraph{Number of Basis Functions ($K$).}
We vary $K \in \{100, 200, 400, 800\}$.
On GAS, performance improves up to $K=400$ (NLL: $-14.51 \to -16.39$) but degrades at $K=800$ ($-11.12$) due to overfitting.
For MI estimation, results stabilize once $K$ is sufficient (e.g., MSE at $\text{MI}=40$ with IMQ: $0.55 \to 0.47$), indicating diminishing returns.
We use $K=400$ for tabular data and $K=200$ for pathological distributions.

\paragraph{Choice of RBF Kernel ($\phi$).} 
Among four kernels (Gaussian, IMQ, RQ, Matérn), IMQ and RQ are suitable general-purpose choices, performing competitively across density estimation (\cref{tab:density-estimation-tabular}), MI estimation (\cref{tab:MI-high-discrepancy}), and continual KL divergence estimation (\cref{fig:distributon-drift-ds}).
The Gaussian kernel performs well on GAS and BSDS300.
The Matérn kernel, with its limited smoothness, excels on geometrically pathological tasks (e.g., Gamma--Exponential) where its inductive bias better matches the target regularity.

\paragraph{Quality-Efficiency Trade-off.}
In score-based DRE, overall error is often dominated by numerical integration bias at low NFE rather than score estimation error.
OS-DRE eliminates this bottleneck via closed-form estimation: as shown in \cref{fig:accuracy-vs-efficiency}, it matches or surpasses DRE-$\infty$ and D$^3$RE at $\text{NFE}=1$.
On density estimation (\cref{tab:density-estimation-tabular,fig:accuracy-vs-efficiency-tabular}), it achieves comparable or better NLL across all five tabular datasets, while baselines require $\text{NFE}\in[2,50]$ for similar performance (${\sim}50\times$ reduction).
A similar pattern holds for MI estimation (\cref{tab:MI-high-discrepancy,fig:accuracy-vs-efficiency-high-discrepancy}), where OS-DRE attains near-zero MSE at $\text{NFE}=1$ for $\text{MI}\in\{10,20,30\}$.
These results confirm that OS-DRE maintains high estimation quality without costly numerical integration, making it well suited for real-time applications.

\begin{figure}[htbp]
	\centering
	\begin{subfigure}[b]{0.95\linewidth}
		\centering
		\includegraphics[width=\linewidth]{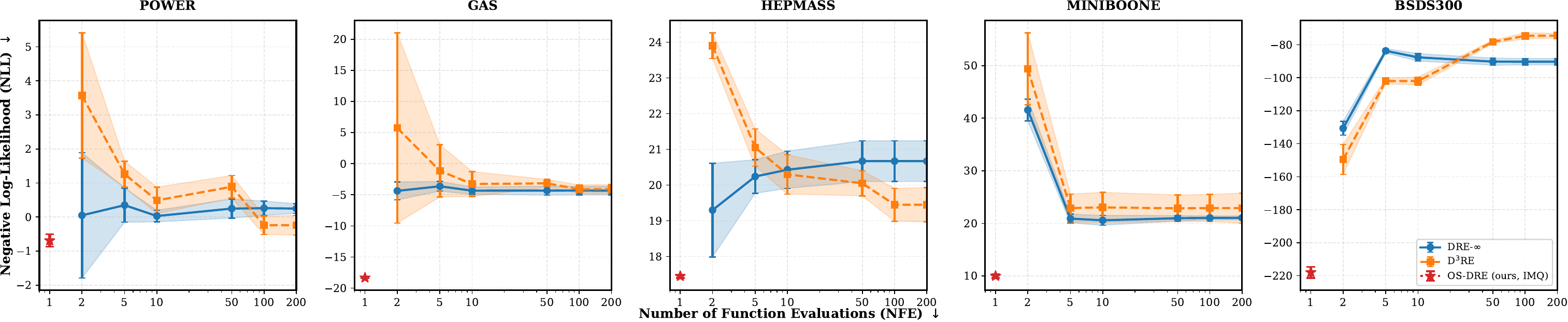}
		\caption{NLL (Qual.) vs. NFE (Effi.) for density estimation on five tabular datasets (cf. \cref{tab:density-estimation-tabular}).}
		\label{fig:accuracy-vs-efficiency-tabular}
	\end{subfigure}
	
	\begin{subfigure}[b]{0.95\linewidth}
		\centering
		\includegraphics[width=\linewidth]{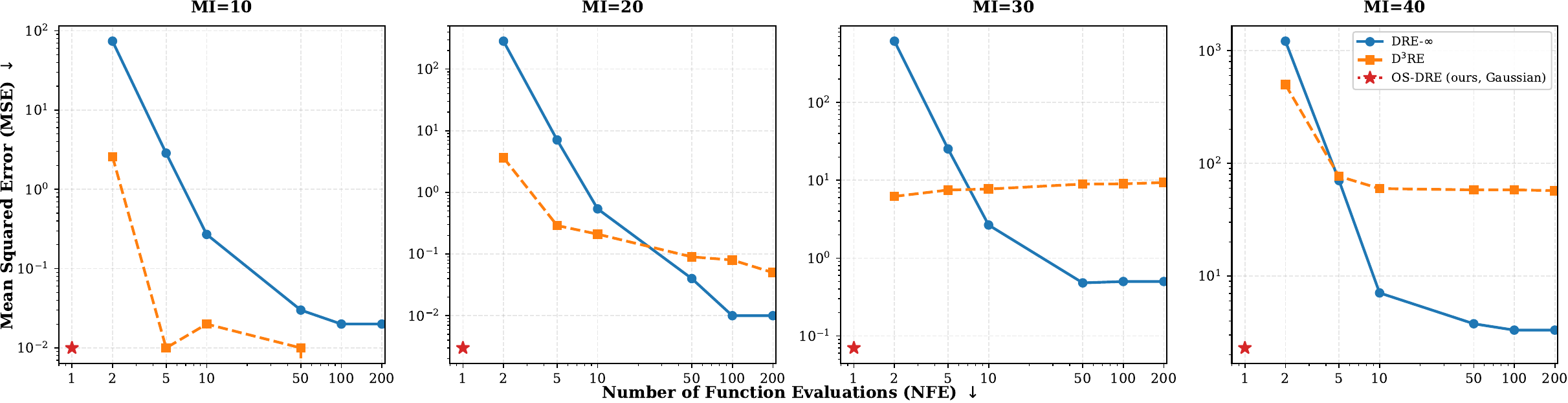}
		\caption{MSE (Qual.) vs. NFE (Effi.) for MI estimation under high-discrepancy settings (cf. \cref{tab:MI-high-discrepancy}).}
		\label{fig:accuracy-vs-efficiency-high-discrepancy}
	\end{subfigure}
        \caption{Trade-off between estimation quality and computational efficiency. }
	\label{fig:accuracy-vs-efficiency}
\end{figure}

\section{Conclusion and Future Work}
We have presented OS-DRE, a partly analytic,  solver-free framework for score-based density ratio estimation that offers a favorable balance between estimation quality and computational efficiency. 
By decomposing the time score into spatial coefficients and an analytic temporal frame based on the isomomorphism characteristic and frame basis approximation theory of Hilbert space, the proposed method converts the path integral into a closed-form weighted sum. 
This formulation obviates the need for numerical solvers and enables density ratio estimation with one single function evaluation. 
The theoretical analysis studies conditions related to completeness, stability, and analytic tractability of the temporal frame. 
By examining specific kernels in both finitely and infinitely smooth regimes, we further obtain approximation error bounds with algebraic and exponential decay. 
These results clarify the approximation behavior of the finite-dimensional subspaces and provide guidance for selecting temporal kernels in practice. 
Empirically, OS-DRE shows competitive performance across density estimation, continual KL divergence estimation, mutual information estimation, and challenging near out-of-distribution (OOD) detection benchmarks, while offering substantial speedups relative to solver-based baselines. 
Taken together, these findings suggest that OS-DRE is a practical option for probabilistic inference under strict time or resource constraints.

The current framework also has limitations. Its approximation quality depends on the regularity of the target time score function $\{\partial_t \log p_t\}_{t\in[0,1]}$, as analyzed in Proposition~\ref{proposition:error-bound}. In particular, less smooth time scores may require a larger basis dimension $K$ to maintain estimation quality, which can increase model capacity. Looking ahead, although the present work focuses on density ratio estimation, the underlying mechanism may be applicable more broadly. Whenever a target quantity admits a path-integral representation of a learned integrand, an analytic temporal representation may replace expensive numerical solvers with a single inner product. From this perspective, OS-DRE provides a concrete example of solver-free analytic temporal integration and suggests a more general design principle for efficient path-integral estimators beyond DRE.


\acks{This work was supported in part by grants from National Natural Science Foundation of China (52539005), the China Scholarship Council (202306150167), the fundamental research program of Guangdong, China (2023A1515011281), Guangdong Basic and Applied Basic Research Foundation (24202107190000687), Foshan Science and Technology Research Project (2220001018608).}


\appendix
\section*{Appendix}


\section{Proofs}

\subsection{Proof of Lemma~\ref{lemma:space-S-subspace-L2}}
\label{proof:space-S-subspace-L2}


\begin{proof}
	We want to show that for $\forall s^{(\rvt)}\in\mathcal{S} $. It satisfies:
	\begin{equation}
		\|s^{(\rvt)}\|_{L^2(\mathcal{X} \times [0,1])}^2 = \int_{\mathcal{X}} \int_0^1 \left| s^{(\rvt)}(\vx,t) \right|^2  \mathrm{d}t  \mathrm{d}\vx < \infty.
	\end{equation}
	
	Based on \cref{assumption:logp-positive} and \cref{assumption:logp-partial-derivative-bounded}, we have: (1) $ p_t(\vx) \geq C > 0 $, so $ \frac{1}{p_t(\vx)} \leq \frac{1}{C} $; (2) $ \left| \partial_t p_t(\vx) \right| \leq D $. Then, using the chain rule, we write $ s^{(\rvt)}(\vx, t) $ as:
	\begin{equation}
		\left| s^{(\rvt)}(\vx,t) \right|^2=\left| \partial_t \log p_t(\vx) \right|^2 = \left|\frac{\partial_t p_t(\vx)}{p_t(\vx)} \right|^2 \leq \left(\frac{D}{C}\right)^2.
	\end{equation}
	
	We now integrate over $ \mathcal{X} \times [0,1] $:
	\begin{equation}
		\begin{aligned}
			\|s^{(\rvt)}\|_{L^2(\mathcal{X} \times [0,1])}^2 &= \int_{\mathcal{X}} \int_0^1 \left| s^{(\rvt)}(\vx,t) \right|^2  \mathrm{d}t  \mathrm{d}\vx \\
			&\leq \int_{\mathcal{X}} \int_0^1 \left( \frac{D}{C} \right)^2  \mathrm{d}t \mathrm{d}\vx\\
			&=\left( \frac{D}{C} \right)^2\int_{\mathcal{X}} \int_0^1   \mathrm{d}t \mathrm{d}\vx =\left( \frac{D}{C} \right)^2\int_{\mathcal{X}} \mathrm{d}\vx.
		\end{aligned}
	\end{equation}
	
	Since $ \mathcal{X} $ is  measurable and bounded, then $\int_{\mathcal{X}}  \mathrm{d}\vx<\infty$ holds, and $\|s^{(\rvt)}\|_{L^2(\mathcal{X} \times [0,1])}^2<\infty$.
	Therefore, the space $ \mathcal{S} $ of functions $ s^{(\rvt)}$ is a subset of $ L^2(\mathcal{X} \times [0,1]) $. This complete the proof.
	
\end{proof}

\subsection{Proof of Lemma~\ref{lemma:log-density-ratio-decomposition}}
\label{proof:log-density-ratio-decomposition}


\begin{proof}
The target log-density ratio is defined as the temporal integral of the time score function: $\log r(\vx) = \int_{0}^{1} s^{(\rvt)}(\vx, t) \mathrm{d}t$.

For a fixed $\vx$, we can express this integral as an inner product in the Hilbert space $\mathcal{H}_t = L^2([0,1])$ between the function $s^{(\rvt)}(\vx, \cdot)$ and the constant function $\vone(t) \equiv 1$.
\begin{equation}
    \log r(\vx) = \langle s^{(\rvt)}(\vx, \cdot), \vone \rangle_{\mathcal{H}_t}.
\end{equation}
    
	Since $\{g_k\}_{k=1}^{+\infty}$ is a complete orthonormal basis for $\mathcal{H}_t$, the time score has the series expansion $s^{(\rvt)}(\vx, t) = \sum_{k=1}^{+\infty} h_k(\vx) g_k(t)$, which converges in the $L^2$-norm. Due to the continuity of the inner product in a Hilbert space, we can interchange the inner product and the infinite summation:
	\begin{equation}
		\begin{aligned}
			\log r(\vx) &= \left\langle \sum_{k=1}^{+\infty} h_k(\vx) g_k(t), \vone(t) \right\rangle_{\mathcal{H}_t} \\
			&= \sum_{k=1}^{+\infty} \langle h_k(\vx) g_k(t), \vone(t) \rangle_{\mathcal{H}_t} \\
			&= \sum_{k=1}^{+\infty} h_k(\vx) \langle g_k(t), \vone(t) \rangle_{\mathcal{H}_t} \\
			&= \sum_{k=1}^{+\infty} h_k(\vx) \int_{0}^{1} g_k(t) \mathrm{d}t.
		\end{aligned}
	\end{equation}
	This completes the proof.
\end{proof}


\subsection{Proof of Lemma~\ref{lemma:tensor-product-of-frames}}
\label{proof:tensor-product-of-frames}

\begin{proof}
We begin with the \textit{algebraic tensor product} $\mathcal{H}_{\vx} \hat{\otimes} \mathcal{H}_t$, consisting of finite sums of elementary tensors $f \otimes g$ with $f \in \mathcal{H}_{\vx}$ and $g \in \mathcal{H}_t$. Although dense in  $\mathcal{H}_{\vx,t}$, this space is not complete. Its completion under the induced inner product is the \textit{Hilbert tensor product}, $\mathcal{H}_{\vx} \hat{\otimes} \mathcal{H}_t$, which is isometrically isomorphic to the space  $\mathcal{H}_{\vx,t}$ \citep{kadison1986fundamentals}. 
This isomorphism allows us to construct a frame for $\mathcal{H}_{\vx,t}$ from frames of $\mathcal{H}_{\vx} \hat{\otimes} \mathcal{H}_t$.

Let $F$ be an arbitrary element in $\mathcal{H}_{\vx} \hat{\otimes} \mathcal{H}_t \cong \mathcal{H}_{\vx,t}$. Let $h_k(\vx) \triangleq \langle F(\vx, \cdot), g_k \rangle_{\mathcal{H}_t}$. The sum of the squared frame coefficients for $F$ can be bounded as follows:
	\begin{equation}
		\begin{aligned}
			\sum_{l=1}^{+\infty} \sum_{k=1}^{+\infty} |\langle F, f_l \otimes g_k \rangle|^2 &= \sum_{k=1}^{+\infty} \left( \sum_{l=1}^{+\infty} |\langle h_k, f_l \rangle_{\mathcal{H}_{\vx}}|^2 \right) \quad (\boldsymbol{\star}) \\
			&\le \sum_{k=1}^{+\infty} B_f \|h_k\|_{\mathcal{H}_{\vx}}^2 = B_f \sum_{k=1}^{+\infty} \|h_k\|_{\mathcal{H}_{\vx}}^2 \quad (\boldsymbol{\star\star}) \\
			&= B_f \int_{\mathcal{X}} \left( \sum_{k=1}^{+\infty} |\langle F(\vx, \cdot), g_k \rangle_{\mathcal{H}_t}|^2 \right) \, \mathrm{d}\vx \quad (\boldsymbol{\star\star\star}) \\
			&\le B_f \int_{\mathcal{X}} B_g \|F(\vx, \cdot)\|_{\mathcal{H}_t}^2 \, \mathrm{d}\vx \quad (\boldsymbol{\star\star\star\star}) \\
			&= B_f B_g \|F\|_{\mathcal{H}_{\vx,t}}^2.
		\end{aligned}
	\end{equation}
    
An analogous derivation provides the lower bound, $A_f A_g \|F\|_{\mathcal{H}_{\vx,t}}^2 \le \sum_{l,k} |\langle F, f_l \otimes g_k \rangle|^2$. The key steps are:
	$(\boldsymbol{\star})$ Rewriting the sum by substituting the definition of $h_k$.
	$(\boldsymbol{\star\star})$ Applying the upper frame bound for the spatial frame $\{f_l\}$ for each fixed $k$.
	$(\boldsymbol{\star\star\star})$ Using Fubini's theorem to swap the summation and integration.
	$(\boldsymbol{\star\star\star\star})$ Applying the upper frame bound for the temporal frame $\{g_k\}$ for each fixed $\vx$.
	This completes the proof.
\end{proof}

\subsection{Proof of Theorem~\ref{theorem:frame-based-representation}}
\label{proof:log-density-ratio-fram-based}


\begin{proof}
    The proof proceeds in three steps: establishing the existence of the expansion for $s^{(\rvt)}$, deriving the corresponding expansion for $\log r(\vx)$, and simplifying the expressions.
	
    By Lemma~\ref{lemma:tensor-product-of-frames}, since $\{f_l\}_{l=1}^{\infty}$ and $\{g_k\}_{k=1}^{\infty}$ are frames for $\mathcal{H}_{\vx}$ and $\mathcal{H}_t$ respectively, the set of elementary tensors $\{f_l \otimes g_k\}_{l,k=1}^{\infty}$ forms a frame for the spatiotemporal space $\mathcal{H}_{\vx,t}$. A fundamental property of a frame is that any element in the Hilbert space can be represented as a series expansion of the frame elements. Therefore, for any $s^{(\rvt)} \in \mathcal{S} \subseteq \mathcal{H}_{\vx,t}$, there exist coefficients $\{c_{l,k}\}$ such that:
	\begin{equation}
		s^{(\rvt)}(\vx, t) = \sum_{l=1}^{+\infty} \sum_{k=1}^{+\infty} c_{l, k} (f_l \otimes g_k)(\vx, t) = \sum_{l=1}^{+\infty} \sum_{k=1}^{+\infty} c_{l, k} f_l(\vx) g_k(t),
	\end{equation}
	where the series converges in the norm of $\mathcal{H}_{\vx,t}$.
	
	The log-density ratio is obtained by integrating the time score. For a fixed $\vx$, the integration operator $I: h(t) \mapsto \int_0^1 h(t) \mathrm{d}t$ is a continuous linear functional on $\mathcal{H}_t$. The continuity allows us to interchange the functional with the infinite summations:
	\begin{equation}
		\begin{aligned}
			\log r(\vx) = \int_0^1 s^{(\rvt)}(\vx, t) \mathrm{d}t 
			&= \int_0^1 \left( \sum_{l=1}^{+\infty} \sum_{k=1}^{+\infty} c_{l, k} f_l(\vx) g_k(t) \right) \mathrm{d}t \\
			&= \sum_{l=1}^{+\infty} \sum_{k=1}^{+\infty} c_{l, k} f_l(\vx) \int_0^1 g_k(t) \mathrm{d}t \\
			&= \sum_{l=1}^{+\infty} \sum_{k=1}^{+\infty} c_{l, k} f_l(\vx) \bar{g}_k.
		\end{aligned}
	\end{equation}
	
	By defining the spatial coefficient functions $h_k(\vx) \triangleq \sum_{l=1}^{+\infty} c_{l,k} f_l(\vx)$, we can group the terms in the double summations. This simplification yields the final expressions for the time score and the log-density ratio as presented in the theorem statement:
	\begin{align}
		s^{(\rvt)}(\vx, t) &= \sum_{k=1}^{+\infty} \left( \sum_{l=1}^{+\infty} c_{l, k} f_l(\vx) \right) g_k(t) = \sum_{k=1}^{+\infty} h_k(\vx) g_k(t), \\
		\log r(\vx) &= \sum_{k=1}^{+\infty} \left( \sum_{l=1}^{+\infty} c_{l, k} f_l(\vx) \right) \bar{g}_k = \sum_{k=1}^{+\infty} h_k(\vx) \bar{g}_k.
	\end{align}
	This completes the proof.
\end{proof}

\subsection{Proof of Corollary~\ref{corollary:weak-derivative-of-time-score}}
\label{proof:weak-derivative-of-time-score}


\begin{proof}
	Let $v(\vx, t) \triangleq \sum_{k=1}^{+\infty} h_k(\vx) g_k^{\prime}(t)$. By assumption, this series converges to a function $v \in \mathcal{H}_{\vx,t}$. We must show that $v$ is the weak derivative of $s^{(\rvt)}(\vx, t)$ with respect to time. 
	
	By definition, this requires showing that for any smooth test function $\psi \in C_c^{+\infty}((0,1))$, the following equality holds for almost every $\vx \in \mathcal{X}$:
	\begin{equation}
		\int_0^1 s^{(\rvt)}(\vx, t) \psi^{\prime}(t) \mathrm{d}t = - \int_0^1 v(\vx, t) \psi(t) \mathrm{d}t.
	\end{equation}
    
	Let's evaluate the left-hand side. For a fixed $\vx$, we have:
	\begin{equation}
		\begin{aligned}
			\int_0^1 s^{(\rvt)}(\vx, t) \psi^{\prime}(t) \mathrm{d}t &= \int_0^1 \left(\sum_{k=1}^{+\infty} h_k(\vx) g_k(t)\right) \psi^{\prime}(t) \mathrm{d}t \\
			&= \sum_{k=1}^{+\infty} h_k(\vx) \int_0^1 g_k(t) \psi^{\prime}(t) \mathrm{d}t \quad (\boldsymbol{\star}) \\
			&= \sum_{k=1}^{+\infty} h_k(\vx) \left( - \int_0^1 g_k^{\prime}(t) \psi(t) \mathrm{d}t \right) \quad (\boldsymbol{\star\star}) \\
			&= - \int_0^1 \left(\sum_{k=1}^{+\infty} h_k(\vx) g_k^{\prime}(t)\right) \psi(t) \mathrm{d}t \quad (\boldsymbol{\star\star\star}) \\
			&= - \int_0^1 v(\vx, t) \psi(t) \mathrm{d}t.
		\end{aligned}
	\end{equation}
	The key steps are justified as follows:
	$(\boldsymbol{\star})$ The interchange of summation and integration is permitted because the series for $s^{(\rvt)}$ converges in $L^2$, and the operator $h \mapsto \int h \psi^{\prime} \mathrm{d}t$ is a continuous linear functional on $L^2$.
	$(\boldsymbol{\star\star})$ Since each $g_k \in \mathcal{W}^{1,2}([0,1])$, it has a weak derivative $g_k^{\prime}$. By the definition of the weak derivative and the fact that $\psi$ has compact support in $(0,1)$ (meaning boundary terms vanish), we can apply integration by parts.
	$(\boldsymbol{\star\star\star})$ The interchange of integration and summation is again justified by the assumed $L^2$ convergence of the series defining $v(\vx, t)$.
	
	This confirms that $v(\vx, t)$ is the weak derivative of $s^{(\rvt)}(\vx, t)$, completing the proof.
\end{proof}

\subsection{Proof of Lemma~\ref{lemma:rbf-density}}
\label{proof:rbf-density-revised}

\begin{proof}

We prove denseness by \emph{contradiction}, adapting standard approximate identity arguments from approximation theory \citep{ismailov2021ridge} to our temporal frame setting.

Assume that $\mathrm{span}\{g_k\}_{k=1}^{+\infty}$ is not dense in $\mathcal{H}_t$. Then its closure $\overline{\mathrm{span}\{g_k\}_{k=1}^{+\infty}}$ is a proper closed subspace of the Hilbert space $\mathcal{H}_t$. Therefore, $\Bigl(\overline{\mathrm{span}\{g_k\}_{k=1}^{+\infty}}\Bigr)^\perp \neq \{0\}.$
Hence there exists a non-zero $u \in \mathcal{H}_t$ ($\|u\|_{\mathcal{H}_t} > 0$) such that
\begin{equation}
\langle u, g_k \rangle_{\mathcal{H}_t}
=
\int_0^1 u(t) g_k(t)\,\mathrm{d}t
=
0,
\quad \forall k \ge 1.
\label{eq:orthogonality_condition}
\end{equation}
We will show that this forces $u$ to be the zero function, establishing the contradiction.

Since $u\in L^2([0,1])\subset L^1([0,1])$, the Lebesgue differentiation theorem (see, e.g., \citet{folland1999real}) implies that almost every point in $(0,1)$ is a Lebesgue point of $u$. 
Let $t_0 \in (0,1)$ be such a Lebesgue point. By condition (ii), there exists a subsequence $\{k_n\}_{n=1}^{+\infty}$ with $c_{k_n} \to t_0$ and $\sigma_{k_n} \to 0$.
Denote $g_n(t) = g_{k_n}(t)$, $c_n = c_{k_n}$, and $\sigma_n = \sigma_{k_n}$, and define the normalized kernels:
\begin{equation}
F_n(t) = \frac{g_n(t)}{D_n}, \quad \text{where } D_n = \int_{0}^{1} g_n(s) \mathrm{d}s.
\label{eq:normalized-kernel}
\end{equation}
These satisfy $F_n(t) \ge 0$ and $\int_0^1 F_n(t) \mathrm{d}t = 1$ for $n \in \mathbb{Z}_{+}$.

Using the change of variables $v = (t - c_n)/\sigma_n$, we analyze $D_n$:
\begin{align}
D_n = \int_0^1 \phi\left(\frac{|t - c_n|}{\sigma_n}\right) \mathrm{d}t  = \sigma_n \int_{-c_n/\sigma_n}^{(1-c_n)/\sigma_n} \phi(|v|) dv.
\label{eq:Dn-integral}
\end{align}

Since $c_n \to t_0 \in (0,1)$ and $\sigma_n \to 0$, we have: $-\frac{c_n}{\sigma_n} \to -\infty,$ and $\frac{1-c_n}{\sigma_n} \to +\infty.$
Therefore,
\begin{equation}
\frac{D_n}{\sigma_n}
=
\int_{-c_n/\sigma_n}^{(1-c_n)/\sigma_n} \phi(|v|)\,\mathrm{d}v
\;\xrightarrow[n\to\infty]{}\;
\int_{\mathbb{R}} \phi(|v|)\,\mathrm{d}v
\triangleq C_\phi.
\label{eq:Dn-asymptotic}
\end{equation}
By condition (i), $C_\phi \in (0,\infty)$, and hence $D_n > 0$ for sufficiently large $n$.

We now show that the mass of $F_n$ concentrates at the interior point $t_0$.
Let $0 < \delta < \min\{t_0, 1 - t_0\}$. Since $c_n\to t_0$, for all sufficiently large $n$ we have $|c_n-t_0|<\delta/2$; then in this case, whenever $|t-t_0|\ge \delta$,
\begin{align}
|t - c_n| &\ge |t - t_0| - |c_n - t_0| \ge \delta/2.
\end{align}
Thus, for all sufficiently large $n$, the tail mass can be bounded as follows,
\begin{align}
    \int_{\{t\in[0,1]:|t-t_0|\ge \delta\}} F_n(t)\mathrm{d}t
    &\le
    \frac{1}{D_n}
    \int_{\{t\in[0,1]:|t-c_n|\ge \delta/2\}}
    \phi\left(\frac{|t-c_n|}{\sigma_n}\right)\mathrm{d}t \nonumber\\
    &=
    \frac{\sigma_n}{D_n}
    \int_{\{v:c_n+\sigma_n v\in[0,1],|v|\ge \delta/(2\sigma_n)\}}
    \phi(|v|)dv \nonumber\\
    &\le \frac{\sigma_n}{D_n} \int_{|v|\ge \delta/(2\sigma_n)} \phi(|v|)\,\mathrm{d}v.\\
\end{align}
Since $\frac{D_n}{\sigma_n} \to C_\phi \in (0,\infty)$ (\cref{eq:Dn-asymptotic}), the sequence $\frac{\sigma_n}{D_n}$ is bounded for all sufficiently large $n$. Therefore,
\begin{equation}
\int_{\{t\in[0,1]:|t-t_0|\ge \delta\}} F_n(t)\,\mathrm{d}t
\xrightarrow[n\to\infty]{} 0.
\label{eq:tail_estimate_interval}
\end{equation}

Thus, $\{F_n\}$ is a sequence of nonnegative normalized kernels on $[0,1]$ whose mass concentrates at $t_0$. Hence it forms an approximate identity on $[0,1]$ centered at $t_0$. Since $t_0$ is a Lebesgue point of $u \in L^1([0,1])$, the standard approximate identity argument yields
\begin{equation}
\int_0^1 u(t)F_n(t)\,\mathrm{d}t
\xrightarrow[n\to\infty]{}
u(t_0).
\label{eq:approx-identity-limit}
\end{equation}

\noindent \textbf{Contradiction:} 

By the orthogonality condition \cref{eq:orthogonality_condition} and the definition of $F_n$:
\begin{equation}
\int_0^1 u(t) F_n(t) \mathrm{d}t = \frac{1}{D_n} \int_0^1 u(t) g_n(t) \mathrm{d}t = 0, \quad \forall n.
\end{equation}

Combining with \cref{eq:approx-identity-limit} gives $u(t_0) = 0$. 

Since $t_0 \in (0,1)$ was an arbitrary Lebesgue point of $u$, we obtain $u(t_0)=0$ at every Lebesgue point of $u$. Since almost every point in $(0,1)$ is a Lebesgue point of $u$, it follows that $u(t)=0$ a.e. on $[0,1]$. Hence $u=0$ in $L^2([0,1])=\mathcal{H}_t$, contradicting the assumption that $\|u\|_{\mathcal{H}_t} > 0$.
Therefore, $\mathrm{span}\{g_k\}_{k=1}^{\infty}$ is dense in $\mathcal{H}_t$. This finishes the proof.

\end{proof}

\subsection{Proof of Proposition~\ref{proposition:rbf-approximation-scheme}}
\label{proof:rbf-approximation-scheme}


\begin{proof}
	The proof consists of verifying that these two conditions ensure the desired properties of the approximation scheme.
	
	Convergence: Condition (i), established by our denseness Lemma (Lemma~\ref{lemma:rbf-density}), guarantees the scheme's convergence. It ensures that for any function $h \in \mathcal{H}_t$ and any error tolerance $\epsilon > 0$, there exists a sufficiently large dimension $K$ and a function $v \in \mathcal{V}_K$ such that $\|h - v\|_{\mathcal{H}_t} < \epsilon$. This means the approximation error of the best-fit projection, $\inf_{v \in \mathcal{V}_K} \|h-v\|$, can be made arbitrarily small.
	
	Well-posed Approximation: Condition (ii) guarantees that for any fixed, finite $K$, the approximation problem within the subspace $\mathcal{V}_K$ is well-posed. Since $\{g_k\}_{k=1}^K$ is a linearly independent set, it forms a basis for the subspace $\mathcal{V}_K = \mathrm{span}\{g_k\}_{k=1}^K$. In a finite-dimensional Hilbert space, any basis is a Riesz basis (a specific type of frame). This implies the existence of frame bounds $A_K$ and $B_K$ that depend on $K$, satisfying $0 < A_K \le B_K < \infty$. The existence of a strictly positive lower bound $A_K$ ensures that the projection of any function onto $\mathcal{V}_K$ is a stable and well-defined operation.
\end{proof}

\subsection{Proof of Lemma~\ref{lemma:qualitative-approximation-consistency}}
\label{proof:qualitative-approximation-consistency}

\begin{proof}
We establish consistency as $K$ increases by showing that the best approximations converge in the Hilbert space norm, which in turn implies convergence of the log-ratio.

Fix $\vx$ and consider the function $s^{(\rvt)}(\vx,\cdot) \in \mathcal{H}_t$.  
Since $\overline{\bigcup_{K= 1}^{+\infty}\mathcal{V}_K}=\mathcal{H}_t$, for every $\varepsilon>0$ there exist $K_\varepsilon\in\mathbb{N}$ and $v_\varepsilon\in \mathcal{V}_{K_\varepsilon}$ such that
\begin{equation}
\|s^{(\rvt)}(\vx,\cdot) - v_\varepsilon\|_{\mathcal{H}_t} < \varepsilon.
\label{eq:approximation-exists}
\end{equation}

Now, since the approximation spaces are nested ($\mathcal{V}_1 \subset \mathcal{V}_2 \subset \cdots \subset \mathcal{H}_t$), it follows that for all $K \ge K_\varepsilon$, we have $ v_\varepsilon \in \mathcal{V}_K$. 

Let $s^{(\rvt)}_K(\vx,\cdot)$ denote the best approximation of $s^{(\rvt)}(\vx,\cdot)$ in $\mathcal{V}_K$. By the definition of the best approximation, we have the following inequality for all $K \ge K_\varepsilon$:
\begin{align}
\|s^{(\rvt)}(\vx,\cdot) - s^{(\rvt)}_K(\vx,\cdot)\|_{\mathcal{H}_t} 
&= \inf_{v \in \mathcal{V}_K} \|s^{(\rvt)}(\vx,\cdot) - v\|_{\mathcal{H}_t} \nonumber \\
&\le \|s^{(\rvt)}(\vx,\cdot) -  v_\varepsilon\|_{\mathcal{H}_t} 
< \varepsilon.
\label{eq:best-approx-bound}
\end{align}

As $\varepsilon$ was arbitrary, \cref{eq:qualitative-approximation-consistency} follows.

Moreover, the log-ratio difference can be bounded using the Cauchy-Schwarz inequality. Specifically:
\begin{align}
\left|\log r(\vx) - \log r_K(\vx)\right| 
&= \left| \int_0^1 \left(s^{(\rvt)}(\vx,t) - s^{(\rvt)}_K(\vx,t)\right) \mathrm{d}t \right| \nonumber \\
&\le \| \vone \|_{\mathcal{H}_t} \cdot \| s^{(\rvt)}(\vx,\cdot) - s^{(\rvt)}_K(\vx,\cdot) \|_{\mathcal{H}_t} \nonumber \\
&= 1 \cdot \| s^{(\rvt)}(\vx,\cdot) - s^{(\rvt)}_K(\vx,\cdot) \|_{\mathcal{H}_t}.
\label{eq:logratio-bound}
\end{align}

Combining \cref{eq:logratio-bound} with \cref{eq:qualitative-approximation-consistency} yields \cref{eq:qualitative-logratio-consistency}.

\end{proof}

\subsection{Proof of Proposition~\ref{proposition:error-bound}}
\label{proof:error-bound}

\begin{proof}
Fix $\vx$, and for brevity write $f \triangleq s^{(\rvt)}(\vx,\cdot)$ and $f_K \triangleq s^{(\rvt)}_K(\vx,\cdot)$.
Let $I_{\mathcal{C}_K}f\in\mathcal{V}_K$ denote the RBF interpolant of $f$
at the center set $\mathcal{C}_K$.
By optimality of the best approximation,
\begin{equation}
\label{eq:sobolev-best-approx}
\begin{aligned}
\|f-f_K\|_{\mathcal{H}_t}
=
\inf_{h\in\mathcal{V}_K}\|f-h\|_{\mathcal{H}_t}
\le
\|f-I_{\mathcal{C}_K}f\|_{\mathcal{H}_t}.
\end{aligned}
\end{equation}

We now invoke \citet[Theorem~4.2]{narcowich2006sobolev}.
Since $[0,1]\subset\mathbb{R}$ is a compact Lipschitz domain, and since
$\mathcal{N}_{\phi}$ is norm-equivalent to $\mathcal{W}^{\tau,2}(\mathbb{R})$,
that theorem applies with $\Omega=[0,1]$, dimension $d=1$, and error norm order $\mu=0$.
Therefore,
\begin{equation}
\label{eq:sobolev-interpolation-estimate}
\|f-I_{\mathcal{C}_K}f\|_{L^2([0,1])}
\le
A
h_{\mathcal{C}_K,[0,1]}^{\beta}
\rho_{\mathcal{C}_K,[0,1]}^{\tau}
\|f\|_{\mathcal{W}^{\beta,2}([0,1])},
\end{equation}
where $h_{\mathcal{C}_K,[0,1]}$ is the fill distance and
$\rho_{\mathcal{C}_K,[0,1]}$ is the mesh ratio.

Because $\mathcal{C}_K$ is quasi-uniform on $[0,1]$, there exists a constant
$\rho_\ast>0$, independent of $K$, such that $\rho_{\mathcal{C}_K,[0,1]}\le \rho_\ast$.
Moreover, quasi-uniformity implies that the fill distance is of order $K^{-1}$:
there exists a constant $c_\ast>0$, independent of $K$, such that $h_{\mathcal{C}_K,[0,1]}\le c_\ast K^{-1}$.

Substituting these estimates into \cref{eq:sobolev-interpolation-estimate}, we obtain
\begin{equation}
\begin{aligned}
\|f-I_{\mathcal{C}_K}f\|_{L^2([0,1])}
&\le
A(c_\ast K^{-1})^{\beta}\rho_\ast^{\tau}
\|f\|_{\mathcal{W}^{\beta,2}([0,1])} \\
&=
Ac_\ast^\beta\rho_\ast^\tau
K^{-\beta}
\|f\|_{\mathcal{W}^{\beta,2}([0,1])}.
\end{aligned}
\end{equation}

Combining this with \cref{eq:sobolev-best-approx} and using $\mathcal{H}_t=L^2([0,1])$ yields
\begin{equation}
\begin{aligned}
\|f-f_K\|_{\mathcal{H}_t}
&\le
Ac_\ast^\beta\rho_\ast^\tau
K^{-\beta}
\|f\|_{\mathcal{W}^{\beta,2}([0,1])}.
\end{aligned}
\end{equation}

Setting $C \triangleq Ac_\ast^\beta\rho_\ast^\tau$ completes the proof.

\end{proof}

\subsection{Proof of Proposition~\ref{proposition:error-bound-infinitely-smooth}}
\label{proof:error-bound-infinitely-smooth}

\begin{proof}
Fix $\vx$, and for brevity write $f \triangleq s^{(\rvt)}(\vx,\cdot)$ and $f_K \triangleq s^{(\rvt)}_K(\vx,\cdot)$.
Let $I_{\mathcal{C}_K}f\in\mathcal{V}_K$ denote the RBF interpolant of $f$
on the center set $\mathcal{C}_K$.
By optimality of the best approximation,
\begin{equation}
\label{eq:inf-smooth-best-approx}
\|f-f_K\|_{\mathcal{H}_t}
= \inf_{v\in\mathcal{V}_K}\|f-v\|_{\mathcal{H}_t}
\le \|f-I_{\mathcal{C}_K}f\|_{\mathcal{H}_t}.
\end{equation}
We now establish the three convergence rates for different kernel classes. 
The proof strategy is similar for all three cases: we combine the interpolation error bound from the literature with the quasi-uniformity assumption on the centers to derive the desired convergence rates.

\textbf{(i) Gaussian kernel case.}
For the Gaussian kernel on the compact interval $[0,1]$, we apply \citet[Corollary~5.1]{rieger2010sampling} with parameters $q=2$, $\Omega=[0,1]$, and $\lambda=0$. 
This yields constants $C,c,h_0>0$ such that for all $h_K \le h_0$:
\begin{equation}
\|f-I_{\mathcal{C}_K}f\|_{L^2([0,1])}
\le C
\exp\left(-c\frac{|\log h_K|}{h_K}\right)
\|f\|_{\mathcal{N}_{\phi}([0,1])}.
\label{eq:gaussian-error-bound}
\end{equation}

Combining \cref{eq:inf-smooth-best-approx} and \cref{eq:gaussian-error-bound} establishes the $h_K$-bound:
\begin{equation}
\|f-f_K\|_{\mathcal{H}_t}
\le
C
\exp\left(-c\frac{|\log h_K|}{h_K}
\right)\|f\|_{\mathcal{N}_{\phi}([0,1])}.
\end{equation}

To convert this into a $K$-bound, we exploit the quasi-uniformity assumption $h_K \le c_\ast K^{-1}$. 
For $K \ge \max\{c_\ast e^2, K_0\}$, the key inequality is:
\begin{align}
\frac{|\log h_K|}{h_K} 
&\ge \frac{|\log(c_\ast K^{-1})|}{c_\ast K^{-1}} \\
& = \frac{K}{c_\ast} \left|\log\left(\frac{c_\ast}{K}\right)\right| 
 = \frac{K}{c_\ast} \log\left(\frac{K}{c_\ast}\right), \\
&\ge \frac{K}{2c_\ast} \log K, \label{eq:gaussian-exponent-simplified}
\end{align}
where $K_0$ is chosen such that $h_K \le h_0$ for all $K \ge K_0$, and the last inequality uses $\log(K/c_\ast) \ge \frac{1}{2} \log K$ for $K \ge c_\ast e^2$.

Substituting \cref{eq:gaussian-exponent-simplified} into \cref{eq:gaussian-error-bound} yields:
\begin{equation}
\|f-f_K\|_{\mathcal{H}_t} 
\le C \exp\left(-\frac{c}{2c_\ast} K \log K\right)
\|f\|_{\mathcal{N}_{\phi}([0,1])}.
\label{eq:gaussian-final-bound}
\end{equation}

Thus \cref{eq:gaussian-k-bound} holds with $\widetilde C=C$ and $\widetilde c=c/(2c_\ast)$.

\textbf{(ii) Standard inverse multiquadric kernel case.}
By assumption \cref{eq:imq-classical-interpolation-assumption} and the best approximation bound \cref{eq:inf-smooth-best-approx}, we have:
\begin{align}
\|f-f_K\|_{\mathcal{H}_t}
&\le C \lambda^{1/h_K} \|f\|_{\mathcal{N}_{\phi}([0,1])} \nonumber \\
&= C\exp\left(-\frac{|\log \lambda^{-1}|}{h_K}\right) \|f\|_{\mathcal{N}_{\phi}([0,1])} \nonumber \\
&\le C\exp\left(-\frac{|\log \lambda^{-1}|}{c_\ast} K\right) \|f\|_{\mathcal{N}_{\phi}([0,1])}, \qquad \text{for } K \ge K_0. \tag{$\star$}
\end{align}
In ($\star$), we used the quasi-uniformity assumption $h_K \le c_\ast K^{-1}$ (which gives $1/h_K \ge K/c_\ast$) and the fact that $\lambda \in (0,1)$; $K_0$ is chosen such that $h_K \le h_0$ for all $K \ge K_0$.

Thus \cref{eq:imq-k-bound} holds with $\widehat C=C$ and $\widehat c=|\log \lambda^{-1}|/c_\ast$.

\textbf{(iii) Generalized inverse multiquadric kernel case.}
Again applying \citet[Corollary~5.1]{rieger2010sampling} to $[0,1]$ with $q=2$ and $\lambda=0$, we obtain constants $C,c,h_0>0$ such that for $h_K \le h_0$:
\begin{equation}
\|f-I_{\mathcal{C}_K}f\|_{L^2([0,1])}
\le C \exp\left(-\frac{c}{h_K}\right)
\|f\|_{\mathcal{N}_{\phi}([0,1])}.
\label{eq:gimq-error-bound}
\end{equation}

This together with \cref{eq:inf-smooth-best-approx} gives:
\begin{equation}
\|f-f_K\|_{\mathcal{H}_t}
\le
C
\exp\left(-\frac{c}{h_K}\right)
\|f\|_{\mathcal{N}_{\phi}([0,1])}.
\end{equation}

Using the quasi-uniformity bound $h_K \le c_\ast K^{-1}$, we have $\exp\big(-\frac{c}{h_K}\big) 
\le  \exp\big(-\frac{c}{c_\ast} K\big)$.
For $K \ge K_0$ (where $K_0$ ensures $h_K \le h_0$), we obtain:
\begin{equation}
\|f-f_K\|_{\mathcal{H}_t}
\le C \exp\left(-\frac{c}{c_\ast} K\right)
\|f\|_{\mathcal{N}_{\phi}([0,1])}.
\label{eq:gimq-final-bound}
\end{equation}

Thus \cref{eq:gimq-k-bound} holds with $\widetilde C=C$ and $\widetilde c=c/c_\ast$. 
The RQ kernel corresponds to $\alpha=1$ and is therefore covered by this result.

\end{proof}

\section{Analytic Formulas for RBF Kernels}
\label{appendix:rbf-formulas}

This section provides a summary of the RBF generating functions, $\phi(r)$, used and referenced in this work. All kernels listed below satisfy the conditions of our approximation framework. Their respective closed-form integrals and derivatives are detailed in the subsequent sections.

\subsection{Matérn RBFs}
\label{appendix:matern-rbfs}

The Matérn family of RBFs is widely used in machine learning, particularly in Gaussian processes, as their smoothness is controlled by a parameter $\nu$. We consider the common case where $\nu=3/2$, which corresponds to a once-differentiable function. The generating function is strictly positive definite and is given by $	\phi(r) = (1 + \sqrt{3}r)\exp(-\sqrt{3}r)$.
The basis functions, which are in the Sobolev space $W^{2,2}(\mathbb{R})$, are:
\begin{equation}\label{eq:appendix-matern}
	g_k(t) = \left(1 + \frac{\sqrt{3}|t-c_k|}{\sigma_k}\right) \exp\left(-\frac{\sqrt{3}|t-c_k|}{\sigma_k}\right).
\end{equation}

\paragraph{Closed-Form Expression for the Temporal Integral.}
The integral $\bar{g}_k = \int_0^1 g_k(t) \mathrm{d}t$ is computed by splitting the integral at the center $c_k$ due to the absolute value. The indefinite integral of the generating function is $\int \phi(r) \mathrm{d}r = -r e^{-\sqrt{3}r} - \frac{2}{\sqrt{3}}e^{-\sqrt{3}r}$. Evaluating this over the respective intervals yields the final closed form.

\paragraph{Closed-Form Expression for the Temporal Derivative.}
The derivative of the generating function is $\phi^{\prime}(r) = -3r\exp(-\sqrt{3}r)$. Applying the chain rule, we find the derivative:
\begin{align*}
	g_k^{\prime}(t) &= \phi^{\prime}\left(\frac{|t - c_k|}{\sigma_k}\right) \cdot \frac{\mathrm{sgn}(t-c_k)}{\sigma_k} \\
	&= -3 \frac{|t-c_k|}{\sigma_k} \exp\left(-\frac{\sqrt{3}|t-c_k|}{\sigma_k}\right) \cdot \frac{t-c_k}{|t-c_k|} \cdot \frac{1}{\sigma_k} \\
	&= -\frac{3(t-c_k)}{\sigma_k^2} \exp\left(-\frac{\sqrt{3}|t-c_k|}{\sigma_k}\right).
\end{align*}

\subsection{Gaussian RBFs}
\label{appendix:gaussian-rbfs}

The Gaussian RBF is defined by the generating function $\phi(r) = \exp(-r^2)$. The basis functions are therefore given by:
\begin{equation}\label{eq:appendix-gaussian}
	g_k(t) = \exp\left(-\frac{|t - c_k|^2}{\sigma_k^2}\right),
\end{equation}
where $ c_k $ and $ \sigma_k > 0 $ are the center and shape parameters of $g_k$. 

\paragraph{Closed-Form Expression for the Temporal Integral.}
The integral $\bar{g}_k = \int_0^1 g_k(t) \mathrm{d}t$ is calculated as follows. We use the substitution $u = (t - c_k)/\sigma_k$, which implies $\mathrm{d}t = \sigma_k \mathrm{d}u$.
\begin{align*}
	\bar{g}_k = \int_0^1 \exp\left(-\frac{|t - c_k|^2}{\sigma_k^2}\right) \mathrm{d}t 
	= \sigma_k \int_{-c_k/\sigma_k}^{(1 - c_k)/\sigma_k} \exp(-u^2) \mathrm{d}u.
\end{align*}

This integral can be expressed using the error function, $\mathrm{erf}(z) = \frac{2}{\sqrt{\pi}} \int_0^z \exp(-x^2) \mathrm{d}x$. Since $\int_a^b \exp(-u^2)\mathrm{d}u = \frac{\sqrt{\pi}}{2}(\mathrm{erf}(b) - \mathrm{erf}(a))$, we have:
\begin{align*}
	\bar{g}_k &= \sigma_k \frac{\sqrt{\pi}}{2} \left[ \mathrm{erf}\left(\frac{1 - c_k}{\sigma_k}\right) - \mathrm{erf}\left(-\frac{c_k}{\sigma_k}\right) \right] = \frac{\sigma_k\sqrt{\pi}}{2} \left[ \mathrm{erf}\left(\frac{1 - c_k}{\sigma_k}\right) + \mathrm{erf}\left(\frac{c_k}{\sigma_k}\right) \right],
\end{align*}
where the last step uses the property $\mathrm{erf}(-z) = -\mathrm{erf}(z)$.

\paragraph{Closed-Form Expression for the Temporal Derivative.}
The derivative $g_k^{\prime}(t)$ is found by applying the chain rule:
\begin{align*}
	g_k^{\prime}(t) &= \frac{\mathrm{d}}{\mathrm{d}t} \exp\left(-\frac{(t - c_k)^2}{\sigma_k^2}\right) \\
	&= \exp\left(-\frac{(t - c_k)^2}{\sigma_k^2}\right) \cdot \frac{\mathrm{d}}{\mathrm{d}t}\left(-\frac{(t - c_k)^2}{\sigma_k^2}\right) \\
	&= \exp\left(-\frac{(t - c_k)^2}{\sigma_k^2}\right) \cdot \left(-\frac{2(t - c_k)}{\sigma_k^2}\right).
\end{align*}


\subsection{Inverse Multiquadric RBFs}
\label{appendix:imq-rbfs}

The Inverse Multiquadric (IMQ) RBF is defined by the generating function $\phi(r) = (r^2 + 1)^{-1/2}$. The basis functions are therefore given by:
\begin{equation}\label{eq:appendix-imq}
	g_k(t) = \left(\frac{(t - c_k)^2}{\sigma_k^2} + 1\right)^{-\frac{1}{2}} = \frac{\sigma_k}{\sqrt{(t - c_k)^2 + \sigma_k^2}}.
\end{equation}

\paragraph{Closed-Form Expression for the Temporal Integral.}
The temporal integral $\bar{g}_k = \int_0^1 g_k(t) \mathrm{d}t$ is calculated using the standard integral for the inverse hyperbolic sine function. We use the substitution $u = t - c_k$, which implies $\mathrm{d}t = \mathrm{d}u$.
\begin{align*}
	\bar{g}_k = \int_0^1 \frac{\sigma_k}{\sqrt{(t - c_k)^2 + \sigma_k^2}} \mathrm{d}t 
	= \sigma_k \int_{-c_k}^{1-c_k} \frac{1}{\sqrt{u^2 + \sigma_k^2}} \mathrm{d}u.
\end{align*}

The integral of $1/\sqrt{u^2+a^2}$ is $\ln(u + \sqrt{u^2+a^2})$. Applying this, we get:
\begin{align*}
	\bar{g}_k &= \sigma_k \left[ \ln\left( u + \sqrt{u^2 + \sigma_k^2} \right) \right]_{-c_k}^{1 - c_k} \\
	&= \sigma_k \left( \ln\left( (1-c_k) + \sqrt{(1-c_k)^2 + \sigma_k^2} \right) - \ln\left( -c_k + \sqrt{c_k^2 + \sigma_k^2} \right) \right) \\
	&= \sigma_k \ln \left( \frac{(1 - c_k) + \sqrt{(1 - c_k)^2 + \sigma_k^2}}{-c_k + \sqrt{c_k^2 + \sigma_k^2}} \right).
\end{align*}

\paragraph{Closed-Form Expression for the Temporal Derivative.}
The derivative $g_k^{\prime}(t)$ is found by applying the chain rule to $g_k(t) = \sigma_k \left( (t - c_k)^2 + \sigma_k^2 \right)^{-1/2}$:
\begin{align*}
	g_k^{\prime}(t) &= \sigma_k \cdot \frac{\mathrm{d}}{\mathrm{d}t} \left( (t - c_k)^2 + \sigma_k^2 \right)^{-1/2} \\
	&= \sigma_k \cdot \left(-\frac{1}{2}\right) \left( (t - c_k)^2 + \sigma_k^2 \right)^{-3/2} \cdot \frac{\mathrm{d}}{\mathrm{d}t}\left( (t - c_k)^2 + \sigma_k^2 \right) \\
	&= \sigma_k \cdot \left(-\frac{1}{2}\right) \left( (t - c_k)^2 + \sigma_k^2 \right)^{-3/2} \cdot 2(t - c_k) \\
	&= -\frac{\sigma_k (t - c_k)}{\left( (t - c_k)^2 + \sigma_k^2 \right)^{3/2}}.
\end{align*}

\subsection{Rational Quadratic RBFs}
\label{appendix:rq-rbfs}

The Rational Quadratic (RQ) kernel can be viewed as an infinite sum of Gaussian kernels of different scales. This property makes it a robust choice, capable of modeling data at multiple scales. It is strictly positive definite, and its generating function is $\phi(r) = (1 + r^2)^{-1}$.
The basis functions are therefore given by:
\begin{equation}\label{eq:appendix-rq}
	g_k(t) = \left(1 + \frac{(t - c_k)^2}{\sigma_k^2}\right)^{-1} = \frac{\sigma_k^2}{(t - c_k)^2 + \sigma_k^2}.
\end{equation}

\paragraph{Closed-Form Expression for the Temporal Integral.}
The temporal integral $\bar{g}_k = \int_0^1 g_k(t) \mathrm{d}t$ is calculated using the standard integral for the arctangent function. We use the substitution $u = t - c_k$, which implies $\mathrm{d}t = \mathrm{d}u$.
\begin{equation}
	\bar{g}_k = \int_0^1 \frac{\sigma_k^2}{(t - c_k)^2 + \sigma_k^2} \mathrm{d}t = \sigma_k^2 \int_{-c_k}^{1-c_k} \frac{1}{u^2 + \sigma_k^2} \mathrm{d}u. \notag
\end{equation}

The integral of $1/(u^2+a^2)$ is $\frac{1}{a}\arctan(\frac{u}{a})$. Applying this, we get:
\begin{align*}
	\bar{g}_k = \sigma_k^2 \left[ \frac{1}{\sigma_k}\arctan\left(\frac{u}{\sigma_k}\right) \right]_{-c_k}^{1 - c_k} = \sigma_k \left( \arctan\left(\frac{1 - c_k}{\sigma_k}\right) + \arctan\left(\frac{c_k}{\sigma_k}\right) \right),
\end{align*}
where the last step uses the property $\arctan(-z) = -\arctan(z)$.

\paragraph{Closed-Form Expression for the Temporal Derivative.}
The derivative $g_k^{\prime}(t)$ is found by applying the chain rule to $g_k(t) = \sigma_k^2 \left( (t - c_k)^2 + \sigma_k^2 \right)^{-1}$:
\begin{align*}
	g_k^{\prime}(t) = \sigma_k^2 \cdot \frac{\mathrm{d}}{\mathrm{d}t} \left( (t - c_k)^2 + \sigma_k^2 \right)^{-1} = -\frac{2\sigma_k^2 (t - c_k)}{\left( (t - c_k)^2 + \sigma_k^2 \right)^2}.
\end{align*}

\vskip 0.2in
\bibliography{sample}

\end{document}